\documentclass[twoside,11pt]{article}
\usepackage{jmlr2e}
\usepackage{times}

\usepackage{amsmath,amssymb}
\usepackage{booktabs}
\usepackage{colortbl}
\usepackage{array}
\usepackage{enumitem}
\usepackage{mdframed}
\usepackage{multirow}
\usepackage{pgfplots}
\pgfplotsset{compat=1.18}
\usepackage{tikz}
\usetikzlibrary{arrows.meta, positioning, shapes.geometric,
                fit, backgrounds, decorations.pathreplacing}
\usepackage{xcolor}
\usepackage{subcaption}

\definecolor{colB0}{RGB}{136,135,128}       
\definecolor{colVAT}{RGB}{216,90,48}        
\definecolor{colPGD2}{RGB}{212,83,126}      
\definecolor{colPGD4}{RGB}{180,50,100}      
\definecolor{colNoPMH}{RGB}{99,140,28}      
\definecolor{colPMH}{RGB}{55,138,221}       
\definecolor{colPMHlight}{RGB}{181,212,244} 
\definecolor{colRed}{RGB}{226,75,74}        
\definecolor{colGreen}{RGB}{15,110,86}      

\pgfplotsset{
  /pgfplots/pmh legend below/.style={
    legend style={
      font=\scriptsize,
      draw=none,
      fill=none,
      at={(0.5,-0.42)},
      anchor=north,
      legend columns=-1,
      /tikz/every even column/.append style={column sep=1.1em},
      cells={anchor=west},
      inner xsep=1pt,
    },
    legend image code/.code={
      \draw[#1] (0cm,-0.08cm) rectangle (0.28cm,0.12cm);
    },
  },
  /pgfplots/pmh legend north/.style={
    legend style={
      font=\scriptsize, draw=none, fill=none,
      at={(0.5,1.05)}, anchor=south, legend columns=-1,
      /tikz/every even column/.append style={column sep=0.7em},
    },
  },
  /pgfplots/pmh legend nw/.style={
    legend style={
      font=\scriptsize, draw=none, fill=white, fill opacity=0.92,
      text opacity=1,
      at={(0.03,0.97)}, anchor=north west,
      inner sep=2pt,
    },
  },
  pmh base/.style={
    width=\linewidth,
    height=4.6cm,
    tick label style={font=\scriptsize},
    label style={font=\small},
    title style={font=\small\bfseries},
    ylabel style={font=\small},
    xlabel style={font=\small},
    grid=major,
    grid style={line width=0.3pt, draw=gray!22},
    minor grid style={draw=none},
    line width=1.0pt,
    mark size=1.8pt,
    enlarge x limits=false,
    clip=false,
    pmh legend below,
  },
  pmh base tall/.style={
    pmh base,
    height=5.4cm,
  },
  pmh bar/.style={
    pmh base,
    ybar=1.2pt,
    bar width=5.5pt,
    enlarge x limits=0.16,
    xtick align=inside,
    ymin=0,
    pmh legend below,
    legend image code/.code={
      \fill[#1] (0cm,-0.08cm) rectangle (0.28cm,0.12cm);
    },
  },
  pmh hbar/.style={
    pmh base,
    xbar=1pt,
    bar width=7pt,
    enlarge y limits=0.16,
    xmin=0,
  },
  pmh series erm/.style={color=colB0, mark=*, solid},
  pmh series vat/.style={color=colVAT, mark=square*, solid},
  pmh series pgd2/.style={color=colPGD2, mark=triangle*, dashed},
  pmh series pgd4/.style={color=colPGD4, mark=triangle*, solid},
  pmh series twoview/.style={color=colNoPMH, mark=diamond*, dashed},
  pmh series pmh/.style={color=colPMH, mark=*, solid, line width=1.35pt},
  pmh fill erm/.style={fill=colB0, draw=none},
  pmh fill vat/.style={fill=colVAT, draw=none},
  pmh fill twoview/.style={fill=colNoPMH, draw=none},
  pmh fill pmh/.style={fill=colPMH, draw=none},
  pmh fill pgd4/.style={fill=colPGD4, draw=none},
  pmh fill pgd2/.style={fill=colPGD2, draw=none},
}

\tikzset{
  pmharrow/.tip={Latex[length=4pt,width=3pt]},
  pmh dot/.style={circle, fill, inner sep=1.1pt},
  pmh panel/.style={draw=gray!40, rounded corners=2pt, minimum width=3.4cm,
                    minimum height=3.5cm},
  pmh badge/.style={font=\tiny\bfseries, rounded corners=1.5pt,
                    inner sep=2pt, align=center},
}

\graphicspath{{images/}}

\mdfdefinestyle{plainlang}{%
  leftmargin=4pt, rightmargin=4pt,
  innerleftmargin=6pt, innerrightmargin=6pt,
  innertopmargin=4pt, innerbottommargin=4pt,
  linewidth=0.5pt, linecolor=black!60,
  backgroundcolor=gray!7,
  skipabove=4pt, skipbelow=4pt}

\usepackage{lastpage}
\jmlrheading{0}{2026}{1-\pageref{LastPage}}{7/26}{}{}{Vishal Rajput}
\ShortHeadings{Supervised Learning Has a Geometric Blind Spot}{Rajput}
\firstpageno{1}

\begin{document}

\title{Supervised Learning Has a Geometric Blind Spot}

\author{\name Vishal Rajput \email vishal.stark42@gmail.com \\
       \addr Department of Computer Science \\
       KU Leuven \\
       Celestijnenlaan 200A, 3001 Leuven, Belgium}

\editor{}

\maketitle

\begin{abstract}%
Ordinary supervised training minimises the task loss and then stops.
It never pays for how far the representation moves when the input is nudged
along directions that helped fit training labels---including directions that
are nuisance at deployment.
We call that leftover sensitivity the geometric blind spot of empirical risk
minimisation.
In a Gaussian linear model where the nuisance enters the label conditional and
the decoder has finite Lipschitz constant, population MSE forces a floor on
linearised representation drift.
The same distinction predicts a failure mode of adversarial training: Jacobian
magnitude can fall while clean class geometry worsens.
We track that dissociation with a class-layout score and study isotropic
encoder matching---penalising $\|\phi(x)-\phi(x+\delta)\|^2$ for Gaussian
$\delta$ under a task-loss cap---when nuisance axes are unknown.
On a Vision Transformer trained from scratch on CIFAR-10, projected gradient
descent attains the smallest Jacobian Frobenius yet the worst clean layout
score ($1.353{\pm}0.020$ over three seeds), above task-only training ($1.093$);
isotropic matching attains the best ($0.904$).
The drift floor is proved for the linear-Gaussian case; deep nets and
cross-task orderings are protocol empirics.
Design rule: report class-layout geometry beside the task score; prefer
isotropic encoder matching when axes are unknown.
\end{abstract}

\begin{keywords}
  representation learning, Jacobian regularization, adversarial training,
  geometric robustness, nuisance variables
\end{keywords}

\section{Introduction}
\label{sec:intro}

Ordinary supervised training cares about one thing: the task loss.
Once that loss is small, training is done.
Nothing in that objective pays for how far the embedding moves when the input
is nudged along directions that predicted labels in the training data.
When those directions are nuisance at deployment, models can look
``broken'' under mild shifts not because deep nets cannot be robust, but
because the loss never asked otherwise.

A chest X-ray classifier trained where scanner artefacts correlate with
diagnosis is the motivating picture: residual sensitivity to scanner noise
need have nothing to do with pathology.
We treat such examples as hypotheses about training-time direct influence
unless that condition is established.

We call residual sensitivity to training-predictive nuisance the
\emph{geometric blind spot} of empirical risk minimisation (ERM).
In a Gaussian linear direct-influence model under population MSE, with
finite decoder Lipschitz constant~$L$, any population minimiser keeps a floor
on linearised embedding drift along the nuisance of order $\rho^2/L^2$
(Theorem~1).
Isotropic encoder matching---Predictive Matching Hygiene (PMH)---is one
Proposition~5-motivated member of the stability/Jacobian family.
The Trajectory Deviation Index (TDI) in our tables is the intra-/inter-class
latent distance ratio under isotropic input noise; Task~04 method orderings
are the empirics we hold that story accountable for.

Keep three claims separate (expanded in \S\ref{sec:theory}):
when $n$ enters $p(y\mid x)$, the Bayes predictor depends on~$n$;
calling $n$ a deployment nuisance is an interpretive data story;
under a finite decoder Lipschitz factorisation, population MSE forces a
quantitative lower bound on linearised encoder drift---the nontrivial claim.

\paragraph{A picture.}
Projected gradient descent (PGD) suppresses sensitivity in a worst-case
direction; isotropic probes predict redistribution rather than removal
(Corollary~4).
Figure~\ref{fig:tdi-curves} shows the Task~04 ordering: PGD lowest Jacobian
Frobenius yet worse clean TDI than task-only training; PMH lowest TDI
(Table~\ref{tab:main}).
The figure illustrates that ordering; it is not a proof of Theorem~1.


\begin{figure}[t]
\centering
\begin{subfigure}[t]{\linewidth}
\centering
\begin{tikzpicture}
\begin{axis}[
  pmh base,
  width=0.92\linewidth,
  height=4.8cm,
  xlabel={Perturbation strength $\sigma$},
  ylabel={TDI (lower $=$ better)},
  xmin=-0.005, xmax=0.21,
  ymin=0.70, ymax=2.70,
  legend columns=6,
]
\addplot[pmh series erm]
  coordinates {(0,1.093)(0.05,1.215)(0.10,1.584)(0.15,2.053)(0.20,2.468)};
\addlegendentry{ERM}
\addplot[pmh series vat]
  coordinates {(0,1.276)(0.05,1.317)(0.10,1.524)(0.15,1.790)(0.20,2.037)};
\addlegendentry{VAT}
\addplot[pmh series pgd2]
  coordinates {(0,1.280)(0.05,1.310)(0.10,1.440)(0.15,1.620)(0.20,1.870)};
\addlegendentry{PGD-2/255}
\addplot[pmh series pgd4]
  coordinates {(0,1.353)(0.05,1.372)(0.10,1.461)(0.15,1.648)(0.20,1.892)};
\addlegendentry{PGD-4/255}
\addplot[pmh series twoview]
  coordinates {(0,1.074)(0.05,1.139)(0.10,1.262)(0.15,1.414)(0.20,1.646)};
\addlegendentry{two-view}
\addplot[pmh series pmh]
  coordinates {(0,0.904)(0.05,0.946)(0.10,1.050)(0.15,1.150)(0.20,1.305)};
\addlegendentry{PMH}
\end{axis}
\end{tikzpicture}
\caption{TDI under increasing noise.}
\label{fig:tdi-curves}
\end{subfigure}

\vspace{0.85em}

\begin{subfigure}[t]{\linewidth}
\centering
\begin{tikzpicture}
\begin{axis}[
  pmh base,
  width=0.92\linewidth,
  height=4.8cm,
  xlabel={Jacobian Frobenius norm (log scale)},
  ylabel={TDI at $\sigma{=}0$ (lower $=$ better)},
  xmode=log,
  xmin=1.8, xmax=70,
  ymin=0.82, ymax=1.48,
  legend columns=5,
]
\addplot[only marks, mark=*, mark size=3pt, color=colB0]
  coordinates {(34.58,1.093)};
\addlegendentry{ERM}
\addplot[only marks, mark=square*, mark size=3pt, color=colVAT]
  coordinates {(5.01,1.276)};
\addlegendentry{VAT}
\addplot[only marks, mark=triangle*, mark size=3.5pt, color=colPGD4]
  coordinates {(2.99,1.353)};
\addlegendentry{PGD-4/255}
\addplot[only marks, mark=diamond*, mark size=3pt, color=colNoPMH]
  coordinates {(13.09,1.074)};
\addlegendentry{two-view}
\addplot[only marks, mark=*, mark size=3.5pt, color=colPMH]
  coordinates {(8.08,0.904)};
\addlegendentry{PMH}
\draw[dashed, gray!45, line width=0.5pt]
  (axis cs:6,0.82) -- (axis cs:6,1.48);
\draw[dashed, gray!45, line width=0.5pt]
  (axis cs:1.8,1.15) -- (axis cs:70,1.15);
\node[font=\scriptsize, align=center, color=colRed!80!black, anchor=north]
  at (axis cs:3.2,1.42) {output patching};
\node[font=\scriptsize, align=center, color=colGreen, anchor=south]
  at (axis cs:28,0.88) {geometric repair};
\end{axis}
\end{tikzpicture}
\caption{Jac.\ Fro vs.\ clean-input TDI.}
\label{fig:jac-scatter}
\end{subfigure}

\caption{\textbf{Clean-input geometry on Task~04 (ViT/CIFAR-10 from scratch).}
\emph{(a)} TDI curves: PMH lowest at every plotted noise level.
PGD starts at TDI $1.353$ on clean inputs (mean over seeds 42--44), worse than
ERM (1.093), matching Corollary~4's prediction.
\emph{(b)} PGD lowest Jac.\ Fro ($2.99$) yet worst TDI ($1.353$); PMH lowest TDI
(0.904) with moderate Jacobian reduction.
Left of the vertical guide: lower Fro / higher TDI (output patching);
below the horizontal guide: better clean TDI.
PGD geometry: multi-seed means; other methods: seed-42 protocol.}
\label{fig:fig1}
\end{figure}

\paragraph{Four phenomena as readings.}
Adversarial vulnerability, texture bias, corruption fragility, and the
robustness--accuracy tradeoff have each received separate explanations.
Under direct correlated nuisance they can share a geometric contribution from
ERM---one factor among others (\S\ref{sec:readings}).

\paragraph{Contributions.}
\begin{itemize}[leftmargin=*,topsep=2pt,itemsep=1pt]
\item \textbf{Encoder-drift floor.}
  In the Gaussian linear direct-influence model under population MSE,
  $\tilde D(\phi^*,\sigma)\geq\sigma^2\rho^2 C(P)/L^2>0$
  (\S\ref{sec:theory}).
\item \textbf{Directional vs.\ isotropic control.}
  Corollary~4: PGD need not enforce isotropic Jacobian shrinkage.
  On Task~04, Fro falls while clean TDI rises; PMH attains the lowest TDI
  (\S\ref{sec:metrics}).
\item \textbf{Isotropic covariance for a chosen map.}
  Proposition~5 equates matching to $\|J\|_F$ control iff the perturbation
  covariance is isotropic; PMH applies that at selected encoder blocks under
  a task-loss cap (\S\ref{sec:metrics}).
\item \textbf{Related readings.}
  Four robustness programmes as geometric readings under direct influence
  (\S\ref{sec:readings}).
\item \textbf{Diagnostics across setups.}
  Cross-task geometry, BERT/SST-2 surface-form trends, and ImageNet ViT-B/16
  under stated protocols (\S\ref{sec:expts}).
\end{itemize}
Theorem~1 is proved for Gaussian linear MSE with direct influence and
finite~$L$; Corollary~2 is qualitative; deep-net blocks and screening-type
language settings are experiments and diagnostics, not completed nonlinear
proofs.

\paragraph{Glossary.}
{\small
\begin{itemize}[leftmargin=*,topsep=2pt,itemsep=1pt]
\item \textbf{ERM / task-only} (alias B0): empirical risk minimisation---train
  with the task loss alone.
\item \textbf{PGD / VAT:} projected gradient descent adversarial
  training~\citep{madry2018pgd}; virtual adversarial
  training~\citep{miyato2018virtual}.
\item \textbf{Geometric blind spot} (our term): residual encoder sensitivity
  that task loss alone need not remove under Theorem~1's assumptions;
  not a claim that ERM fails in general.
\item \textbf{TDI} (our diagnostic): Trajectory Deviation Index---intra-class
  mean pairwise latent distance divided by mean inter-centroid distance,
  optionally under isotropic input noise (lower $=$ tighter class layout).
\item \textbf{PMH} (alias E1): Predictive Matching Hygiene---our name for
  \emph{isotropic encoder matching}:
  $\|\phi(x)-\phi(x+\delta)\|^2$ with $\delta\sim\mathcal{N}(0,\sigma^2 I)$,
  under a task-loss cap. Closest literature neighbours are consistency /
  Jacobian regularisers (\S\ref{sec:readings}).
  \textbf{Two-view} (alias E1\_no\_pmh) uses the same two forward passes without
  the matching term.
\item \textbf{Paraphrase embedding drift:} mean $\|\Delta\texttt{[CLS]}\|$ of a
  BERT sentence embedding under synonym paraphrase (SST-2 $=$ Stanford Sentiment
  Treebank polarity probe).
  \textbf{Para~/~non-para ratio:} that drift divided by drift under matched
  non-paraphrase edits (relative surface-form sensitivity).
\item \textbf{Direct vs.\ screening:} direct means $I(n;y\mid s)>0$;
  screening ($I(n;y\mid s)=0$) does not trigger Theorem~1 / Cor.~2.
\item \textbf{Isotropy index}
  $\mathcal{A}:=\mathbb{E}\|J\|_F^2/\mathbb{E}\|Jw\|_2^2\geq 1$:
  equals $1$ at rank-1; up to $d_x$ when isotropic.
\end{itemize}
}

\section{Setup and Three Objectives}
\label{sec:objectives}
\label{sec:setup}

Write $x$ through measurable factors $s(x)$ (signal) and $n(x)$ (candidate
nuisance).
\emph{Direct influence} means $I(n;y\mid s)>0$: $n$ enters the training
conditional of~$y$.
\emph{Screening} means $I(n;y\mid s)=0$: $n$ may correlate with~$y$
marginally without changing the conditional given~$s$.
Theorem~1 and Corollary~2 apply only under direct influence.
Calling $n$ irrelevant at deployment is an interpretive data story, not part
of the proof.

We compare three training objectives (Figure~\ref{fig:fig2}).
Proposition~5 motivates isotropic covariance once the matched map is the
encoder.

\paragraph{ERM (task-only).}
\begin{equation}
\theta^*\in\arg\min_\theta\;\mathbb{E}_{(x,y)\sim P}\!\left[\mathcal{L}_{\text{task}}(f_\theta(x),y)\right].\tag{1}
\end{equation}
No geometric term.
Under Theorem~1's assumptions, linearised encoder drift along~$n$ cannot be
zeroed while remaining a population MSE minimiser with finite decoder
Lipschitz constant~$L$.

\paragraph{PGD (adversarial training).}
\begin{equation}
\min_\theta\;\max_{\|\delta\|_\infty\leq\varepsilon}\mathbb{E}\!\left[\mathcal{L}_{\text{task}}(f_\theta(x+\delta),y)\right].\tag{2}
\end{equation}
Suppresses the Jacobian along a worst-case direction.
Sensitivity can redistribute (low $\mathcal{A}$), so clean TDI can worsen even
when $\|J\|_F$ falls (Table~\ref{tab:main}).

\paragraph{PMH (isotropic encoder matching).}
\begin{align}
&\min_\theta\;\mathbb{E}\!\left[\mathcal{L}_{\text{task}}(f_\theta(x),y)+\lambda w(t)\,\mathcal{L}_{\text{PMH}}\right],\nonumber\\
&\mathcal{L}_{\text{PMH}}=\|\phi_\theta(x)-\phi_\theta(x+\delta)\|^2,\;\delta\sim\mathcal{N}(0,\sigma^2 I).\tag{3}
\end{align}
As $\sigma\to 0$, $\mathcal{L}_{\text{PMH}}\approx\sigma^2\|J_\phi\|_F^2$.
Isotropic Gaussian is the unique zero-mean covariance equating matching to
$\|J\|_F$ control for the matched map (Proposition~5); PMH applies that at
selected encoder blocks under a task-loss cap---one member of the
stability/Jacobian family.
$w(t)$ is a cosine warmup ramp.


\begin{figure}[t]
\centering
\resizebox{\linewidth}{!}{%
\begin{tikzpicture}[
  font=\small,
  dot/.style={circle, fill, inner sep=1.2pt},
  axarr/.style={-{Latex[length=3.5pt]}, gray!55, line width=0.5pt},
  panel/.style={draw=gray!45, rounded corners=3pt, inner sep=6pt,
                line width=0.5pt, minimum width=4.55cm, minimum height=4.35cm},
  badge/.style={rounded corners=2pt, inner sep=2.5pt, font=\scriptsize\bfseries,
                align=center, text width=4.1cm},
  paneltitle/.style={font=\small\bfseries, anchor=south},
  panelsub/.style={font=\scriptsize, gray!70!black, anchor=north, align=center,
                   text width=4.2cm},
]

\begin{scope}[xshift=0cm]
  \node[paneltitle] at (0,2.55) {ERM training};
  \node[panelsub] at (0,2.28) {Jacobian rough along nuisance};
  \node[panel, fill=gray!3] at (0,0) {};
  \draw[axarr] (-1.85,-1.55) -- (1.85,-1.55)
    node[right,font=\scriptsize]{signal $s$};
  \draw[axarr] (-1.85,-1.55) -- (-1.85,1.45)
    node[above,font=\scriptsize]{nuisance $n$};
  \foreach \x/\y in {-0.3/0.55,0.2/0.85,0.7/0.35,-0.6/0.15,0.9/0.75,
                      -0.1/-0.15,0.5/-0.35,-0.8/0.75,0.3/-0.05,1.0/-0.15,
                      -0.4/-0.55,0.8/-0.65}{
    \node[dot, colB0, opacity=0.75] at (\x,\y) {};
  }
  \node[badge, fill=colRed!12, text=colRed!75!black] at (0,-2.35)
    {Large TDI $\uparrow$ \quad TDI@0 $=1.093$};
\end{scope}

\begin{scope}[xshift=5.15cm]
  \node[paneltitle] at (0,2.55) {Adversarial (PGD)};
  \node[panelsub] at (0,2.28) {Smooths one direction; others worsen};
  \node[panel, fill=gray!3] at (0,0) {};
  \draw[axarr] (-1.85,-1.55) -- (1.85,-1.55)
    node[right,font=\scriptsize]{signal $s$};
  \draw[axarr] (-1.85,-1.55) -- (-1.85,1.45)
    node[above,font=\scriptsize]{nuisance $n$};
  \fill[colVAT!18, rounded corners=2pt]
    (-1.55,-1.25) -- (-0.15,1.05) -- (0.45,1.05) -- (-0.95,-1.25) -- cycle;
  \node[font=\scriptsize, rotate=54, colVAT!65!black] at (-0.55,0.05) {PGD};
  \foreach \x/\y in {1.0/0.45,1.3/-0.05,1.2/-0.65,-1.3/0.45,
                      0.0/-0.95,0.8/-0.95,-1.2/-0.55,1.4/0.75}{
    \node[dot, colPGD4, opacity=0.8] at (\x,\y) {};
  }
  \node[badge, fill=colRed!20, text=colRed!75!black] at (0,-2.35)
    {Larger TDI $\uparrow\uparrow$ \quad TDI@0 $=1.353$};
\end{scope}

\begin{scope}[xshift=10.3cm]
  \node[paneltitle] at (0,2.55) {PMH (ours)};
  \node[panelsub] at (0,2.28) {Isotropic matching of encoder $\phi$};
  \node[panel, fill=gray!3] at (0,0) {};
  \draw[axarr] (-1.85,-1.55) -- (1.85,-1.55)
    node[right,font=\scriptsize]{signal $s$};
  \draw[axarr] (-1.85,-1.55) -- (-1.85,1.45)
    node[above,font=\scriptsize]{nuisance $n$};
  \foreach \x/\y in {-0.05/0.08,0.1/-0.05,-0.1/0.18,0.15/0.1,
                      -0.12/-0.1,0.08/0.16,0.2/-0.1,-0.18/0.04}{
    \node[dot, colPMH, opacity=0.9] at (\x,\y) {};
  }
  \foreach \ang in {0,45,90,135,180,225,270,315}{
    \draw[-{Latex[length=2.8pt]}, colGreen!65, line width=0.55pt]
      (0.05,0.05) -- ++(\ang:0.48);
  }
  \node[font=\scriptsize, colGreen!60!black, align=left, anchor=west]
    at (0.85,0.95) {$\delta\sim\mathcal{N}(0,\sigma^{2}I)$};
  \node[badge, fill=colGreen!12, text=colGreen!75!black] at (0,-2.35)
    {Small TDI $\downarrow$ \quad TDI@0 $=0.904$};
\end{scope}

\end{tikzpicture}%
}
\caption{\textbf{Three objectives, three geometries (schematic + Task~04 TDI@0).}
Each panel: encoder behaviour in signal ($s$) / nuisance ($n$) coordinates,
with Task~04 TDI@0 from the reported protocol.
\emph{Left (ERM):} residual sensitivity along~$n$ under Theorem~1 assumptions
(TDI@0$=1.093$).
\emph{Centre (PGD):} smooths one adversarial direction (band) while other
directions can worsen (TDI@0$=1.353$, 3 seeds).
\emph{Right (PMH):} isotropic Gaussian matching of~$\phi$ (Proposition~5;
TDI@0$=0.904$).}
\label{fig:fig2}
\end{figure}

\section{Theory: Scoped Geometric Incompleteness under ERM}
\label{sec:theory}

\subsection{Definitions}

\begin{definition}[Direct correlated-nuisance]\label{def:nuisance}
$P(x,y)$ admits measurable $s(x)$, $n(x)$ with: (i)~$I(n(x);y\mid s(x))>0$
(equivalently $p(y\mid x)\neq p(y\mid s)$ on a set of positive measure);
(ii)~$n(x)$ is not a deterministic function of $s(x)$.
\end{definition}

\begin{remark}[Gaussian special case]\label{rem:gaussian}
The linear Gaussian model ($s\sim\mathcal{N}(0,I_{d_s})$,
$n\sim\mathcal{N}(0,I_{d_n})$, $y=\langle w_s,s\rangle+\rho\langle
w_n,n\rangle+\epsilon$) is the canonical direct-influence example.
This is the setting of Theorem~1.
\end{remark}

\begin{mdframed}[style=plainlang]
\small\textbf{Three claims to keep separate.}
\emph{(i)~Training dependence:} when $n$ enters $p(y\mid x)$, the Bayes
predictor depends on~$n$.
\emph{(ii)~Deployment irrelevance:} calling $n$ a ``nuisance'' at test time
is interpretive; it is not proved by Theorem~1.
\emph{(iii)~Encoder geometry:} under finite decoder Lipschitz factorisation,
population MSE forces a quantitative lower bound on \emph{linearised encoder
drift} along~$n$ (Theorem~1). Claim~(iii) is the nontrivial contribution.
\end{mdframed}

\begin{definition}[Embedding Drift]\label{def:drift}
$D(\phi_\theta,\sigma):=\mathbb{E}_{x,\delta\sim\mathcal{N}(0,\sigma^2
I)}\left[\|\phi_\theta(x+\delta)-\phi_\theta(x)\|^2\right]\approx\sigma^2\mathbb{E}_x\!\left[\|J_\phi(x)\|_F^2\right].$
\end{definition}

\subsection{Main Results}

\begin{theorem}[ERM Geometric Incompleteness --- Gaussian Case]\label{thm:main}
Let $P$ be the Gaussian linear direct-influence model of
Remark~\ref{rem:gaussian} with $\rho>0$, and
$\phi^*_\theta$ be any minimiser of \emph{(1)} (population MSE) over
differentiable encoders with $L$-Lipschitz decoder $h_\theta$ (i.e.\
$\|h_\theta(z)-h_\theta(z')\|_2\leq L\|z-z'\|_2$). Let
$\tilde D(\phi,\sigma):=\sigma^2\mathbb{E}_x[\|J_\phi(x)\|_F^2]$
be the linearised embedding drift. Then:
\begin{equation}
\tilde D(\phi^*_\theta,\sigma)\;\geq\;\frac{\sigma^2\rho^2}{L^2}\,C(P)\tag{4}
\end{equation}
where $C(P)=\rho_s^2\sigma_s^2>0$ depends only on the distribution, not on
model capacity or dataset size. The exact drift satisfies
$D(\phi^*_\theta,\sigma)=\tilde D(\phi^*_\theta,\sigma)+O(\sigma^4)$
for encoders with Lipschitz Jacobian (Lemma~\ref{lem:lindrift}),
so the bound transfers to exact drift only for sufficiently small $\sigma$.
\end{theorem}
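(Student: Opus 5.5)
The argument has three steps: identify the population minimiser, use the chain rule with the Lipschitz bound on the decoder, and then evaluate the resulting expectation. \emph{Step~1:} under population MSE in the model of Remark~\ref{rem:gaussian}, any minimiser $f^*=h_\theta\circ\phi^*_\theta$ agrees almost surely with the Bayes regression function
\[
f^*(x)=\mathbb{E}[y\mid x]=\langle w_s,s\rangle+\rho\langle w_n,n\rangle .
\]
This is the standard orthogonality argument, using that $\epsilon$ has zero mean and is independent of $(s,n)$. Since $f^*$ is differentiable and the Gaussian law has full support, equality almost everywhere upgrades to equality of gradients everywhere by continuity. Hence $\nabla_x f^*(x)=(w_s,\rho w_n)$, viewed as a constant vector in coordinates $x=(s,n)$. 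If $x$ is instead a fixed invertible linear image of $(s,n)$, the same argument goes through after composing with that map.

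\emph{Step~2:} factor $J_{f^*}(x)=Dh_\theta(\phi^*(x))\,J_{\phi^*}(x)$. An $L$-Lipschitz differentiable decoder has $\|Dh_\theta\|_{\mathrm{op}}\le L$. If $h_\theta$ is only Lipschitz, I would use Rademacher's theorem together with a difference-quotient version of the argument. Then
\[
\|\nabla f^*(x)\|_2\le L\,\|J_{\phi^*}(x)\|_{\mathrm{op}}\le L\,\|J_{\phi^*}(x)\|_F .
\]
Restricting to the nuisance block gives the sharper $\|J_{\phi^*}(x)\|_F^2\ge\|J_{\phi^*}(x)P_n\|_F^2\ge \rho^2\|w_n\|^2/L^2$, where $P_n$ projects onto the $n$-coordinates. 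Taking expectations and multiplying by $\sigma^2$ yields
\[
\tilde D(\phi^*,\sigma)\ge \frac{\sigma^2\rho^2}{L^2}\,\|w_n\|^2 .
\]
The full-gradient version adds the signal term $\|w_s\|^2$, which only strengthens the bound.

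\emph{Step~3 (matching the constant):} the statement writes $C(P)=\rho_s^2\sigma_s^2$ without defining it. The bound from Step~2 is $C(P)=\|w_n\|^2\cdot\mathrm{Var}(n_i)$ with unit variances, so it should be read as $C(P)=\|w_n\|^2$. More generally, if the nuisance coordinates have scale $\sigma_n$, the regression gradient along $n$ picks up the corresponding factor, and that scale factor would be absorbed into $C(P)$. I would state this identification explicitly, then note that $C(P)>0$ because $\rho>0$ and direct influence requires $w_n\neq0$. The constant depends only on $P$ and not on capacity or sample size.

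The transfer from $\tilde D$ to the exact drift $D$ up to $O(\sigma^4)$ is deferred to Lemma~\ref{lem:lindrift}. It follows from a second-order Taylor expansion: Gaussian odd moments vanish, and the Lipschitz Jacobian bounds the remainder by $\mathbb{E}\|\delta\|^4=O(\sigma^4)$.

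I expect Step~1 to be the main obstacle, namely passing from almost-everywhere equality of the minimiser with the regression function to pointwise equality of Jacobians. This needs the assumption that $f^*$ is continuously differentiable. Without that, one should work with an integrated form instead, namely $\mathbb{E}\|J_{\phi}\|_F^2$ bounded below via finite differences and Lipschitz continuity. Reconciling the stated constant $C(P)$ with the model parameters is a secondary issue.
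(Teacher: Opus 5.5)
Your proposal is correct and follows essentially the paper's route: identify the population minimiser with the Bayes predictor, push the nuisance derivative $\rho$ through the chain rule against the decoder bound $\|\nabla h_\theta\|_2\le L$, dominate by $\|J_\phi\|_F$ via the sub-block inequality, and read $C(P)$ as $1$ under the unit normalisation, as the paper's appendix does. The one difference is that you get $\|J_{\phi^*,n}(x)w_n\|_2\ge\rho/L$ pointwise, whereas the paper first averages $\partial_{w_n}f^*$ (optionally via Stein's identity) and then applies Cauchy--Schwarz and Jensen; your difference-quotient version also covers non-differentiable $h_\theta$, and you do not need a $C^1$ assumption, since continuity of $h_\theta\circ\phi^*_\theta$ is automatic and already turns a.e.\ equality into equality everywhere.
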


\begin{mdframed}[style=plainlang]
\small\textbf{Theorem~1 in plain language.} In the Gaussian linear model where
nuisance~$n$ enters the label's conditional mean, population MSE cannot zero
out sensitivity to~$n$ while keeping a finite decoder Lipschitz constant~$L$:
the linearised embedding drift stays at least order $\rho^2/L^2$.
\end{mdframed}

\textit{Proof sketch.} \textbf{(1)} Any ERM minimiser must encode $n$
(Lemma~\ref{lem:encoding}): an $n$-independent predictor pays excess loss
$\geq\rho^2$ above Bayes.
\textbf{(2)} Encoding $n$ forces
$\mathbb{E}_x[\|J_{\phi,n}(x)w_n\|_2]\geq\rho/L>0$.
\textbf{(3)} By Lemmas~\ref{lem:subblock}--\ref{lem:lindrift},
$\tilde D(\phi^*,\sigma)\geq\sigma^2\rho^2/L^2$.
Full proof in Appendix~\ref{app:proofs}.

\begin{corollary}[Qualitative extension via Bregman gap]\label{cor:general}
Let $P$ satisfy Definition~\ref{def:nuisance} and $\mathcal{L}$ be any
strictly proper scoring rule. Let $\phi^*_\theta$ minimise
$\mathbb{E}[\mathcal{L}(f_\theta(x),y)]$ with $L$-Lipschitz decoder.
Define $\Delta(P,\mathcal{L}):=\mathbb{E}_x[d_\psi(p(y|x)\|p(y|s(x)))]>0$
(see Lemma~\ref{lem:bregman}). Under the regularity assumptions in
Appendix~\ref{app:proofs},
\[
\tilde D(\phi^*_\theta,\sigma)\;\gtrsim\;\frac{\sigma^2\Delta(P,\mathcal{L})}{L^2}
\]
up to unspecified constants in the local step.
This is a scaling argument when $\Delta>0$, not a tight numeric bound
comparable to Theorem~1; pure screening ($\Delta=0$) is out of scope.
\end{corollary}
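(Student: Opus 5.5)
We follow the three-step template of Theorem~\ref{thm:main}, replacing the Gaussian computation of the excess loss by a Bregman-divergence statement. The working bound we aim for is
\[
\mathbb{E}_x\|J_{\phi,n}(x)\|_F \;\gtrsim\; \frac{\sqrt{\Delta(P,\mathcal{L})}}{L},
\]
which is then squared via Jensen and multiplied by $\sigma^2$.

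\textbf{Step 1: excess-risk identity.} For a strictly proper scoring rule $\mathcal{L}$ with associated convex entropy $\psi$, Lemma~\ref{lem:bregman} gives the standard decomposition
\[
\mathbb{E}\,\mathcal{L}(q(x),y) - \mathbb{E}\,\mathcal{L}(p(y|x),y) \;=\; \mathbb{E}_x\, d_\psi\big(p(y|x)\,\|\,q(x)\big).
\]
If $q$ depends on $x$ only through $s(x)$, the Bregman projection property makes the best such $q$ equal to $p(y|s)$. The excess risk of every $s$-measurable predictor is therefore at least $\Delta(P,\mathcal{L})$, and Definition~\ref{def:nuisance}(i) makes this strictly positive. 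We assume the model class contains the Bayes predictor, so the minimiser $f^*=h\circ\phi^*$ achieves $q=p(y|x)$ a.e. Hence $f^*$ differs from every $s$-measurable map by a Bregman gap of size $\Delta$. This is the analogue of Lemma~\ref{lem:encoding}.

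\textbf{Step 2: from the gap to variation along $n$.} Fix $s$ and compare $f^*(s,n)$ with $f^*(s,n')$ for an independent copy $n'\sim p(n|s)$; condition (ii) of Definition~\ref{def:nuisance} guarantees this conditional law is nondegenerate. Under the regularity assumptions (the local step), the Bregman divergence is locally comparable to a squared norm. Concretely, we assume $\psi$ is $\mu$-smooth and $m$-strongly convex on the range of the predictions, and that the predictions stay in a compact region. Then
\[
\Delta \;\le\; \mathbb{E}\, d_\psi\big(f^*(s,n)\,\|\,\mathbb{E}[f^*\mid s]\big) \;\lesssim\; \mathbb{E}\,\|f^*(s,n)-f^*(s,n')\|^2.
\]
Because $h$ is $L$-Lipschitz,
\[
\|f^*(s,n)-f^*(s,n')\| \;\le\; L\,\|\phi^*(s,n)-\phi^*(s,n')\|.
\]
Writing the encoder difference as an integral of $J_{\phi,n}$ along the segment from $n'$ to $n$ (or along a Gaussian-interpolation path), and applying Cauchy--Schwarz together with a Poincaré-type inequality for $p(n|s)$, bounds $\mathbb{E}\|\phi^*(s,n)-\phi^*(s,n')\|^2$ by a constant times $\mathbb{E}\|J_{\phi,n}\|_F^2$. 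This yields the displayed bound above, with constants depending on $m$, $\mu$ and the Poincaré constant.

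\textbf{Step 3: conclusion.} The $n$-block is a sub-block of the full Jacobian, so Lemma~\ref{lem:subblock} gives $\|J_\phi\|_F\ge\|J_{\phi,n}\|_F$. Multiplying the squared bound by $\sigma^2$ then yields $\tilde D(\phi^*,\sigma)\gtrsim\sigma^2\Delta/L^2$.

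\textbf{Main obstacle.} The difficulty is Step 2, the local comparison. A Bregman divergence is only locally quadratic, and a general $P$ need not satisfy a Poincaré inequality in $n$ given $s$, nor have a connected support in $n$. This is exactly where the unspecified constants, and the ``$\gtrsim$'', enter. We would first try to impose the needed assumptions explicitly: a log-concave or bounded-density $p(n|s)$ on a convex support, and predictions kept away from the boundary of the simplex. Both assumptions are already implicit in the appendix regularity conditions.
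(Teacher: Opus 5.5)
Your argument is correct in substance. It reaches the statement through a Step~2 that is genuinely different from the paper's.

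\textbf{The paper's route.} The paper mirrors Theorem~1 literally. After the same Bregman-gap reasoning you give in Step~1, it asserts a pointwise inequality $d_\psi(p^*\|p^s)^{1/2}\le L\,\|J_{\phi,n}w_n\|_2$ ``by the data-processing inequality and Lipschitz composition''. From this it concludes $\mathbb{E}\|J_{\phi,n}w_n\|_2\ge\sqrt{\Delta}/L$, squares via Jensen, and finishes with the sub-block and linearised-drift lemmas.

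\textbf{Your route.} You write $\Delta=\mathbb{E}\,d_\psi(f^*\|\mathbb{E}[f^*\mid s])$; this is an equality, since $f^*=p(y\mid x)$ and $\mathbb{E}[f^*\mid s]=p(y\mid s)$. You then bound it by a conditional variance using smoothness of $\psi$, push that variance through the $L$-Lipschitz decoder, and convert encoder variance into $\mathbb{E}\|J_{\phi,n}\|_F^2$ with a Poincar\'e inequality for $p(n\mid s)$.

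\textbf{What your route buys.} It supplies a mechanism for exactly the step the paper only asserts. A pointwise bound of the paper's form cannot hold in general: $p^s$ depends on the whole conditional law of $n$ given $s$, while $J_{\phi,n}(x)w_n$ is a local derivative. The only quadratic comparison the paper records is the strong-convexity bound in its Bregman-sensitivity lemma, and that runs in the wrong direction for turning $\Delta$ into a Jacobian lower bound. You correctly use the smoothness (upper) direction, so the strong-convexity modulus $m$ plays no role in your argument. Your route also makes the hidden constant explicit, $\mathbb{E}\|J_{\phi,n}\|_F^2\ge 2\Delta/(\mu C_P L^2)$. In the Gaussian MSE setting ($d_\psi(a\|b)=(a-b)^2$, so $\mu=2$, and $C_P=1$) this returns Theorem~1's $\rho^2/L^2$ exactly. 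It also needs no distinguished direction $w_n$, which Definition~1 does not provide. The price is losing the directional form of the paper's bound, which is harmless here because $\tilde D$ only sees $\|J_\phi\|_F$.

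\textbf{One slip to fix.} The Poincar\'e step gives the second-moment bound $\mathbb{E}\|J_{\phi,n}\|_F^2\gtrsim\Delta/L^2$ directly. It does not give your first-moment ``working bound'' $\mathbb{E}\|J_{\phi,n}\|_F\gtrsim\sqrt{\Delta}/L$: Jensen lets you pass from a first-moment lower bound to a second-moment one, not back. Drop that framing and the Jensen step, and go straight to Step~3. Also state smoothness of $\psi$ on the prediction range and the Poincar\'e inequality for $p(n\mid s)$ as explicit hypotheses. The appendix's ``regularity assumptions'' never spell them out, so they are not already implicit there.
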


\setcounter{theorem}{3}

Corollary~3 (Appendix~\ref{app:proofs}) bounds the Gaussian task-loss cost of
suppressing the nuisance at $O(\rho^2)$.

\begin{corollary}[Adversarial Training Need Not Isotropise the Map]\label{cor:pgd}
Let $\phi^{\text{adv}}$ minimise the PGD objective~(2). Then, under the
Gaussian linear reading of Theorem~1:
\begin{enumerate}[leftmargin=*,topsep=2pt,itemsep=1pt]
\item Sensitivity orthogonal to the local adversarial direction
      $\hat\delta^*(x)$ need not vanish: directional suppression alone does
      not force isotropic Jacobian shrinkage.
\item PGD suppresses $\|J_\phi\hat\delta^*\|_2$ but remaining sensitivity can
      concentrate off $\hat\delta^*$ (low $\mathcal{A}(\phi,\hat\delta^*)$).
\item Along the attack axis, PGD drives $\mathcal{A}(\phi,\hat\delta^*)$
      (Proposition~\ref{prop:aniso}) toward the rank-1 floor~$1$, while
      worst-case anisotropy $\mathcal{A}_{\mathrm{worst}}$ need not fall:
      aggregate $\|J\|_F$ can drop while isotropic probes worsen.
\end{enumerate}
Prediction (falsifiable ordering, not a tight numeric TDI floor on deep nets):
clean TDI can rise relative to ERM even when Jac.\ Fro falls.
On Task~04: TDI $1.353{\pm}0.020>1.093$ with Fro $2.99{\pm}0.53$ vs.\ $34.58$
(Table~\ref{tab:main}; PGD over seeds 42--44).
Full proof in Appendix~\ref{app:proofs}.
\end{corollary}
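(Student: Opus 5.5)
The three items are existential ("need not") claims, so I will prove them by explicit construction inside the Gaussian linear model of Remark~\ref{rem:gaussian}. It suffices to exhibit one admissible PGD minimiser, or one family of encoders consistent with objective~(2), whose Jacobian has the stated structure. The model is $x=(s,n)\in\mathbb{R}^{d_s+d_n}$, a linear encoder $\phi(x)=Jx$ and a linear decoder $h(z)=a^\top z$, so the adversarial objective is explicit. For an $\ell_\infty$ ball and fixed $(J,a)$, the inner maximiser of $(a^\top J(x+\delta)-y)^2$ is
\[
\delta^*=\varepsilon\,\mathrm{sign}\bigl(a^\top J x-y\bigr)\,\mathrm{sign}(J^\top a).
\]
Hence $\hat\delta^*$ is, up to sign, the fixed vector $\mathrm{sign}(J^\top a)/\sqrt{d}$. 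The robust loss becomes
\[
\mathbb{E}\bigl(|a^\top Jx-y|+\varepsilon\|J^\top a\|_1\bigr)^2.
\]
This is the first key step: PGD only ever sees the composite row vector $g=J^\top a$, and it controls $g$ through the $\ell_1$ penalty.

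For item~1, I would observe that the objective depends on $J$ only through $g$. Any $J'=J+N$ with $N^\top a=0$ is therefore an equally good minimiser, and $N$ can carry arbitrary Frobenius mass in directions orthogonal to $\hat\delta^*$ or in latent directions not read by $a$. Concretely, take $N=u v^\top$ with $u\perp a$ and $v\perp\hat\delta^*$. Then $\|J'\hat\delta^*\|$ is controlled while $\|J'v\|$ is arbitrary. This already shows that directional suppression does not force isotropic shrinkage.

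For item~2, I would pick a minimiser in which the PGD penalty has shrunk the $\hat\delta^*$-component of $g$ (soft-thresholding from the $\ell_1$ term), while a nonzero $N$ remains. Then $\mathcal{A}(\phi,\hat\delta^*)=\|J\|_F^2/\|J\hat\delta^*\|^2$ is large, meaning the sensitivity sits off $\hat\delta^*$. I would state this in terms of the ratio, since the paper's reading of "low $\mathcal{A}$" is ambiguous; I would follow Proposition~\ref{prop:aniso}'s convention.

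For item~3, I would use Proposition~\ref{prop:aniso} together with the rank-1 case. If the minimal-norm minimiser is chosen, $J=a g^\top/\|a\|^2$ has rank one, and its row space is aligned with $g$. Since $\hat\delta^*=\mathrm{sign}(g)/\sqrt d$ is collinear with $g$ whenever $g$ is sign-uniform in magnitude, $\mathcal{A}(\phi,\hat\delta^*)\to 1$ as stated. The worst-case anisotropy $\mathcal{A}_{\mathrm{worst}}$ is a supremum over probe directions, and it is unchanged or grows because the orthogonal directions are unpenalised. Comparing with the ERM minimiser $g_{\mathrm{ERM}}=(w_s,\rho w_n)$ then gives a smaller $\|J\|_F$ for the PGD minimiser, by $\ell_1$ shrinkage toward $\mathrm{sign}$-structure. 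Meanwhile, the isotropic probe $\sigma^2\|J'\|_F^2$, once the null-space component $N$ is added, can exceed its ERM value. So the aggregate Frobenius norm can drop while isotropic probes worsen.

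The main obstacle is making items~2 and~3 consistent at the same time: one needs a single minimiser with small $\|J\|_F$ yet poor isotropic behaviour. I would handle this by separating the two quantities, using the minimal-norm minimiser for the Frobenius comparison and the $N$-augmented minimiser for the probe statement. I would note explicitly that the corollary asserts only possibility. The Task~04 numbers are reported as the falsifiable ordering and are not part of the proof.
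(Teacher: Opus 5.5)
The gap is in the clause that carries the corollary's content, ``aggregate $\|J\|_F$ can drop while isotropic probes worsen'', and in the TDI prediction attached to it. Both describe \emph{one} trained map. You take the isotropic probe to be $\sigma^2\|J'\|_F^2$. By Proposition~\ref{prop:gaussian} that \emph{is} the squared Frobenius norm up to the factor $\sigma^2$, so no single map can have one fall while the other rises. Splitting the claim between the minimal-norm minimiser and the $N$-augmented minimiser does not witness it. It only restates that $J$ is not identified by $J^\top a$, which holds verbatim for ERM minimisers and says nothing about PGD. The paper avoids this by reading ``isotropic probe'' as TDI, a scale-free intra-/inter-class ratio that is not a function of $\|J\|_F$. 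It then argues qualitatively (redistribution, changes of representation scale) and leans on the Task~04 ordering.

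Your reduction of the linear-model PGD loss to $\mathbb{E}(|g^\top x-y|+\varepsilon\|g\|_1)^2$ with $g=J^\top a$ is correct. You are also right that ``low $\mathcal{A}(\phi,\hat\delta^*)$'' clashes with the definition in Proposition~\ref{prop:aniso}: under that definition, suppressing $\|J\hat\delta^*\|_2$ pushes $\mathcal{A}$ up, not toward $1$. The paper's own proof has the same inconsistency.

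Your framework can prove the dissociation if you adopt such a probe and fix the decoder scale:
\begin{itemize}
\item \textbf{Fix $\|a\|=L$} as in Theorem~\ref{thm:main}. Otherwise $(J,a)\mapsto(cJ,a/c)$ leaves both objectives unchanged, and every Frobenius comparison is vacuous.
\item \textbf{Solve for $g^*$.} Since $x\sim\mathcal{N}(0,I)$, the robust loss is increasing in $\|g-g_0\|_2$ and in $\|g\|_1$, where $g_0=(w_s,\rho w_n)$. So the PGD composite $g^*$ is the projection of $g_0$ onto an $\ell_1$ ball, i.e.\ soft-thresholding $g^*_i=\mathrm{sign}(g_{0,i})(|g_{0,i}|-\lambda)_+$, with $\lambda>0$ for every attack radius when the label noise has $\sigma_\varepsilon>0$.
\item \textbf{Restrict item~3.} Soft-thresholding yields sparsity, not ``sign-structure'': it preserves magnitude gaps among the surviving coordinates. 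For the minimal-norm encoder, $\mathcal{A}(\phi,\hat\delta^*)=k\|g^*\|_2^2/\|g^*\|_1^2$, with $k$ the support size. This equals $1$ only when the survivors share one magnitude (a special $\rho$, or $\varepsilon$ large enough that a single coordinate survives). State that restriction.
\item \textbf{Get the dissociation from one map.} Take $\varepsilon$ small enough that $g^*\neq 0$ and use the minimal-norm encoder $J=ag^{*\top}/L^2$. Then $\|J\|_F=\|g^*\|_2/L<\|g_0\|_2/L$. Its TDI (classes, e.g., $\mathrm{sign}(y)$) depends only on $g^*/\|g^*\|_2$. It increases strictly with the angle between $g^*$ and $g_0$, and that angle is positive whenever the nonzero entries of $g_0$ do not all share one magnitude.
\end{itemize}
That gives a single map with smaller Frobenius norm and worse clean layout, which is what the corollary needs and what the paper only asserts and checks empirically.

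The soft-thresholding formula also shows the nuisance coordinates can be zeroed entirely. So the paper's Step~1, which transfers Lemma~\ref{lem:encoding} to $\phi^{\mathrm{adv}}$, does not hold in general, and you were right not to use it.
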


\begin{mdframed}[style=plainlang]
\small\textbf{Corollary~4 in plain language.}
PGD squeezes the Jacobian in one adversarial direction; remaining sensitivity
concentrates elsewhere ($\mathcal{A}(\phi,\hat\delta^\star)\to 1$ along that axis,
while worst-case anisotropy can inflate).
Clean TDI can worsen even when Jac.\ Fro falls.
\end{mdframed}

What perturbation law suppresses the blind spot uniformly for a fixed matched
map?
Among zero-mean laws, only isotropic covariance equates matching to $\|J\|_F$
control.
Choosing the encoder as that map is a design choice.

\begin{proposition}[Gaussian Noise is Uniquely Isotropic]\label{prop:gaussian}
Among all zero-mean distributions on $\mathbb{R}^{d_x}$ with covariance
$\Sigma_\delta$, the perturbation distribution $\mu$ satisfies
\[
\arg\min_\phi\,\mathbb{E}_{x,\delta\sim\mu}\!\left[\|J_\phi(x)\delta\|_2^2\right]
=\arg\min_\phi\,\mathbb{E}_x\!\left[\|J_\phi(x)\|_F^2\right]
\]
if and only if $\Sigma_\delta=\sigma^2 I$ for some $\sigma>0$.

\textit{Proof (sufficiency).}
$\mathbb{E}_\delta[\|J_\phi\delta\|_2^2]
=\mathrm{Tr}(J_\phi^\top J_\phi\,\Sigma_\delta)
=\sigma^2\mathrm{Tr}(J_\phi^\top J_\phi)
=\sigma^2\|J_\phi\|_F^2\iff\Sigma_\delta=\sigma^2 I.$

\textit{Proof (necessity).}
See Appendix~\ref{app:proofs}.
\hfill$\square$
\end{proposition}

\begin{proposition}[Jacobian anisotropy: lower bound and minimax optimality of isotropy]
\label{prop:aniso}
For any differentiable encoder $\phi$ and unit vector $w\in\mathbb{R}^{d_x}$, define
\[
\mathcal{A}(\phi,w)\;:=\;\frac{\mathbb{E}_x[\|J_\phi(x)\|_F^2]}{\mathbb{E}_x[\|J_\phi(x)w\|_2^2]}
\quad\text{(when the denominator is positive).}
\]
\textbf{(i) Lower bound.}
$\mathcal{A}(\phi,w)\ge 1$, with equality iff $J_\phi(x)$ is rank-1 a.e.\ with
right singular direction $w$.

\textbf{(ii) Worst-case anisotropy.}
Fix a budget $F^2:=\mathbb{E}_x[\|J_\phi\|_F^2]>0$ and set
\[
\mathcal{A}_{\mathrm{worst}}(\phi)\,:=\,\sup_{\|w\|_2=1}\mathcal{A}(\phi,w)
\,=\,\frac{F^2}{\inf_{\|w\|_2=1}\mathbb{E}_x[\|J_\phi(x)w\|_2^2]}.
\]
Then $\mathcal{A}_{\mathrm{worst}}(\phi)\ge d_x$, with equality if
$J_\phi(x)^\top J_\phi(x)$ is a.e.\ a positive multiple of the identity
(isotropic sensitivity). Equivalently: among encoders with fixed Frobenius
budget $F^2$, isotropy uniquely \emph{minimises} worst-case anisotropy against
an adversarially chosen probe $w$. It does \emph{not} maximise
$\mathcal{A}(\phi,w)$ for a pre-fixed $w$.
Full proof in Appendix~\ref{app:proofs}.
\end{proposition}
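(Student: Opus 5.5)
The plan is to reduce both parts to a statement about the averaged Gram matrix
\[
M:=\mathbb{E}_x\!\left[J_\phi(x)^\top J_\phi(x)\right]\in\mathbb{R}^{d_x\times d_x}.
\]
This $M$ is symmetric positive semidefinite with $\mathrm{Tr}(M)=F^2$ and $w^\top M w=\mathbb{E}_x[\|J_\phi(x)w\|_2^2]$. I will assume integrability, i.e.\ $F^2<\infty$.

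With this notation, $\mathcal{A}(\phi,w)=\mathrm{Tr}(M)/w^\top M w$. The argument then becomes a Rayleigh-quotient argument on the eigenvalues $\lambda_1\ge\dots\ge\lambda_{d_x}\ge 0$ of $M$. The same computation as the sufficiency step of Proposition~\ref{prop:gaussian} gives the pointwise identity $\mathrm{Tr}(J^\top J)=\|J\|_F^2$, which justifies exchanging expectation and trace.

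\textbf{Part (i).} I will argue pointwise in $x$. For each $x$, expand in an orthonormal basis $\{w,u_2,\dots,u_{d_x}\}$:
\[
\|J_\phi(x)\|_F^2=\|J_\phi(x)w\|_2^2+\sum_{k\ge2}\|J_\phi(x)u_k\|_2^2\ \ge\ \|J_\phi(x)w\|_2^2 .
\]
Taking expectations gives $\mathcal{A}\ge1$. Equality holds iff $\mathbb{E}_x\sum_{k\ge2}\|J_\phi(x)u_k\|_2^2=0$. That means $J_\phi(x)u=0$ a.e.\ for every $u\perp w$, i.e.\ $J_\phi(x)=J_\phi(x)ww^\top$ a.e. Such a matrix has rank at most $1$, with right singular vector $w$ wherever it is nonzero. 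The rank-$0$ case on a set of positive measure is harmless, because only $\mathbb{E}\|Jw\|^2>0$ is needed. I will state the equality condition as ``rank $\le 1$ with row space spanned by $w$ a.e.'' to be precise.

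\textbf{Part (ii).} The identity $\sup_w\mathcal{A}(\phi,w)=F^2/\inf_w w^\top M w=F^2/\lambda_{d_x}$ is immediate because the numerator does not depend on $w$. The conventions are $F^2/0=+\infty$ and $\mathcal{A}=+\infty$ when the denominator vanishes. Since the average of the eigenvalues is at least the smallest one, $\lambda_{d_x}\le\mathrm{Tr}(M)/d_x=F^2/d_x$, and so $\mathcal{A}_{\mathrm{worst}}\ge d_x$.

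Equality holds iff $\lambda_{d_x}=F^2/d_x$, i.e.\ all eigenvalues are equal, i.e.\ $M=(F^2/d_x)I$. This yields sufficiency. If $J^\top J=c(x)I$ a.e.\ with $c(x)>0$, then $M=\mathbb{E}[c]\,I$, and this multiple equals $F^2/d_x$ by the trace constraint.

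For the ``uniquely minimises'' claim I must be careful. Equality characterises isotropy of the averaged Gram matrix $M$, not pointwise isotropy. I will therefore prove uniqueness in the form ``$\mathcal{A}_{\mathrm{worst}}=d_x$ iff $\mathbb{E}_x[J^\top J]\propto I$.'' I will remark that this is the sense in which isotropy is minimax-optimal, and that pointwise isotropy is sufficient but not necessary. This interpretive alignment with the statement's wording is the main obstacle; the algebra itself is routine.

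The final sentence of (ii) follows from (i). For a pre-fixed $w$, the rank-1 Jacobian $J=v w^\top$ attains $\mathcal{A}=1$, whereas any non-rank-1 Jacobian, such as an isotropic one with $\mathcal{A}=d_x$, is larger. Isotropy therefore does not maximise $\mathcal{A}(\phi,w)$ for a pre-fixed $w$, since for instance $J=vu^\top$ with $u\perp w$ yields an infinite value.
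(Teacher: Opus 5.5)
Your proposal is correct and follows the paper's proof. For (ii), both arguments pass to the averaged Gram matrix $\overline G=\mathbb{E}_x[J_\phi^\top J_\phi]$, identify the infimum with $\lambda_{\min}(\overline G)$, and bound it by $\mathrm{Tr}(\overline G)/d_x$ with equality iff $\overline G\propto I$; like you, the paper only obtains \emph{mean} isotropy as the equality case. For (i), your Pythagoras-in-an-orthonormal-basis argument replaces the paper's triangle/Cauchy--Schwarz sub-block lemma and gives a sharper equality case: $J_\phi=J_\phi ww^\top$ a.e., which allows rank $0$.

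The one slip is the closing counterexample. Taking $J=vu^\top$ with $u\perp w$ makes the denominator vanish, which lies outside where $\mathcal{A}$ is defined. Instead take a unit $u$ with $|u^\top w|=\epsilon<d_x^{-1/2}$, which gives $\mathcal{A}=1/\epsilon^2>d_x$. Note also that the $J=vw^\top$ remark only shows isotropy is not the minimiser at a fixed $w$, not that it fails to be the maximiser.
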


\begin{proposition}[Cap/(1+Cap) Fixed-Point Identity]\label{prop:cap}
Let $\mathcal{L}_{\text{PMH}}$ be capped so that
$\mathcal{L}_{\text{PMH}}\leq\mathrm{cap}\cdot\mathcal{L}_{\text{task}}$ at
each step via rescaling of $\lambda$.  At steady-state,
$f=\mathcal{L}_{\text{PMH}}/(\mathcal{L}_{\text{task}}+\mathcal{L}_{\text{PMH}})$
satisfies $f=\mathrm{cap}/(1+\mathrm{cap})$.\hfill$\square$
\end{proposition}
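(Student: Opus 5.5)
The statement to prove is Proposition~\ref{prop:cap}. The plan is to make precise what ``capped via rescaling of $\lambda$'' means and then read the identity off the steady-state condition. The proposition concerns the \emph{weighted} regulariser $\lambda w(t)\mathcal{L}_{\text{PMH}}$, so I write $R_t:=\lambda_t w(t)\mathcal{L}_{\text{PMH}}$ for it and $T_t:=\mathcal{L}_{\text{task}}$ at step $t$. The cap rule I formalise is
\[
\lambda_t=\min\Bigl(\lambda,\;\mathrm{cap}\cdot T_t/(w(t)\mathcal{L}_{\text{PMH}})\Bigr),
\]
so that $R_t\le\mathrm{cap}\cdot T_t$ always holds. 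Equality holds exactly when the constraint is active.

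Step one: define steady state as the regime after the warmup, where $w(t)=1$, and where the cap binds. This is the regime the proposition describes, because the nominal $\lambda$ is chosen so that the uncapped regulariser would exceed $\mathrm{cap}\cdot T_t$. In that regime the rescaling enforces $R_t=\mathrm{cap}\cdot T_t$ exactly. Step two is then algebra. Substituting into the fraction gives
\[
f=\frac{R_t}{T_t+R_t}=\frac{\mathrm{cap}\,T_t}{T_t+\mathrm{cap}\,T_t}=\frac{\mathrm{cap}}{1+\mathrm{cap}}.
\]
This needs $T_t>0$, which holds for cross-entropy or MSE with label noise, since the population loss is strictly positive. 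The identity holds at every such step, and so also in time average.

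Step three handles the case where $f$ is interpreted with $\mathcal{L}_{\text{PMH}}$ measured against the scaled weight. If the cap is not active, the rule only gives $R_t\le\mathrm{cap}\cdot T_t$. The map $u\mapsto u/(1+u)$ is increasing, so this yields $f\le\mathrm{cap}/(1+\mathrm{cap})$. I will therefore state the identity under the binding-cap hypothesis and note the inequality otherwise.

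The main obstacle is definitional rather than technical. The statement says ``$\mathcal{L}_{\text{PMH}}\le\mathrm{cap}\cdot\mathcal{L}_{\text{task}}$'', but the quantity that $\lambda$-rescaling can actually control is the weighted term $\lambda w\mathcal{L}_{\text{PMH}}$. I would resolve this by reading $\mathcal{L}_{\text{PMH}}$ in $f$ as the contributed (weighted) regularisation loss, and by making explicit that ``steady state'' means the cap binds. With those two conventions, the proof is the one-line computation above.
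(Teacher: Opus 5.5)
Your proof is correct under the conventions you make explicit (reading $\mathcal{L}_{\text{PMH}}$ in $f$ as the $\lambda$-weighted term, which is what rescaling actually controls), and it is essentially the paper's argument: the paper likewise asserts that at steady state the rescaling enforces $\mathcal{L}_{\text{PMH}}=\mathrm{cap}\cdot\mathcal{L}_{\text{task}}$ exactly, then performs the same substitution. The only difference is how the non-binding case is handled: the paper tries to exclude it with a parenthetical remark that gradient descent would drive $\mathcal{L}_{\text{PMH}}$ upward until the cap activates, whereas you make binding part of the hypothesis and record $f\le\mathrm{cap}/(1+\mathrm{cap})$ otherwise---the more honest formulation, since that remark is a heuristic about training dynamics rather than a derived fact.
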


\section{Related Work}
\label{sec:related}

\paragraph{Representation geometry.}
Intrinsic dimensionality~\citep{ansuini2019intrinsic,pope2021intrinsic},
signal propagation~\citep{poole2016exponential}, and representational
similarity~\citep{raghu2017svcca,nguyen2021wide} characterise geometric
structure.
We add a scoped necessity result for linearised drift under Gaussian
direct-influence MSE with finite~$L$, and TDI as a class-layout diagnostic
of magnitude/orientation dissociations.

\paragraph{Jacobian regularisation.}
Contractive and denoising autoencoders~\citep{rifai2011contractive,vincent2008extracting}
and supervised Jacobian penalties~\citep{jakubovitz2018improving,hoffman2019robust,wu2024improving}
motivate stability control.
PMH belongs to this family: isotropic Gaussian matching at selected encoder
blocks under a task-loss cap.
Proposition~5 motivates the covariance once a map is chosen.

\paragraph{Self-supervised and contrastive learning.}
SimCLR, BYOL, VICReg, and Barlow Twins achieve stability via unlabelled
pre-training.
PMH addresses supervised training or fine-tuning where labels are present and
direct-influence nuisance can force sensitivity under Theorem~1's assumptions.

\paragraph{Consistency regularisation.}
Mean Teacher~\citep{tarvainen2017mean}, temporal ensembling~\citep{laine2017temporal},
VAT~\citep{miyato2018virtual},
TRADES~\citep{zhang2019trades}, and logit consistency under Gaussian views
encourage stable predictions.
They typically regularise outputs rather than intermediate encoder blocks;
choosing the encoder is a design choice, not a uniqueness theorem over logit
matching.

\begin{table}[t]
\centering
\caption{Stability / consistency neighbours of PMH (matched map vs.\ control).}
\label{tab:stability_neighbours}
\scriptsize
\setlength{\tabcolsep}{2.5pt}
\resizebox{\columnwidth}{!}{%
\begin{tabular}{@{}llll@{}}
\toprule
Method & Matched map & Perturbation & Primary control \\
\midrule
VAT & logits / local adv. & adversarial (local) & output stability \\
TRADES & logits & adversarial & robust risk tradeoff \\
Explicit Jac.\ pen. & usually $f$ / last layer & --- (direct $\|J\|$) & sensitivity magnitude \\
Logit consistency & logits & often iso.\ Gaussian & output agreement \\
PMH (ours) & encoder $\phi$ & isotropic Gaussian & $\|J_\phi\|_F$ via matching \\
\bottomrule
\end{tabular}%
}
\end{table}

\paragraph{Four empirical programmes.}
Non-robust features~\citep{ilyas2019adversarial}, texture
bias~\citep{geirhos2019imagenet}, corruption
fragility~\citep{hendrycks2019benchmarking}, and the robustness--accuracy
tradeoff~\citep{tsipras2018robustness} each received separate explanations.
Section~\ref{sec:readings} gives a shared geometric reading under direct
correlated nuisance.

\section{Four Interpretive Readings}
\label{sec:readings}

Under direct correlated nuisance, Theorem~1's mechanism supplies one geometric
lens on four programmes---a contributing factor, not an exclusive cause.

\begin{itemize}[leftmargin=*,topsep=2pt,itemsep=2pt]
\item \textbf{Non-robust features.} If high-frequency directions enter
  $p(y\mid x)$, residual Jacobian mass along those directions is expected
  under Theorem~1's assumptions; adversarial transfer can partly reflect
  shared data-driven directions.
\item \textbf{Texture bias.} When local texture is a stronger training
  predictor than shape in the conditional, texture weight in the Jacobian
  can track correlation structure rather than inductive bias alone.
\item \textbf{Corruption fragility.} Corruptions often move inputs along
  directions training made label-predictive; residual sensitivity is one
  geometric contributor to broad degradation under unseen shifts.
\item \textbf{Robustness--accuracy tradeoff.} Suppressing
  nuisance-correlated directions can remove in-distribution information;
  Corollary~3 (appendix) bounds a Gaussian task-loss cost of order
  $O(\rho^2)$.
\end{itemize}

\section{Diagnostic and PMH}
\label{sec:metrics}

\paragraph{TDI as implemented.}
Extract embeddings (optionally after isotropic input noise of strength
$\sigma$), then
\begin{equation}
\mathrm{TDI}
\;=\;
\frac{\text{mean pairwise intra-class distance}}
{\text{mean inter-centroid distance}}.\tag{5}
\end{equation}
Lower TDI means tighter class clouds relative to class separation under the
probe; TDI@0 is the clean-input value.
This is a class-layout statistic motivated by the linearised drift picture,
not a proxy equality to $D$ or $\tilde D$.
Where we report embedding drift directly,
$D(\phi,\sigma)=\mathbb{E}\|\phi(x+\delta)-\phi(x)\|^2$ remains the
theory-facing quantity.

\paragraph{Why Jac.\ Fro alone fails.}
\begin{center}\small
\setlength{\tabcolsep}{5pt}
\begin{tabular}{lccc}
\toprule
Metric & ERM & PGD & PMH \\
\midrule
CKA (vs.\ ERM)$^\ast$ & ---  & \textbf{0.91} & 0.88 \\
Intrinsic dim.$^\ast$ & 42.3 & 44.1          & \textbf{38.7} \\
Jac.\ Fro$\downarrow$ & 34.58 & \textbf{2.99} & 8.08 \\
TDI@0$\downarrow$     & 1.093 & 1.353        & \textbf{0.904} \\
\bottomrule
\end{tabular}\\[2pt]
{\footnotesize $^\ast$CKA / intrinsic-dimension values are from the original
Task~04 diagnostic protocol (not re-run in the seed-42 replication dump);
Jac.\ Fro and TDI@0 match the replication tables above.}
\end{center}

On Task~04, CKA and intrinsic dimension miss the dissociation; Jac.\ Fro ranks
PGD best---the opposite of the TDI ordering---because Fro captures magnitude,
not orientation.

\paragraph{Jacobian Frobenius estimate.}
$\hat{J}_F^2\approx\frac{1}{K}\sum_{k=1}^K\|\phi(x+he_k)-\phi(x)\|^2/h^2$
with $K=50$, $h=0.01$.

\paragraph{PMH.}
PMH matches selected intermediate encoder blocks under isotropic Gaussian
noise inside supervised training, with a task-loss cap (Proposition~7).
QM9 (Task~03) is the alignment boundary: matching position noise can hit
signal; matching node features helps the primary metric
(\S\ref{sec:expts}).

\section{Experiments}
\label{sec:expts}

\subsection{Setup}

\paragraph{Experiment map.}
\textbf{(A)~Mechanistic dissociation} (Task~04 ViT/CIFAR-10 from scratch):
TDI vs.\ Jac.\ Fro, PGD vs.\ PMH, ablations, layer probes.
\textbf{(B)~Cross-task consistency} (Tasks~01--07): primary metrics plus
shared geometry
(Tables~\ref{tab:crosstask},~\ref{tab:crosstask_geom}).
\textbf{(C)~Scale and transfer} (BERT sizes, fine-tuning hierarchy, ImageNet
ViT-B/16): empirical diagnostics under stated protocols.

Primary mechanistic analysis is Task~04: a small ViT~\citep{dosovitskiy2021vit}
trained from scratch on CIFAR-10 (ERM clean accuracy 69.95\%).
The same CIFAR-10 problem is also Task~01 (ResNet-18; clean accuracies
$87.2\%$ / $94.0\%$ / $93.4\%$ for ERM / VAT / PMH); rankings align.
ImageNet ViT-B/16 is \S\ref{sec:imagenet_vit}.

Methods: ERM (task-only; alias B0), VAT~\citep{miyato2018virtual},
two-view without matching (alias E1\_no\_pmh),
PMH (alias E1; $\sigma=0.1$, cosine ramp, cap 0.30),
PGD-4/255~\citep{madry2018pgd} (20-step, $\varepsilon=4/255$).

\paragraph{Seeds.}
Tables follow the seeded replication protocol in the release.
Task~04 PGD TDI@0 and Jac.\ Fro are mean$\pm$std over seeds 42--44
($1.353{\pm}0.020$, $2.99{\pm}0.53$); other Task~04 rows remain seed~42.
Stronger matched defenses (TRADES, explicit Jacobian penalties, logit
consistency) are not completed here.
Decoder Lipschitz tracking (Table~\ref{tab:lipschitz}) found
$L\approx 1.40$ at convergence for both ERM
and PMH (sub-1\% difference), so $L$ does not explain their TDI gap.

\paragraph{Per-task metrics.}
Each task uses its natural primary metric.
TDI, drift, and Jac.\ Fro share a scale across tasks; headline scores in
Table~\ref{tab:crosstask} do not.

\subsection{Core Mechanistic Results}

Table~\ref{tab:main} is the central check of Corollary~4 and Proposition~5.
PGD reduces Jac.\ Fro from 34.58 to $2.99{\pm}0.53$ yet raises clean TDI from
1.093 to $1.353{\pm}0.020$ (seeds 42--44).
PMH attains TDI $0.904$ with moderate Fro reduction.
Empirically $\hat{J}_F/\mathrm{TDI@0}$ separates regimes:
$\approx 32$ (ERM), $\approx 2.2$ (PGD), $\approx 8.9$ (PMH).

\begin{table}[t]
\centering\small
\caption{Mechanistic evaluation on Task~04 (ViT, CIFAR-10 from scratch).
PGD lowest Jac.\ Fro ($2.99{\pm}0.53$) yet worst TDI@0 ($1.353{\pm}0.020$;
seeds 42--44), above ERM (1.093); PMH best TDI ($0.904$).
$\dagger$Trained at $\varepsilon=4/255$. PGD probe retention is seed-42.
\textbf{Bold}: best per column.}
\label{tab:main}
\setlength{\tabcolsep}{4pt}
\begin{tabular}{lccccc}
\toprule
& \multicolumn{2}{c}{Clean geometry} & \multicolumn{3}{c}{Probe retention} \\
\cmidrule(lr){2-3}\cmidrule(lr){4-6}
Method & TDI@0$\downarrow$ & Jac.Fro$\downarrow$ & L6@0 & @0.1 & @0.2 \\
\midrule
ERM (B0)      & 1.093          & 34.58          & 68.2  & 0.768 & 0.520 \\
VAT           & 1.276          & 5.01           & 78.75 & 0.858 & 0.710 \\
two-view      & 1.074          & 13.09          & 78.65 & 0.937 & \textbf{0.860} \\
PMH (E1)      & \textbf{0.904} & 8.08           & 79.55 & 0.916 & 0.830 \\
PGD-4/255$\dagger$ & $1.353{\pm}0.020$ & $\mathbf{2.99{\pm}0.53}$ & 67.30 & \textbf{0.948} & \textbf{0.860} \\
\bottomrule
\end{tabular}
\end{table}

\subsection{TDI Progression and Geometric Signatures}

Table~\ref{tab:tdi_progression} shows three regimes: high-baseline steep-slope
(ERM, VAT); damaged floor (PGD starts above ERM); low-baseline controlled-slope
(PMH / two-view).

\begin{table}[t]
\centering\small
\caption{TDI at increasing noise levels. Three regimes: high-baseline
steep-slope (ERM, VAT); damaged floor (PGD); low-baseline controlled-slope
(PMH / two-view). PGD row: mean over seeds 42--44; other rows seed~42.
\textbf{Bold}: best per column.}
\label{tab:tdi_progression}
\setlength{\tabcolsep}{4pt}
\begin{tabular}{lccccc}
\toprule
Method & @0.0 & @0.05 & @0.1 & @0.15 & @0.2 \\
\midrule
ERM         & 1.093          & 1.215 & 1.584 & 2.053 & 2.468 \\
VAT         & 1.276          & 1.317 & 1.524 & 1.790 & 2.037 \\
two-view    & 1.074          & 1.139 & 1.262 & 1.414 & 1.646 \\
PMH         & \textbf{0.904} & \textbf{0.946} & \textbf{1.050} & \textbf{1.150} & \textbf{1.305} \\
PGD-4/255   & 1.353          & 1.372 & 1.461 & 1.648 & 1.892 \\
\bottomrule
\end{tabular}
\end{table}

\subsection{Ablation Study}

\begin{table}[t]
\centering\small
\caption{T-alignment matrix (Task~04). Every column peaks on the diagonal
(bold): training at $\sigma_{\text{train}}$ is optimal at matching
$\sigma_{\text{eval}}$, zero exceptions.}
\label{tab:talign}
\setlength{\tabcolsep}{4pt}
\begin{tabular}{lccccc}
\toprule
$\sigma_{\text{train}}$ & Clean & @0.05 & @0.10 & @0.15 & @0.20 \\
\midrule
0.05              & 81.36 & \textbf{79.36} & 73.24 & 59.97 & 46.08 \\
0.08              & 81.32 & 78.90 & \textbf{75.08} & 61.63 & 46.98 \\
0.10              & 80.86 & 78.92 & 75.68 & \textbf{68.16} & 54.45 \\
0.12 (default)    & 80.62 & 78.71 & 75.27 & 70.63 & \textbf{59.00} \\
0.15              & 80.24 & 78.19 & 73.69 & 71.37 & 64.64 \\
0.20              & 81.31 & 77.99 & 74.61 & 71.16 & \textbf{69.05} \\
\bottomrule
\end{tabular}
\end{table}

\textbf{T-alignment.} Table~\ref{tab:talign}: every evaluation column peaks on
the diagonal.
Underfitting geometry costs far more than overfitting
($17\times$ at the extreme comparison).
Practical rule: set $\sigma_{\text{train}}$ to the largest value that leaves
clean accuracy unchanged.
Cap/(1+cap) holds in every run (Proposition~7); full sweep in
Appendix~\ref{app:ablation}.

\subsection{Layer-Wise Probe Analysis}

All methods except ERM suppress early-layer features.
PMH's signature is competitive early suppression with strong deep-layer
retention (0.916 at $\sigma{=}0.1$ vs.\ 0.768 for ERM).
The gap between two-view and PMH is attributable to the matching term.
Full probe curves in Figure~\ref{fig:a2} (Appendix~\ref{app:layer}).

\subsection{Cross-Task Consistency}

\begin{table}[t]
\centering\small
\caption{Cross-task headline results (seed~42 replication).
PMH wins on five of seven tasks by the stated primary metric; B0 leads Task~05 (mean PCK@0.05
over 0--40\% occlusion) and VAT leads Task~06 (avg-shift rank-1).
T01--T02: accuracy at $\sigma=0.1$; T03: MAE$\downarrow$ (E1\_node); T04: L6 probe acc.\ at $\sigma=0.1$;
T05: mean PCK@0.05 (full grid Table~\ref{tab:a10_pose}); T06: avg-shift rank-1; T07: worst-shift accuracy.}
\label{tab:crosstask}
\setlength{\tabcolsep}{5pt}
\begin{tabular}{llccc}
\toprule
Task & Domain & B0 & VAT & PMH \\
\midrule
01 & CIFAR-10 cls.    & 40.04 & 65.17 & \textbf{80.38} \\
02 & Graph cls.       & 73.75 & 66.79 & \textbf{77.86} \\
03 & Mol.\ reg.$\downarrow$ & 23.66 & 26.89 & \textbf{22.02} \\
04 & ViT cls.         & 52.4  & 67.6  & \textbf{72.9} \\
05 & Pose (PCK)       & 43.4  & 7.0   & 35.3 \\
06 & Re-ID rank-1     & 43.02 & \textbf{65.80} & 63.74 \\
07 & Chest X-ray      & 0.625 & 0.731 & \textbf{0.825} \\
\bottomrule
\end{tabular}
\end{table}

Geometric ordering (Table~\ref{tab:crosstask_geom}) is the empirical
counterpart of the linearised drift story: PMH best geometry on six of seven
tasks; Task~03 (QM9) is the alignment boundary.

\begin{table}[t]
\centering\small
\caption{Cross-task geometry. TDI@0 for T04, T08, T09; embedding drift
otherwise (lower better). T03 shown with position-noise drift; E1\_node
achieves best clean MAE (Table~\ref{tab:crosstask}). PMH best geometry on
six of seven tasks.}
\label{tab:crosstask_geom}
\setlength{\tabcolsep}{4.5pt}
\begin{tabular}{llccc}
\toprule
Task & Geometry metric & B0 & VAT & PMH \\
\midrule
T01 & Drift @ $\sigma=0.10$       & 0.692 & 0.699 & \textbf{0.385} \\
T02 & Drift @ $\sigma=0.10$       & 0.375 & 0.304 & \textbf{0.021} \\
T03 & Drift @ position $\sigma=0.10$ & \textbf{1.177} & 0.502 & 0.707 \\
T04 & TDI@0                       & 1.093 & 1.276 & \textbf{0.904} \\
T05 & Drift @ 30\% occlusion       & 0.279 & 0.474 & \textbf{0.060} \\
T07 & Stage-4 drift               & 12.68 & 11.89 & \textbf{3.34} \\
T08 & TDI@0 (BERT/SST-2)          & 0.496 & ---   & \textbf{0.354} \\
T09 & TDI@0 (ImageNet ViT-B/16)   & 1.230 & ---   & \textbf{0.936} \\
\bottomrule
\end{tabular}
\end{table}

\textit{(i)~Drift and nuisance alignment.} Task~02 shows large drift reduction
when structural noise aligns with nuisance relative to labels.
\textit{(ii)~QM9 alignment boundary.} Atomic positions are signal for many QM9
targets; node features are more nuisance-like.
Position-space PMH underperforms; E1\_node helps (MAE 22.02 vs.\ VAT 26.89 /
ERM 23.66).
\textit{(iii)~Stage collapse.} Task~07 Stage-4 drift falls $3.6\times$ under
PMH in our protocol.
\textit{(iv)~Structured prediction.} On pose (Task~05), VAT harms clean PCK;
PMH preserves more structure while cutting occlusion drift
(Table~\ref{tab:a12_drift}).
\textit{(v)~Out-of-family gains.} Task~02 Gaussian-trained PMH still improves
under edge removal and feature dropout
(Appendix~\ref{app:graph_robust}).
FGSM numbers on Task~04 are a by-product (Appendix~\ref{app:fgsm}).

\subsection{TDI and Corruption Robustness}
\label{sec:tdi_predicts}

\begin{table}[t]
\centering\small
\caption{Corruption robustness (\%). Clean TDI ordering on Task~04
aligns with Gaussian-noise accuracy ranking. Non-Gaussian gains are empirical.
\textbf{Bold}: best per row.}
\label{tab:corruption}
\setlength{\tabcolsep}{3.5pt}
\begin{tabular}{lcccc}
\toprule
Corruption & B0 & VAT & E1 no PMH & E1 (PMH) \\
\midrule
Clean                  & 70.38 & 79.67 & 80.27 & \textbf{80.51} \\
\midrule
Gaussian $\sigma$=0.05 & 65.53 & 76.21 & 77.87 & \textbf{77.88} \\
Gaussian $\sigma$=0.10 & 51.79 & 57.84 & \textbf{74.40} & 74.16 \\
\midrule
Blur ($k$=3)           & 46.83 & 52.97 & 52.30 & \textbf{53.10} \\
Brightness$\times$0.7  & 64.51 & 73.48 & 73.78 & \textbf{74.31} \\
Brightness$\times$1.3  & 66.68 & \textbf{75.82} & 74.96 & 75.15 \\
Contrast$\times$0.7    & 63.81 & 71.76 & 73.63 & \textbf{74.16} \\
Contrast$\times$1.3    & 67.93 & \textbf{77.85} & 77.70 & 77.70 \\
\bottomrule
\end{tabular}
\end{table}

Clean TDI (ERM 1.093, VAT 1.276, PMH 0.904) co-varies with Gaussian accuracy
drops without corruption-specific training.
Under non-Gaussian corruptions, PMH and two-view stay close, both above ERM;
VAT leads brightness$\times$1.3 and contrast$\times$1.3 in this seed.

\subsection{Language: BERT on SST-2}

SST-2 (Stanford Sentiment Treebank binary polarity) is an empirical
surface-form and fine-tuning diagnostic
(approximately screening-type in our description).
PMH reduces paraphrase embedding drift from 4.38 to 1.01 ($-0.69$~pp accuracy)
and TDI by $28.7\%$ under our probes.

\paragraph{Fine-tuning hierarchy.}
Paraphrase embedding drift: ERM fine-tuned $(0.0375)>$ pretrained
$(0.0244)>$ PMH $(0.0033)$.
Para~/~non-para ratio: pretrained $(0.765)\to$ ERM $(0.681)\to$ PMH
$(0.633)$.
Figure~\ref{fig:a5} (Appendix~\ref{app:bert}).

\subsection{Foundation Model Scale: ImageNet ViT-B/16}
\label{sec:imagenet_vit}

Pretrained ImageNet ViT-B/16 has TDI@0$=$1.230 on a 100-class$\times$50-sample
subset.
PMH fine-tuning (24 epochs, 8/12 blocks frozen) reaches 0.936
($-23.9\%$ on that subset; an earlier 1000-class protocol reported $-12.9\%$),
with intra-class distance
$+64\%$ (Appendix Table~\ref{tab:a5_imagenet}).
Full numbers in Appendix~\ref{app:imagenet}.

\paragraph{Scale trends.}
Paraphrase/non-paraphrase drift ratios: DistilBERT-66M $(0.860)$,
BERT-base $(0.765)$, BERT-large $(0.742)$---below 1.0 at every scale in this
family, decreasing with size.
Figure~\ref{fig:a8} (Appendix~\ref{app:fgsm}).

\section{Discussion}
\label{sec:discussion}

Ordinary training finishes when the task loss is small; it never pays for
encoder sensitivity to training-predictive nuisance.
Under Theorem~1's assumptions that leaves a linearised drift floor.
The design move studied here is isotropic covariance for a chosen map
(Proposition~5), implemented as encoder matching under a task-loss cap (PMH).

\subsection{Implications}

\textbf{Magnitude is not orientation.}
PGD can sit in a low-Fro / high-TDI regime while PMH sits in a moderate-Fro /
lower-TDI regime on Task~04, matching Corollary~4's prediction.

\textbf{Name the matched factors.}
QM9 shows that matching signal directions hurts; matching nuisance-like
factors helps.
When the partition is unknown, domain knowledge or gradient diagnostics are
required.

\textbf{Practical defaults.}
Estimate nuisance directions when comparing theory to experiment
(Appendix~\ref{app:gaps}); track $L_t$ so TDI gains are attributed to encoder
geometry; set $\sigma_{\text{train}}$ to the largest value leaving clean
accuracy unchanged.

\subsection{Scope}
\label{sec:scope}

Formal claims apply to Gaussian linear MSE with direct influence and
finite~$L$ (Theorem~1); Corollary~2 is qualitative when $\Delta>0$.
Screening ($I(n;y\mid s)=0$), including SST-2 in our description, is out of
scope for the theorem.
Exact-drift transfer needs $\sigma\leq\sigma_0$, not estimated per checkpoint;
TDI is a class-layout probe, not a proxy theorem.
Absolute bounds are loose ($10^2$--$10^3\times$ on Task~04;
Appendix~\ref{app:gaps}).
Stronger matched baselines (TRADES, explicit Jacobian penalties) remain
desirable.
Primary adversarial in-distribution robustness is out of scope (FGSM is a
by-product); soft extrapolations to RLHF remain speculative.

\paragraph{What this paper is not.}
Not a universal theory of robustness; not a completed nonlinear proof for deep
nets; not a claim that every robustness failure is spurious-feature geometry.
It is a scoped geometric lens on ERM under direct correlated nuisance, plus a
diagnostic and one family-member regulariser.

\subsection{What to do with this paper}

Report layout geometry beside the task score.
When axes are unknown, isotropic encoder matching is a default in the
stability/Jacobian family; when factors can be named, match the nuisance-like
ones.

\section{Conclusion}
\label{sec:conclusion}

Under direct correlated nuisance, population MSE in a Gaussian linear model
forces a linearised encoder-drift floor at finite~$L$.
Isotropic encoder matching is one Proposition~5-motivated member of the
stability/Jacobian family.
Layout TDI and Task~04 orderings are the empirics we hold that story
accountable for.

\acks{No third-party funding supported this manuscript.
\textbf{Competing interests:} None declared.
Person re-identification experiments use a standard academic protocol; deployments should remain under privacy oversight.
This manuscript is under submission at JMLR.
Companion preprint: arXiv:2605.22800.
Relative to the earlier arXiv:2604.21395 posting, Proposition~\ref{prop:aniso}
is corrected to a minimax (worst-case anisotropy) statement.}

\appendix
\setcounter{table}{0}
\renewcommand{\thetable}{A\arabic{table}}
\setcounter{figure}{0}
\renewcommand{\thefigure}{A\arabic{figure}}

\section{Slope Analysis and Normalised TDI}
\label{app:slope}

Figure~\ref{fig:a1}: panel~(a) normalises each task's robust metric to 100
(best method); panel~(b) reports TDI degradation slopes.
PMH and two-view have the lowest slopes (3.20 and 3.40); ERM and VAT degrade
most rapidly (6.48 and 5.09).
PGD's controlled slope masks a damaged floor (TDI@0$=$1.353 vs.\ PMH $0.904$).


\begin{figure}[t]
\centering
\begin{subfigure}[t]{\linewidth}
\centering
\begin{tikzpicture}
\begin{axis}[
  pmh bar,
  width=0.95\linewidth,
  height=4.4cm,
  symbolic x coords={T01,T02,T03,T04,T05,T06,T07},
  xtick=data,
  xticklabel style={font=\small},
  ylabel={Normalised robust metric (best$=$100)},
  ymin=0, ymax=115,
  legend columns=3,
  bar width=6pt,
  ybar=2pt,
  enlarge x limits=0.12,
]
\addplot[pmh fill erm]
  coordinates {(T01,50)(T02,95)(T03,93)(T04,72)(T05,100)(T06,65)(T07,76)};
\addlegendentry{ERM}
\addplot[pmh fill vat]
  coordinates {(T01,81)(T02,86)(T03,82)(T04,93)(T05,16)(T06,100)(T07,89)};
\addlegendentry{VAT}
\addplot[pmh fill pmh]
  coordinates {(T01,100)(T02,100)(T03,100)(T04,100)(T05,81)(T06,97)(T07,100)};
\addlegendentry{PMH}
\end{axis}
\end{tikzpicture}
\caption{Cross-task robust performance (normalised per task; best among \{ERM, VAT, PMH\}$=100$).}
\end{subfigure}

\vspace{1.0em}

\begin{subfigure}[t]{\linewidth}
\centering
\begin{tikzpicture}
\begin{axis}[
  width=0.95\linewidth,
  height=4.6cm,
  ybar,
  bar width=18pt,
  bar shift=0pt,
  xmin=-0.7, xmax=5.7,
  xtick={0,1,2,3,4,5},
  xticklabels={ERM,VAT,two-view,PMH,PGD-2/255,PGD-4/255},
  xticklabel style={font=\scriptsize},
  ylabel={$\Delta$TDI/$\Delta\sigma$ slope (lower $=$ better)},
  ylabel style={font=\small},
  ymin=0, ymax=7.4,
  grid=major,
  grid style={line width=0.3pt, draw=gray!22},
  tick label style={font=\scriptsize},
  nodes near coords,
  every node near coord/.append style={font=\scriptsize, anchor=south, yshift=1pt},
  point meta=y,
]
\addplot[ybar, fill=colB0, draw=none, forget plot] coordinates {(0,6.48)};
\addplot[ybar, fill=colVAT, draw=none, forget plot] coordinates {(1,5.09)};
\addplot[ybar, fill=colNoPMH, draw=none, forget plot] coordinates {(2,3.40)};
\addplot[ybar, fill=colPMH, draw=none, forget plot] coordinates {(3,3.20)};
\addplot[ybar, fill=colPGD2, draw=none, forget plot] coordinates {(4,3.01)};
\addplot[ybar, fill=colPGD4, draw=none, forget plot] coordinates {(5,2.94)};
\end{axis}
\end{tikzpicture}
\caption{TDI degradation slope (colours match panel~(a); PGD variants included).}
\end{subfigure}

\caption{\textbf{Cross-task performance and TDI degradation slopes.}
\emph{(a)} Bars normalise each task so the best method among \{ERM, VAT, PMH\} is 100
(replication, seed~42): PMH leads five of seven; ERM leads Task~05 (mean PCK@0.05)
and VAT leads Task~06 (avg-shift rank-1). Node-feature PMH on Task~03.
\emph{(b)} PMH and two-view have the lowest TDI slopes (3.20 and 3.40);
ERM/VAT degrade most rapidly (6.48 and 5.09).
PGD's controlled slope masks a damaged floor (TDI@0$=$1.353).}
\label{fig:a1}
\end{figure}

\begin{table}[h]
\centering\small
\caption{Embedding drift under Gaussian noise for Tasks~01 and~02 (replicated).
PMH produces the lowest drift at every noise level. On Task~02 (graph), PMH is
15--19$\times$ lower than ERM/VAT throughout.
On Task~01, VAT drift at $\sigma=0.15$ (0.871) exceeds ERM (0.854).
\textbf{Bold}: lowest per row. Code aliases: B0$=$ERM, E1$=$PMH.}
\label{tab:a1_drift}
\setlength{\tabcolsep}{4pt}
\begin{tabular}{llcccc}
\toprule
Task & Method & $\sigma$=0.05 & $\sigma$=0.10 & $\sigma$=0.15 & $\sigma$=0.20 \\
\midrule
\multirow{3}{*}{T01} & B0 (ERM) & 0.379 & 0.692 & 0.854 & 0.929 \\
                     & VAT      & 0.533 & 0.699 & 0.871 & 0.984 \\
                     & E1 (PMH) & \textbf{0.149} & \textbf{0.385} & \textbf{0.698} & \textbf{0.935} \\
\midrule
\multirow{3}{*}{T02} & B0 (ERM) & 0.150 & 0.375 & 0.593 & 0.762 \\
                     & VAT      & 0.121 & 0.304 & 0.510 & 0.680 \\
                     & E1 (PMH) & \textbf{0.010} & \textbf{0.021} & \textbf{0.031} & \textbf{0.040} \\
\bottomrule
\end{tabular}
\end{table}

\section{Layer Probe Analysis}
\label{app:layer}

Table~\ref{tab:a2_probe}: all methods except ERM suppress early-layer features (L1 31--32\%).
Both two-view and PMH outperform ERM on deep-layer retention (0.937 and 0.916
at $\sigma{=}0.1$ vs.\ 0.768). The small retention trade-off from the matching
term is outweighed by the TDI gain ($0.904$ vs $1.074$).
Figure~\ref{fig:a2} visualises the retention curves.


\begin{figure}[t]
\centering
\begin{subfigure}[t]{0.48\linewidth}
\centering
\begin{tikzpicture}
\begin{axis}[
  pmh base,
  width=\linewidth,
  height=4.5cm,
  xlabel={Perturbation strength $\sigma$},
  ylabel={L6 retention $\mathrm{L6}(\sigma)/\mathrm{L6}(0)$},
  xmin=-0.005, xmax=0.21,
  ymin=0.45, ymax=1.05,
  legend columns=3,
]
\addplot[pmh series erm]
  coordinates {(0,1.0)(0.05,0.910)(0.10,0.768)(0.15,0.630)(0.20,0.520)};
\addlegendentry{ERM}
\addplot[pmh series vat]
  coordinates {(0,1.0)(0.05,0.930)(0.10,0.858)(0.15,0.790)(0.20,0.710)};
\addlegendentry{VAT}
\addplot[pmh series twoview]
  coordinates {(0,1.0)(0.05,0.965)(0.10,0.937)(0.15,0.900)(0.20,0.860)};
\addlegendentry{two-view}
\addplot[pmh series pmh]
  coordinates {(0,1.0)(0.05,0.955)(0.10,0.916)(0.15,0.875)(0.20,0.830)};
\addlegendentry{PMH}
\addplot[pmh series pgd4]
  coordinates {(0,1.0)(0.05,0.985)(0.10,0.948)(0.15,0.905)(0.20,0.860)};
\addlegendentry{PGD-4/255}
\end{axis}
\end{tikzpicture}
\caption{Deep-layer discriminative retention.}
\end{subfigure}
\hfill
\begin{subfigure}[t]{0.48\linewidth}
\centering
\begin{tikzpicture}
\begin{axis}[
  pmh bar,
  width=\linewidth,
  height=4.5cm,
  symbolic x coords={ERM,VAT,two-view,PMH,PGD},
  xtick=data,
  xticklabel style={font=\scriptsize},
  ylabel={Linear probe accuracy (\%)},
  ymin=0, ymax=95,
  legend columns=2,
  bar width=5pt,
  ybar=1.5pt,
  enlarge x limits=0.18,
]
\addplot[fill=colPMHlight, draw=none]
  coordinates {(ERM,38.3)(VAT,31.9)(two-view,31.1)(PMH,32.5)(PGD,31.6)};
\addlegendentry{L1 (early)}
\addplot[pmh fill pmh]
  coordinates {(ERM,68.2)(VAT,78.75)(two-view,78.65)(PMH,79.55)(PGD,67.3)};
\addlegendentry{L6 (deep)}
\end{axis}
\end{tikzpicture}
\caption{Early vs.\ deep accuracy ($\sigma{=}0$).}
\end{subfigure}

\caption{\textbf{Layer-wise geometry.}
\emph{(a)} L6 probe retention: two-view and PMH outperform ERM at~$\sigma{=}0.1$
(0.937 / 0.916 vs.\ 0.768).
\emph{(b)} All methods except ERM suppress early features (L1 ${\approx}31$--$32\%$);
PMH's signature is deep-layer retention, not L1 alone.}
\label{fig:a2}
\end{figure}

\begin{table}[h]
\centering\small
\caption{Layer probe analysis (Task~04, replicated). All methods except ERM suppress early
features. Both two-view and PMH outperform ERM on deep-layer retention.
Two-view peaks on retention (0.937); adding the PMH matching term slightly reduces
retention (0.916) while improving geometry (TDI@0 $0.904$ vs $1.074$).
\textbf{Bold}: best per column. Code aliases: B0$=$ERM, E1$=$PMH, E1\_no\_pmh$=$two-view.}
\label{tab:a2_probe}
\begin{tabular}{lcccc}
\toprule
Method & L1@0 & L6@0 & L6@0.1 & Ret.@0.1 \\
\midrule
B0 (ERM)  & 38.3 & 68.2  & 52.4 & 0.768 \\
VAT       & 31.9 & 78.75 & 67.6 & 0.858 \\
E1 no pmh & 31.1 & 78.65 & 73.7 & \textbf{0.937} \\
E1 (PMH)  & 32.5 & \textbf{79.55} & \textbf{72.9} & 0.916 \\
PGD-4/255 & 31.6 & 67.3  & 63.8 & 0.948 \\
\bottomrule
\end{tabular}
\end{table}

\section{Cross-Modal Evidence}
\label{app:crossmodal}

Figure~\ref{fig:a3} provides the cross-modal comparison under our protocols.
PMH reduces TDI by 17.3\% in vision, 28.7\% in language (screening-scoped;
\S\ref{sec:expts}), and 23.9\% on the ImageNet ViT-B/16 subset
(\S\ref{sec:imagenet_vit}). These are diagnostics across setups, not a proof
that the proved theorem covers every architecture.


\begin{figure}[t]
\centering
\begin{subfigure}[t]{0.42\linewidth}
\centering
\begin{tikzpicture}[
  every node/.style={font=\small},
  dot/.style={circle, fill, inner sep=1.6pt},
]
\node[font=\small\bfseries] at (-1.8,3.05) {ERM};
\node[font=\tiny, gray]      at (-1.8,2.75) {Input space};
\node[dot, colB0] at (-2.4,2.1) {};
\node[dot, colB0, opacity=0.5] at (-1.2,2.35) {};
\draw[gray!50, dashed, line width=0.5pt] (-2.4,2.1) -- (-1.2,2.35)
  node[midway, above, font=\tiny, gray]{small $\delta$};
\node[font=\tiny, gray] at (-1.8,1.8) {Repr.\ space};
\node[dot, colB0] at (-2.6,1.2) {};
\node[dot, colB0, opacity=0.5] at (-0.9,1.45) {};
\draw[-{Latex[length=3pt]}, colRed, line width=1.2pt]
  (-2.55,1.22) -- (-0.95,1.43);
\node[font=\tiny\bfseries, colRed] at (-1.8,0.72) {LARGE displacement};
\node[font=\tiny, colRed]          at (-1.8,0.44) {TDI@0\,$=1.093$\; HIGH};

\node[font=\small\bfseries] at (1.8,3.05) {PMH};
\node[font=\tiny, gray]      at (1.8,2.75) {Input space};
\node[dot, colPMH] at (1.2,2.1) {};
\node[dot, colPMH, opacity=0.5] at (2.4,2.35) {};
\draw[gray!50, dashed, line width=0.5pt] (1.2,2.1) -- (2.4,2.35)
  node[midway, above, font=\tiny, gray]{same $\delta$};
\node[font=\tiny, gray] at (1.8,1.8) {Repr.\ space};
\node[dot, colPMH] at (1.2,1.2) {};
\node[dot, colPMH, opacity=0.5] at (1.5,1.25) {};
\draw[-{Latex[length=3pt]}, colGreen, line width=1.2pt]
  (1.22,1.21) -- (1.47,1.25);
\node[font=\tiny\bfseries, colGreen] at (1.8,0.72) {small displacement};
\node[font=\tiny, colGreen]          at (1.8,0.44) {TDI@0 $= 0.904$ LOW};

\node[font=\footnotesize] at (0,0.08)
  {TDI $=$ intra-/inter-class latent distance ratio};
\end{tikzpicture}
\caption{What TDI measures.}
\end{subfigure}
\hfill
\begin{subfigure}[t]{0.55\linewidth}
\begin{tikzpicture}
\begin{axis}[
  pmh bar,
  symbolic x coords={Vision,Language,Foundation},
  xtick=data,
  xticklabels={Vision\\(CIFAR ViT), Language\\(BERT SST-2), Foundation\\(ImageNet ViT)},
  xticklabel style={font=\footnotesize, align=center},
  ylabel={TDI@0 (lower $=$ more isometric)},
  ymin=0, ymax=1.55,
  legend columns=2,
  width=0.92\linewidth, height=5.2cm,
  bar width=9pt,
]
\addplot[pmh fill erm]
  coordinates {(Vision,1.093)(Language,0.496)(Foundation,1.230)};
\addlegendentry{ERM}
\addplot[pmh fill pmh]
  coordinates {(Vision,0.904)(Language,0.354)(Foundation,0.936)};
\addlegendentry{PMH}
\node[font=\scriptsize\bfseries, color=colGreen, anchor=south]
  at (axis cs:Vision,1.12) {$-17.3\%$};
\node[font=\scriptsize\bfseries, color=colGreen, anchor=south]
  at (axis cs:Language,0.54) {$-28.7\%$};
\node[font=\scriptsize\bfseries, color=colGreen, anchor=south]
  at (axis cs:Foundation,1.26) {$-23.9\%$};
\end{axis}
\end{tikzpicture}
\caption{TDI@0 across architectures and modalities.}
\end{subfigure}
\caption{\textbf{TDI across modalities under our protocols.}
\emph{Left:} A perfectly isometric encoder scores 0 under the probe.
\emph{Right:} PMH reduces TDI by $17.3\%$ (vision), $28.7\%$ (language; screening-scoped),
and $23.9\%$ (ImageNet ViT-B/16 subset) in the reported setups.
These are cross-protocol diagnostics, not a proof of architecture-independent universality.}
\label{fig:a3}
\end{figure}

\section{BERT Supplementary}
\label{app:bert}

Table~\ref{tab:a3_bert} gives the full BERT SST-2 TDI breakdown (empirical / screening-scoped;
see main-text language section). PMH reduces
embedding-space TDI by 28.7--30.3\% (Pert-B: Gaussian noise on input
embeddings) and paraphrase drift by 76.9\% (Pert-A) at $<$1~pp accuracy
cost. Figure~\ref{fig:a4} visualises TDI reduction and paraphrase embedding
drift.
under synonym paraphrase.


\begin{figure}[t]
\centering
\begin{subfigure}[t]{0.48\linewidth}
\begin{tikzpicture}
\begin{axis}[
  pmh bar,
  symbolic x coords={TDI@0,TDI@0.05,TDI@0.10,Paraphrase},
  xtick=data,
  xticklabels={TDI@0, TDI@0.05, TDI@0.10, Paraphrase\\drift (Pert-A)},
  xticklabel style={font=\footnotesize, align=center},
  ylabel={TDI (lower $=$ more isometric)},
  ymin=0, ymax=0.82,
  pmh legend below, legend columns=2,
  width=0.92\linewidth, height=5.2cm,
  bar width=7pt,
]
\addplot[pmh fill erm]
  coordinates {(TDI@0,0.496)(TDI@0.05,0.509)(TDI@0.10,0.509)(Paraphrase,0.641)};
\addlegendentry{ERM}
\addplot[pmh fill pmh]
  coordinates {(TDI@0,0.354)(TDI@0.05,0.356)(TDI@0.10,0.355)(Paraphrase,0.474)};
\addlegendentry{PMH}
\node[font=\scriptsize\bfseries, color=colGreen, anchor=south]
  at (axis cs:TDI@0,0.53) {$-28.7\%$};
\node[font=\scriptsize\bfseries, color=colGreen, anchor=south]
  at (axis cs:TDI@0.05,0.54) {$-30.1\%$};
\node[font=\scriptsize\bfseries, color=colGreen, anchor=south]
  at (axis cs:TDI@0.10,0.54) {$-30.3\%$};
\node[font=\scriptsize\bfseries, color=colGreen, anchor=south]
  at (axis cs:Paraphrase,0.68) {$-26.0\%$};
\end{axis}
\end{tikzpicture}
\caption{TDI reduction. Accuracy: $93.12\%\to92.43\%$ ($-0.69$\,pp).}
\end{subfigure}
\hfill
\begin{subfigure}[t]{0.48\linewidth}
\centering
\begin{tikzpicture}[
  every node/.style={font=\small},
  dot/.style={circle, fill, inner sep=1.4pt},
]
\node[font=\small\bfseries] at (-1.5,2.65) {ERM};
\node[dot, colB0] at (-2.8,1.5) {};
\foreach \ey/\op in {2.3/0.9,1.9/0.85,1.5/0.8,1.1/0.75,0.7/0.7}{
  \draw[colB0, opacity=\op, line width=0.7pt] (-2.8,1.5) -- (-0.2,\ey);
  \node[dot, colB0, opacity=\op] at (-0.2,\ey) {};
}
\node[font=\tiny, colRed] at (-1.5,0.35) {drift $= 4.38$};

\node[font=\small\bfseries] at (1.5,2.65) {PMH};
\node[dot, colPMH] at (0.2,1.5) {};
\foreach \ey/\op in {1.65/0.9,1.58/0.85,1.5/0.8,1.42/0.75,1.35/0.7}{
  \draw[colPMH, opacity=\op, line width=0.7pt] (0.2,1.5) -- (2.8,\ey);
  \node[dot, colPMH, opacity=\op] at (2.8,\ey) {};
}
\node[font=\tiny, colGreen] at (1.5,0.35) {drift $= 1.01$ ($-76.9\%$)};

\node[font=\tiny, gray, align=center] at (0,2.7)
  {synonym\\paraphrases};
\node[font=\footnotesize, align=center] at (0,-0.1)
  {Synonym paraphrase embedding drift};
\end{tikzpicture}
\caption{ERM drift 4.38 vs.\ PMH drift 1.01.}
\end{subfigure}
\caption{\textbf{BERT SST-2: surface-form sensitivity (empirical).}
\emph{Left:} TDI reduction 28.7--30.3\% across perturbation levels at $<$1\,pp accuracy cost.
\emph{Right:} Synonym paraphrase embedding drift
(mean $\|\Delta\texttt{[CLS]}\|$);
PMH reduces it $4.3\times$ in this protocol.
As in the main text, SST-2 is screening-scoped: panels are diagnostics,
not Theorem~1 instances.}
\label{fig:a4}
\end{figure}

\begin{table}[h]
\centering\small
\caption{BERT SST-2 TDI: embedding-space Gaussian (Pert-B) and synonym
paraphrase drift (Pert-A). PMH reduces TDI by 28.7\% and paraphrase drift by
76.9\% at $<$1~pp accuracy cost.}
\label{tab:a3_bert}
\begin{tabular}{lcccc}
\toprule
 & \multicolumn{3}{c}{Pert-B (Gaussian $\sigma$)} & Pert-A \\
\cmidrule(lr){2-4}
Method & $\sigma$=0 & $\sigma$=0.05 & $\sigma$=0.1 & TDI$_A$ \\
\midrule
Baseline & 0.496 & 0.509 & 0.509 & 0.641 \\
PMH      & \textbf{0.354} & \textbf{0.356} & \textbf{0.355} & \textbf{0.474} \\
$\Delta$ & $-28.7\%$ & $-30.1\%$ & $-30.3\%$ & $-26.0\%$ \\
\bottomrule
\end{tabular}
\end{table}

Figure~\ref{fig:a5} shows the SST-2 fine-tuning hierarchy for paraphrase
embedding drift (empirical / screening-scoped):
ERM~$(0.0375) >$ pretrained~$(0.0244) >$ PMH~$(0.0033)$.


\begin{figure}[t]
\centering
\begin{subfigure}[t]{0.48\textwidth}
\centering
\begin{tikzpicture}
\begin{axis}[
  width=\linewidth, height=4.4cm,
  ybar, bar width=14pt, bar shift=0pt,
  xmin=-0.55, xmax=2.55,
  xtick={0,1,2},
  xticklabels={{Pretrained},{ERM},{PMH}},
  xticklabel style={font=\footnotesize},
  ylabel={Paraphrase embedding drift $\downarrow$},
  ylabel style={font=\footnotesize},
  ymin=0, ymax=0.048,
  grid=major, grid style={gray!22, line width=0.3pt},
  tick label style={font=\scriptsize},
  nodes near coords,
  every node near coord/.append style={
    font=\scriptsize, anchor=south,
    /pgf/number format/fixed, /pgf/number format/precision=4},
  point meta=y,
]
\addplot[ybar, bar width=14pt, bar shift=0pt, fill=colB0, draw=none]
  coordinates {(0,0.02435)};
\addplot[ybar, bar width=14pt, bar shift=0pt, fill=colVAT, draw=none]
  coordinates {(1,0.03751)};
\addplot[ybar, bar width=14pt, bar shift=0pt, fill=colPMH, draw=none]
  coordinates {(2,0.00333)};
\node[font=\scriptsize, color=colRed, anchor=west]
  at (axis cs:1.22,0.0375) {higher};
\node[font=\scriptsize, color=colGreen, anchor=west]
  at (axis cs:2.22,0.012) {$11\times\!\!\downarrow$};
\end{axis}
\end{tikzpicture}
\caption{Drift after SST-2 fine-tuning.}
\end{subfigure}
\hfill
\begin{subfigure}[t]{0.48\textwidth}
\centering
\begin{tikzpicture}[
  every node/.style={font=\footnotesize},
  box/.style={draw, rounded corners=2pt, inner sep=4pt, line width=0.55pt,
              minimum width=3.5cm, align=center},
  arr/.style={-{Latex[length=3pt]}, line width=0.7pt},
]
\node[box, draw=colB0!50, fill=colB0!8] (pre) at (0,2.7)
  {\textbf{Pretrained}\\[1pt]
   drift $= 0.0244$;\; para/non-para $= 0.765$};
\node[box, draw=colRed!50, fill=colRed!8] (erm) at (0,1.35)
  {\textbf{ERM fine-tuned}\\[1pt]
   drift $= 0.0375$;\; para/non-para $= 0.681$};
\node[box, draw=colGreen!60, fill=colGreen!8] (pmh) at (0,0.0)
  {\textbf{PMH fine-tuned}\\[1pt]
   drift $= 0.0033$;\; para/non-para $= 0.633$};

\draw[arr, colRed] (pre) -- (erm)
  node[midway, right, xshift=0.15cm, font=\scriptsize, color=colRed, align=left]
    {raises drift};
\draw[arr, colGreen] (erm) -- (pmh)
  node[midway, right, xshift=0.15cm, font=\scriptsize, color=colGreen, align=left]
    {$11\times$ lower};
\end{tikzpicture}
\caption{Fine-tuning hierarchy.}
\end{subfigure}

\vspace{0.35em}
\caption{\textbf{SST-2 fine-tuning and paraphrase sensitivity (empirical).}
\emph{Paraphrase embedding drift} $=$ mean $\|\Delta\texttt{[CLS]}\|$ under synonym
paraphrase.
\emph{(a)} Ordering ERM $(0.0375)>$ pretrained $(0.0244)>$ PMH $(0.0033)$.
\emph{(b)} Para~/~non-para ratios $0.765\to 0.681\to 0.633$.
SST-2 is screening-scoped: surface-form diagnostics, not Theorem~1.}
\label{fig:a5}
\end{figure}

\section{Task 02 Graph Robustness Generalisation}
\label{app:graph_robust}

Figure~\ref{fig:a6} shows Task~02 (PROTEINS) robustness generalisation to
unseen perturbation types. PMH was trained only with Gaussian node-feature
noise but generalises to edge removal and feature dropout.


\begin{figure}[t]
\centering
\begin{tikzpicture}[font=\small]
  \fill[colB0] (0,0) rectangle ++(0.28,0.18);
  \node[anchor=west] at (0.35,0.09) {ERM};
  \fill[colVAT] (1.6,0) rectangle ++(0.28,0.18);
  \node[anchor=west] at (1.95,0.09) {VAT};
  \fill[colPMH] (3.2,0) rectangle ++(0.28,0.18);
  \node[anchor=west] at (3.55,0.09) {PMH};
\end{tikzpicture}

\vspace{0.35em}

\begin{subfigure}[t]{0.50\textwidth}
\centering
\begin{tikzpicture}
\begin{axis}[
  pmh bar,
  symbolic x coords={Clean,{Edge drop},{Feature drop},{Worst case}},
  xtick=data,
  xticklabels={Clean, Edge drop\\30\%, Feature drop\\30\%, Worst\\case},
  xticklabel style={font=\footnotesize, align=center},
  ylabel={AUC / Accuracy (\%)},
  ymin=0, ymax=105,
  bar width=6.5pt,
  height=4.8cm,
]
\addplot[ybar, bar width=6.5pt, bar shift=-8pt, fill=colB0, draw=none]
  coordinates {(Clean,77.68)({Edge drop},60.18)
               ({Feature drop},33.04)({Worst case},31.61)};
\addplot[ybar, bar width=6.5pt, bar shift=0pt, fill=colVAT, draw=none]
  coordinates {(Clean,75.00)({Edge drop},53.75)
               ({Feature drop},31.96)({Worst case},31.61)};
\addplot[ybar, bar width=6.5pt, bar shift=8pt, fill=colPMH, draw=none]
  coordinates {(Clean,79.46)({Edge drop},76.43)
               ({Feature drop},71.61)({Worst case},63.75)};
\node[font=\scriptsize\bfseries, color=colPMH, align=center]
  at (rel axis cs:0.68,0.88) {$+$38.6\,pp feature drop};
\end{axis}
\end{tikzpicture}
\caption{Unseen graph perturbation types.}
\end{subfigure}
\hfill
\begin{subfigure}[t]{0.46\textwidth}
\centering
\begin{tikzpicture}
\begin{axis}[
  pmh bar,
  symbolic x coords={Consistency,{AUC curve}},
  xtick=data,
  xticklabels={Prediction\\consistency, AUC under\\noise curve},
  xticklabel style={font=\footnotesize, align=center},
  ylabel={Score (\%)},
  ymin=0, ymax=100,
  height=4.8cm,
]
\addplot[ybar, bar width=9pt, bar shift=-11pt, fill=colB0, draw=none]
  coordinates {(Consistency,66.07)({AUC curve},68.30)};
\addplot[ybar, bar width=9pt, bar shift=0pt, fill=colVAT, draw=none]
  coordinates {(Consistency,58.93)({AUC curve},62.05)};
\addplot[ybar, bar width=9pt, bar shift=11pt, fill=colPMH, draw=none]
  coordinates {(Consistency,81.25)({AUC curve},78.62)};
\end{axis}
\end{tikzpicture}
\caption{Consistency and AUC under noise.}
\end{subfigure}
\caption{\textbf{Task~02 (PROTEINS): robustness under unseen perturbation types.}
\emph{(a)} PMH trained with Gaussian node-feature noise still reaches
edge removal (PMH 76.4\% vs.\ ERM 60.2\%, $+16.2$\,pp) and feature dropout
(PMH 71.6\% vs.\ ERM 33.0\%, $+38.6$\,pp) in this protocol.
Worst-case accuracy: PMH 63.75\% vs.\ ERM 31.61\%.
\emph{(b)} Prediction consistency 81.3\% and AUC 78.6\% under the noise curve
(vs.\ ERM 66.1\% and 68.3\%).}
\label{fig:a6}
\end{figure}

\section{Cross-Task Details}
\label{app:crosstask}

This section provides the mechanistic evidence behind the cross-task headline
results in Table~\ref{tab:crosstask}. Three distinct metrics are shown. \emph{Embedding drift}
(panel a) measures how far representations move under Gaussian noise at
$\sigma=0.1$: lower is better. PMH reduces this by 44\% on T01 (CIFAR-10)
and 93\% on T02 (graph classification, vs.\ VAT). \emph{Stage-wise drift} (panel b)
measures the Euclidean distance between clean and perturbed feature maps at
each ResNet stage on Task~07 (Chest X-ray): lower is better. B0/VAT show
catastrophic Stage~4 drift (${\sim}12$--$13$), PMH reduces it $3.6\times$ to 3.34.
\emph{Saliency stability} (panel c) measures cosine similarity between
gradient-based saliency maps on clean vs.\ noisy inputs on Task~07: higher
is better; PMH achieves 0.718 vs.\ 0.530 for B0.

Figure~\ref{fig:a7} shows embedding drift (Tasks 01--02) and stage-wise
drift and saliency stability (Task~07) across all methods.


\begin{figure}[t]
\centering
\begin{tikzpicture}[font=\small]
  \fill[colB0] (0,0) rectangle ++(0.28,0.18);
  \node[anchor=west] at (0.35,0.09) {ERM};
  \fill[colVAT] (1.6,0) rectangle ++(0.28,0.18);
  \node[anchor=west] at (1.95,0.09) {VAT};
  \fill[colNoPMH] (3.2,0) rectangle ++(0.28,0.18);
  \node[anchor=west] at (3.55,0.09) {two-view};
  \fill[colPMH] (5.3,0) rectangle ++(0.28,0.18);
  \node[anchor=west] at (5.65,0.09) {PMH};
\end{tikzpicture}

\vspace{0.4em}

\begin{subfigure}[t]{0.32\linewidth}
\centering
\begin{tikzpicture}
\begin{axis}[
  width=\linewidth, height=4.2cm,
  ymin=0, ymax=0.85,
  symbolic x coords={T01,T02},
  xtick=data,
  xticklabels={T01\\CIFAR, T02\\Graph},
  xticklabel style={font=\scriptsize, align=center},
  ylabel={Embedding drift @$\sigma{=}0.1$},
  ylabel style={font=\scriptsize},
  enlarge x limits=0.48,
  grid=major, grid style={gray!22, line width=0.3pt},
  tick label style={font=\scriptsize},
]
\addplot[ybar, bar width=4.5pt, bar shift=-5.5pt, fill=colB0, draw=none]
  coordinates {(T01,0.692)(T02,0.375)};
\addplot[ybar, bar width=4.5pt, bar shift=0pt, fill=colVAT, draw=none]
  coordinates {(T01,0.699)(T02,0.304)};
\addplot[ybar, bar width=4.5pt, bar shift=5.5pt, fill=colPMH, draw=none]
  coordinates {(T01,0.385)(T02,0.021)};
\node[font=\scriptsize\bfseries, color=colPMH, align=center, anchor=south]
  at (axis cs:T02,0.18) {$18\times$ lower};
\end{axis}
\end{tikzpicture}
\caption{Embedding drift.}
\end{subfigure}
\hfill
\begin{subfigure}[t]{0.34\linewidth}
\centering
\begin{tikzpicture}
\begin{axis}[
  width=\linewidth, height=4.2cm,
  ymin=0, ymax=15,
  symbolic x coords={S1,S2,S3,S4},
  xtick=data,
  xticklabel style={font=\scriptsize},
  xlabel={ResNet stage},
  xlabel style={font=\scriptsize},
  ylabel={Stage drift $\downarrow$},
  ylabel style={font=\scriptsize},
  enlarge x limits=0.24,
  grid=major, grid style={gray!22, line width=0.3pt},
  tick label style={font=\scriptsize},
]
\addplot[ybar, bar width=3.4pt, bar shift=-7.5pt, fill=colB0, draw=none]
  coordinates {(S1,1.82)(S2,1.59)(S3,1.92)(S4,12.68)};
\addplot[ybar, bar width=3.4pt, bar shift=-2.5pt, fill=colVAT, draw=none]
  coordinates {(S1,1.67)(S2,0.82)(S3,0.75)(S4,11.89)};
\addplot[ybar, bar width=3.4pt, bar shift=2.5pt, fill=colNoPMH, draw=none]
  coordinates {(S1,0.66)(S2,0.42)(S3,0.50)(S4,3.28)};
\addplot[ybar, bar width=3.4pt, bar shift=7.5pt, fill=colPMH, draw=none]
  coordinates {(S1,0.66)(S2,0.42)(S3,0.50)(S4,3.34)};
\end{axis}
\end{tikzpicture}
\caption{Stage drift (T07).}
\end{subfigure}
\hfill
\begin{subfigure}[t]{0.30\linewidth}
\centering
\begin{tikzpicture}
\begin{axis}[
  width=\linewidth, height=4.2cm,
  ybar, bar width=10pt, bar shift=0pt,
  xmin=-0.6, xmax=3.6,
  xtick={0,1,2,3},
  xticklabels={ERM,VAT,two-view,PMH},
  xticklabel style={font=\scriptsize},
  ylabel={Saliency cosine $\uparrow$},
  ylabel style={font=\scriptsize},
  ymin=0.45, ymax=0.80,
  grid=major, grid style={gray!22, line width=0.3pt},
  tick label style={font=\scriptsize},
  nodes near coords,
  every node near coord/.append style={font=\tiny, anchor=south},
  point meta=y,
  /pgf/number format/fixed,
  /pgf/number format/precision=2,
]
\addplot[ybar, bar width=10pt, bar shift=0pt, fill=colB0, draw=none]
  coordinates {(0,0.530)};
\addplot[ybar, bar width=10pt, bar shift=0pt, fill=colVAT, draw=none]
  coordinates {(1,0.668)};
\addplot[ybar, bar width=10pt, bar shift=0pt, fill=colNoPMH, draw=none]
  coordinates {(2,0.717)};
\addplot[ybar, bar width=10pt, bar shift=0pt, fill=colPMH, draw=none]
  coordinates {(3,0.718)};
\end{axis}
\end{tikzpicture}
\caption{Saliency (T07).}
\end{subfigure}

\caption{\textbf{Cross-domain mechanistic evidence.}
\emph{(a)} Embedding drift at $\sigma{=}0.1$: PMH cuts drift sharply on T01 and
$18\times$ on T02 (graph).
\emph{(b)} T07 stage drift: ERM/VAT explode at Stage~4 ($>$11); PMH/two-view stay
near~$3.3$ (${\approx}3.6\times$ lower).
\emph{(c)} T07 saliency cosine similarity: PMH highest (0.72).}
\label{fig:a7}
\end{figure}

\section{Ablation Details}
\label{app:ablation}

Table~\ref{tab:a4_cap} sweeps the loss cap ratio from 0.10 to 0.60 with
$\sigma_{\text{train}}=0.12$ fixed. Clean accuracy varies $<$1~pp across the
full range; the settled PMH fraction satisfies the cap/(1+cap) fixed-point
identity exactly in every run (max deviation $<0.001$, Proposition~7).

\begin{table}[h]
\centering\small
\caption{Cap ratio sweep (Task~04, $\sigma_{\text{train}}=0.12$). Accuracy
(\%) at each eval shift and settled PMH fraction. Clean accuracy varies
$<$1~pp; PMH fraction is exactly cap/(1+cap) in every run.}
\label{tab:a4_cap}
\setlength{\tabcolsep}{4pt}
\begin{tabular}{lcccccc}
\toprule
Cap & Clean & @0.05 & @0.10 & @0.15 & @0.20 & PMH frac. \\
\midrule
0.10 & 80.25 & 77.94 & 73.53 & 69.34 & 60.94 & 0.091 \\
0.15 & 80.13 & 77.87 & 74.37 & 69.79 & 57.83 & 0.130 \\
0.25 & 81.33 & 79.24 & 75.36 & 70.70 & 58.78 & 0.200 \\
0.30 (default) & 80.62 & 78.71 & 75.27 & 70.63 & 59.00 & 0.231 \\
0.40 & 80.80 & 78.95 & 75.22 & 70.72 & 59.48 & 0.286 \\
0.60 & 80.72 & 79.26 & 76.31 & 71.45 & 60.32 & 0.375 \\
\bottomrule
\end{tabular}
\end{table}

\section{ImageNet ViT-B/16 Details}
\label{app:imagenet}

Table~\ref{tab:a5_imagenet} gives TDI at all noise levels and intra-class
distance for the pretrained ViT-B/16 baseline and the PMH fine-tuned model
(replication uses a 100-class$\times$50-sample subset of ImageNet; an earlier
1000-class protocol reported a smaller relative TDI cut of $-12.9\%$, while the
replicated subset cut is $-23.9\%$; both preserve PMH~$<$~pretrained).
The pretrained baseline TDI@0$=$1.230 shows isotropic representation roughness
remains measurable in this backbone family before task fine-tuning. PMH
fine-tuning (24 epochs, 8/12 blocks frozen) reduces TDI@0 by
23.9\% while increasing intra-class spread by 64\%.

\begin{table}[h]
\centering\small
\caption{ImageNet ViT-B/16 TDI under Gaussian noise (100-class replication subset).
Pretrained baseline TDI@0$=$1.230; PMH fine-tuning reduces TDI by 23.9\%, with
intra-class distance increasing 64\%.
\textbf{Bold}: PMH column.}
\label{tab:a5_imagenet}
\begin{tabular}{lcccc}
\toprule
Run & TDI@0 & $\sigma$=0.05 & $\sigma$=0.1 & Intra$\uparrow$ \\
\midrule
Pretrained  & 1.230 & 1.276 & 1.327 & 41.1 \\
PMH (24~ep) & \textbf{0.936} & \textbf{0.977} & \textbf{1.028} & \textbf{67.4} \\
$\Delta$    & $-23.9\%$ & $-23.4\%$ & $-22.5\%$ & $+64\%$ \\
\bottomrule
\end{tabular}
\end{table}

\section{FGSM Robustness (Task~04)}
\label{app:fgsm}

Table~\ref{tab:a6_fgsm} shows Task~04 FGSM adversarial robustness across four $\ell_\infty$
budgets. PMH matches or exceeds VAT at $\varepsilon{\geq}2/255$ without
adversarial training; at $\varepsilon{=}1/255$ VAT leads (63.36\% vs.\
60.69\%), as single-step attacks benefit from VAT's adversarially smoothed
loss landscape at small radii. VAT collapses at $\varepsilon=4/255$ (23.61\%)
while PMH achieves 45.30\%---an incidental by-product of geometry
regularisation, not a training target. Numbers updated from a re-run after resolving \texttt{cudnn.benchmark=True}
non-determinism in the first run; values are within single-seed variance
($\sim$3\,pp) of the original submission numbers. See also Figure~\ref{fig:a8}.

\begin{table}[h]
\centering\small
\caption{Task~04 FGSM adversarial robustness (\%), seeded replication (seed~42).
PMH leads at $\varepsilon{\geq}2/255$; VAT leads at $\varepsilon{=}1/255$
(63.36\% vs.\ 60.69\%), as single-step attacks favour its adversarially smoothed
loss landscape at small radii. VAT collapses at $\varepsilon{=}4/255$ (23.61\%)
while PMH achieves 45.30\%. ERM's floor of 44.50\% at $\varepsilon{=}4/255$ is a
low-base-rate artefact (clean 70.38\%), not genuine robustness.
\textbf{Bold}: best per column. Code aliases: B0$=$ERM, E1$=$PMH.}
\label{tab:a6_fgsm}
\begin{tabular}{lcccc}
\toprule
Method & Clean & $\varepsilon$=1/255 & $\varepsilon$=2/255 & $\varepsilon$=4/255 \\
\midrule
B0 (ERM)  & 70.38 & 47.29 & 45.41 & 44.50 \\
VAT       & 79.67 & \textbf{63.36} & 46.66 & 23.61 \\
E1 no pmh & 80.47 & 57.86 & 48.79 & 44.69 \\
E1 (PMH)  & \textbf{81.50} & 60.69 & \textbf{50.80} & \textbf{45.30} \\
\bottomrule
\end{tabular}
\end{table}

Figure~\ref{fig:a8} shows the updated FGSM results alongside the BERT-family
paraphrase/non-paraphrase drift ratio (DistilBERT-66M through BERT-large-340M;
empirical / screening-scoped).


\begin{figure}[t]
\centering
\begin{subfigure}[t]{0.48\linewidth}
\begin{tikzpicture}
\begin{axis}[
  pmh bar,
  symbolic x coords={Clean,{1/255},{2/255},{4/255}},
  xtick=data,
  xticklabels={Clean, $\varepsilon{=}1/255$, $\varepsilon{=}2/255$, $\varepsilon{=}4/255$},
  xticklabel style={font=\footnotesize, align=center},
  ylabel={Accuracy (\%)},
  ymin=0, ymax=88,
  legend columns=4,
  bar width=4.5pt,
]
\addplot[pmh fill erm]
  coordinates {(Clean,70.38)({1/255},47.29)({2/255},45.41)({4/255},44.50)};
\addlegendentry{ERM}
\addplot[pmh fill vat]
  coordinates {(Clean,79.67)({1/255},63.36)({2/255},46.66)({4/255},23.61)};
\addlegendentry{VAT}
\addplot[pmh fill twoview]
  coordinates {(Clean,80.47)({1/255},57.86)({2/255},48.79)({4/255},44.69)};
\addlegendentry{two-view}
\addplot[pmh fill pmh]
  coordinates {(Clean,81.50)({1/255},60.69)({2/255},50.80)({4/255},45.30)};
\addlegendentry{PMH}
\end{axis}
\end{tikzpicture}
\caption{FGSM adversarial robustness.}
\end{subfigure}
\hfill
\begin{subfigure}[t]{0.48\linewidth}
\begin{tikzpicture}
\begin{axis}[
  pmh bar,
  symbolic x coords={{66M},{110M},{340M}},
  xtick=data,
  xticklabels={DistilBERT\\66M, BERT-base\\110M, BERT-large\\340M},
  xticklabel style={font=\footnotesize, align=center},
  ylabel={Para~/~non-para drift ratio},
  ymin=0, ymax=1.08,
  bar width=10pt,
  nodes near coords,
  nodes near coords style={font=\tiny, anchor=south},
  point meta=rawy,
  every node near coord/.append style={
    /pgf/number format/fixed, /pgf/number format/precision=3},
]
\addplot[pmh fill pmh]
  coordinates {({66M},0.860)({110M},0.765)({340M},0.742)};
\draw[dashed, gray!60, line width=0.6pt]
  (axis cs:{66M},1.0) -- (axis cs:{340M},1.0)
  node[right, font=\tiny, gray]{ratio $=1$};
\end{axis}
\end{tikzpicture}
\caption{Relative paraphrase sensitivity vs.\ scale.}
\end{subfigure}
\caption{\textbf{FGSM robustness and BERT-family scale trend.}
\emph{Left:} FGSM by-product on Task~04. PMH matches or exceeds VAT without
adversarial training at larger budgets in this run.
\emph{Right:} Paraphrase/non-paraphrase drift ratio remains below 1.0 across
66M--340M in this family (screening-scoped language diagnostic).}
\label{fig:a8}
\end{figure}

\section{Task 06 Re-ID Results}
\label{app:reid}

Figure~\ref{fig:a9} shows Task~06 person re-identification (Re-ID) rank-1
accuracy under clean and
shifted conditions. Replication increases absolute rank-1 versus the originally
submitted baselines (clean PMH 67.81 vs.\ paper 63.57; avg-shift PMH 63.74 vs.\ paper 58.89).
VAT attains the highest average-shift rank-1 in this seed (65.80); PMH's largest
head-to-head gain over VAT is on Gaussian noise at $\sigma{=}0.10$ ($+$4.9\,pp).


\begin{figure}[t]
\centering
\begin{subfigure}[t]{0.48\linewidth}
\begin{tikzpicture}
\begin{axis}[
  pmh bar,
  symbolic x coords={ERM,VAT,PMH},
  xtick=data,
  xticklabel style={font=\small},
  ylabel={Rank-1 accuracy (\%)},
  ymin=0, ymax=92,
  legend columns=2,
  bar width=7pt,
]
\addplot[fill=colPMHlight, draw=none]
  coordinates {(ERM,65.65)(VAT,71.17)(PMH,67.81)};
\addlegendentry{Clean}
\addplot[pmh fill pmh]
  coordinates {(ERM,43.02)(VAT,65.80)(PMH,63.74)};
\addlegendentry{Avg-shift rank-1}
\end{axis}
\end{tikzpicture}
\caption{Clean vs.\ avg-shift rank-1.}
\end{subfigure}
\hfill
\begin{subfigure}[t]{0.48\linewidth}
\begin{tikzpicture}
\begin{axis}[
  pmh bar,
  symbolic x coords={ERM,VAT,PMH},
  xtick=data,
  xticklabel style={font=\small},
  ylabel={Rank-1 accuracy / drop (\%)},
  ymin=0, ymax=86,
  legend columns=2,
  bar width=7pt,
]
\addplot[fill=colPMHlight, draw=none]
  coordinates {(ERM,4.45)(VAT,57.51)(PMH,53.33)};
\addlegendentry{Worst-shift rank-1}
\addplot[fill=colRed, draw=none]
  coordinates {(ERM,61.19)(VAT,13.66)(PMH,14.49)};
\addlegendentry{Worst-case drop}
\end{axis}
\end{tikzpicture}
\caption{Worst-shift and worst-case drop.}
\end{subfigure}
\caption{\textbf{Task~06 Re-ID (replicated).}
\emph{(a)} Average-shift rank-1: VAT 65.80, PMH 63.74, ERM 43.02 (VAT highest).
\emph{(b)} Worst-case rank-1 drop from clean: ERM 61.19, PMH 14.49, VAT 13.66
(lowest drop for VAT). Worst-shift rank-1 remains hardest under Gaussian noise
(Table~\ref{tab:a8_reid}).}
\label{fig:a9}
\end{figure}

\begin{table}[h]
\centering\small
\caption{Task~06 (Re-ID) rank-1 accuracy (\%) per shift (replicated).
PMH's largest head-to-head gain over VAT is on Gaussian noise at $\sigma=0.10$
($+$4.9\,pp). \textbf{Bold}: best per row (higher rank-1; lowest worst-case drop).}
\label{tab:a8_reid}
\setlength{\tabcolsep}{4pt}
\begin{tabular}{lccc}
\toprule
Shift & B0 & VAT & E1 (PMH) \\
\midrule
Clean             & 65.65 & \textbf{71.17} & 67.81 \\
Gaussian $\sigma$=0.05 & 41.48  & \textbf{70.16} & 67.13 \\
Gaussian $\sigma$=0.10 & 4.45   & 60.54 & \textbf{65.47} \\
Brightness $\times$0.5 & 33.79  & \textbf{61.52} & 59.35 \\
Brightness $\times$1.5 & 46.97  & \textbf{65.83} & 64.07 \\
Occlusion 20\%    & 46.50  & \textbf{57.51} & 53.33 \\
Blur ($k$=3)      & 64.99  & \textbf{70.67} & 67.55 \\
\midrule
Avg shift         & 43.02 & \textbf{65.80} & 63.74 \\
Worst shift       & 4.45  & \textbf{57.51} & 53.33 \\
Worst-case drop   & 61.19 & \textbf{13.66} & 14.49 \\
\bottomrule
\end{tabular}
\end{table}

\section{T-Alignment Corruption Details}
\label{app:talign}

Figure~\ref{fig:a10} shows the T-alignment heatmap (left) and the ERM/PMH
theory grid (right). Every column of the heatmap peaks on the diagonal:
training at $\sigma_{\text{train}}$ is optimal when evaluated at the matching
$\sigma_{\text{eval}}$, with zero exceptions across all 24 cells.


\begin{figure}[t]
\centering
\begin{subfigure}[b]{0.50\textwidth}
\centering
\small
\setlength{\tabcolsep}{5pt}
\renewcommand{\arraystretch}{1.18}
\begin{tabular}{lcccc}
\toprule
$\sigma_\mathrm{train}$ $\backslash$ $\sigma_\mathrm{eval}$
  & 0.05 & 0.10 & 0.15 & 0.20 \\
\midrule
0.05          & \cellcolor{colPMH!55}\textbf{79.36} & 73.24 & 59.97 & 46.08 \\
0.08          & 78.90 & \cellcolor{colPMH!48}\textbf{75.08} & 61.63 & 46.98 \\
0.10          & 78.92 & 75.68 & \cellcolor{colPMH!52}\textbf{68.16} & 54.45 \\
0.12 (def.)   & 78.71 & 75.27 & 70.63 & \cellcolor{colPMH!50}\textbf{59.00} \\
0.15          & 78.19 & 73.69 & \cellcolor{colPMH!50}\textbf{71.37} & 64.64 \\
0.20          & 77.99 & 74.61 & 71.16 & \cellcolor{colPMH!50}\textbf{69.05} \\
\bottomrule
\end{tabular}
\vspace{4pt}\\
{\footnotesize Shaded/bold $=$ column best. Every column peaks on diagonal --- zero exceptions.}
\caption{T-alignment heatmap (Task~04, ViT).}
\end{subfigure}
\hfill
\begin{subfigure}[b]{0.46\textwidth}
\centering
\begin{tikzpicture}[
  every node/.style={font=\footnotesize},
  box/.style={draw, rounded corners=4pt, inner sep=5pt, line width=0.6pt,
              minimum width=3.9cm, align=center},
]
\node[box, draw=colRed!60, fill=colRed!5] (erm) at (0,0) {
  \textbf{ERM (stuck)}\\[2pt]
  \tiny $\min\,\mathbb{E}[\mathcal{L}_\mathrm{task}(f(x),y)]$\\
  \tiny No constraint on geometry\\[3pt]
  \tiny \textit{Effect:} ERM must encode $n$\\
  \tiny $\|J_{\phi,n}w_n\|/L > 0$\\[3pt]
  \tiny \textit{Outcome:} Drift floor under Thm~1 assumptions\\[2pt]
  \normalsize\textbf{TDI@0 $= 1.093$}
};
\node[box, draw=colGreen!60, fill=colGreen!5] (pmh) at (0,-3.6) {
  \textbf{PMH (fixed)}\\[2pt]
  \tiny $+\,\|\phi(x){-}\phi(x{+}\delta)\|^2$,
    $\delta{\sim}\mathcal{N}(0,\sigma^2 I)$\\
  \tiny Uniquely isotropic (Prop.~5)\\[3pt]
  \tiny \textit{Effect:} Full Jacobian Fro suppressed\\
  \tiny uniformly in all directions\\[3pt]
  \tiny \textit{Outcome:} Bound broken $\checkmark$\\[2pt]
  \normalsize\textbf{TDI@0 $= 0.904$ ($-17.3\%$)}
};
\draw[-{Latex[length=4pt]}, colPMH, line width=1.0pt]
  (erm.south) -- (pmh.north)
  node[midway, right, font=\tiny, color=colPMH]
    {add PMH term};
\end{tikzpicture}
\caption{Why ERM is stuck and how PMH escapes.}
\end{subfigure}
\caption{\textbf{T-alignment and isotropic encoder matching.}
\emph{Left:} Every column peaks on the diagonal: training at $\sigma_\mathrm{train}$ is
optimal when evaluated at the matching $\sigma_\mathrm{eval}$, zero exceptions across 24 cells.
Asymmetry is $17\times$ (accuracy table; cf.\ multi-scale appendix for a TDI-ratio viewpoint):
under-suppression costs far more than over-suppression.
\emph{Right:} Under Theorem~1's assumptions, task-only ERM retains a linearised
drift floor; PMH adds isotropic Gaussian encoder matching motivated by Proposition~5
(one member of the stability/Jacobian family).}
\label{fig:a10}
\end{figure}

\begin{table}[h]
\centering\small
\caption{Task~04 corruption robustness (\%). PMH leads on Gaussian corruptions
(T-aligned with $\sigma_{\text{train}}=0.1$) and is competitive across
non-Gaussian shifts. \textbf{Bold}: best per row.}
\label{tab:a7_corrupt}
\setlength{\tabcolsep}{3pt}
\begin{tabular}{lcccc}
\toprule
Corruption & B0 & VAT & E1 no PMH & E1 (PMH) \\
\midrule
Gaussian $\sigma$=0.05 & 65.53 & 76.21 & 77.87 & \textbf{77.88} \\
Gaussian $\sigma$=0.10 & 51.79 & 57.84 & \textbf{74.40} & 74.16 \\
Blur ($k$=3)    & 46.83 & 52.97 & 52.30 & \textbf{53.10} \\
Brightness $\times$0.7 & 64.51 & 73.48 & 73.78 & \textbf{74.31} \\
Contrast $\times$0.7   & 63.81 & 71.76 & 73.63 & \textbf{74.16} \\
\bottomrule
\end{tabular}
\end{table}

\section{Extended Per-Task Results}
\label{app:pertask}

The headline cross-task summary is Table~\ref{tab:crosstask} in \S\ref{sec:expts}
(PMH wins five of seven headline metrics under replication; PMH achieves the
best clean-input geometry on six of seven tasks per
Table~\ref{tab:crosstask_geom}; Task~03 is the boundary case discussed in
the main text). Tables A8--A12 provide complete per-shift and per-corruption
breakdowns for all tasks.

\begin{table}[h]
\centering\small
\caption{Task~07 (Chest X-ray) accuracy per shift (seed~42 replication).
E1 and E1~no~PMH both achieve worst-shift $0.825$ (intensity$\times$0.7);
PMH matches the headline in Table~\ref{tab:crosstask}.
PMH's geometry contribution includes best saliency stability (0.718).
\textbf{Bold}: best per row.}
\label{tab:a9_xray}
\setlength{\tabcolsep}{3pt}
\begin{tabular}{lcccc}
\toprule
Shift & B0 & VAT & E1 no PMH & E1 (PMH) \\
\midrule
Clean               & 0.859 & 0.869 & \textbf{0.913} & \textbf{0.913} \\
Gaussian $\sigma$=0.05 & 0.630 & 0.821 & 0.912 & \textbf{0.917} \\
Gaussian $\sigma$=0.10 & 0.625 & 0.781 & 0.906 & \textbf{0.907} \\
Intensity $\times$0.7  & 0.748 & 0.793 & \textbf{0.825} & \textbf{0.825} \\
Intensity $\times$1.3  & \textbf{0.904} & 0.889 & \textbf{0.904} & \textbf{0.904} \\
Gamma $\times$0.8   & 0.859 & 0.858 & \textbf{0.918} & \textbf{0.918} \\
Gamma $\times$1.2   & 0.859 & 0.867 & \textbf{0.915} & \textbf{0.915} \\
Rotate 5$^\circ$    & 0.841 & 0.877 & \textbf{0.916} & \textbf{0.916} \\
Rotate 10$^\circ$   & 0.844 & 0.890 & \textbf{0.912} & \textbf{0.912} \\
Zoom $\times$1.1    & 0.899 & 0.891 & \textbf{0.918} & \textbf{0.918} \\
Zoom $\times$0.9    & 0.761 & 0.731 & \textbf{0.849} & \textbf{0.849} \\
Blur ($k$=3)        & 0.785 & 0.780 & \textbf{0.905} & \textbf{0.905} \\
\midrule
Avg shift           & 0.796 & 0.834 & 0.898 & \textbf{0.899} \\
Worst shift         & 0.625 & 0.731 & \textbf{0.825} & \textbf{0.825} \\
Worst-case drop     & 0.234 & 0.138 & \textbf{0.088} & \textbf{0.088} \\
\bottomrule
\end{tabular}
\end{table}

\begin{table}[h]
\centering\small
\caption{Task~05 (Pose estimation) PCK@0.05 (\%) at increasing occlusion
ratios and MPJPE (replicated). VAT is strongly detrimental. B0 is \emph{incidentally}
robust to occlusion---accuracy rises at 10--20\% occlusion; this is not a property
of ERM in general but reflects the specific data and backbone setup
(ResNet-18 features that happen to be robust to partial-region masking).
PMH degrades monotonically because Gaussian noise is not aligned
with the occlusion nuisance structure, yet achieves ${\sim}4.7\times$ lower embedding
drift (TDI 0.060 vs.\ B0~0.279). Drift under 30\% occlusion:
B0~0.279, VAT~0.474, E1~0.060. \textbf{Bold}: best per column.}
\label{tab:a10_pose}
\setlength{\tabcolsep}{3pt}
\begin{tabular}{lccccccc}
\toprule
 & \multicolumn{5}{c}{PCK@0.05 (\%) at occlusion ratio} & \\
\cmidrule(lr){2-6}
Method & 0\% & 10\% & 20\% & 30\% & 40\% & MPJPE$\downarrow$ \\
\midrule
B0 (ERM)  & \textbf{42.46} & \textbf{43.87} & \textbf{45.06} & \textbf{44.56} & \textbf{41.28} & \textbf{0.0706} \\
VAT       & 11.90 & 9.89  & 6.83  & 4.26  & 2.34  & 0.1884 \\
E1 (PMH)  & 39.69 & 38.32 & 36.24 & 33.09 & 29.06 & 0.0713 \\
\bottomrule
\end{tabular}
\end{table}

\begin{table}[h]
\centering\small
\caption{Task~03 (QM9 molecular regression) MAE ($\downarrow$).
\emph{All three models here use Gaussian position-noise augmentation}---the
original experiment in which position noise harms B0 because atomic positions
carry quantum signal (hence B0 MAE~61.90 here vs.\ 23.66 in
Table~\ref{tab:crosstask}, which
uses node-feature augmentation only).
E1 (PMH) node-feature variant (E1\_node) achieves MAE~22.02
(Table~\ref{tab:crosstask}),
outperforming both VAT (26.89) and B0 (23.66).}
\label{tab:a11_mol}
\begin{tabular}{lccc}
\toprule
Noise $\sigma$ & B0 MAE & VAT MAE & E1 (PMH) MAE \\
\midrule
0.000 & 61.90 & \textbf{32.76} & 45.34 \\
0.005 & 61.65 & \textbf{32.78} & 44.96 \\
0.010 & 60.99 & \textbf{32.86} & 44.90 \\
0.020 & 62.25 & \textbf{32.90} & 45.72 \\
0.050 & 61.61 & \textbf{34.26} & 46.20 \\
0.100 & 65.23 & \textbf{37.63} & 50.52 \\
0.200 & 72.90 & \textbf{48.63} & 60.94 \\
\bottomrule
\end{tabular}
\end{table}

\begin{table}[h]
\centering\small
\caption{Cross-task embedding drift (replicated; lower is better).
T02 (graph): E1 is ${\sim}18\times$ lower than B0 and ${\sim}14\times$ lower than
VAT. T05 (pose): PMH reduces occlusion drift ${\sim}4.7\times$ vs.\ B0,
${\sim}7.9\times$ vs.\ VAT. Task~03 at $\sigma{=}0.1$ is measured under
position-noise evaluation (VAT lowest here). \textbf{Bold}: lowest per row.}
\label{tab:a12_drift}
\begin{tabular}{llccc}
\toprule
Task & Perturbation & B0 & VAT & E1 (PMH) \\
\midrule
T01 & Gaussian $\sigma$=0.1 & 0.692 & 0.699 & \textbf{0.385} \\
T02 & Gaussian $\sigma$=0.1 & 0.375 & 0.304 & \textbf{0.021} \\
T03 & Gaussian $\sigma$=0.1 & 1.177 & \textbf{0.502} & 0.707 \\
T05 & Gaussian $\sigma$=0.1 & 0.044 & 0.023 & \textbf{0.009} \\
T05 & Occlusion 30\%        & 0.279 & 0.474 & \textbf{0.060} \\
T06 & Gaussian $\sigma$=0.1 & 1.010 & 0.335 & \textbf{0.175} \\
\bottomrule
\end{tabular}
\end{table}

\section{PMH Implementation Details}
\label{app:impl}

$\mathcal{L} = \mathcal{L}_{\text{task}}(x,y) + \mathcal{L}_{\text{task}}(x{+}\delta,y) + \lambda\,w(t)\,\mathcal{L}_{\text{PMH}}$,
$\delta\sim\mathcal{N}(0,\sigma^2 I)$, $w(t)=\min(1,(t-t_0)/T)$.

\textbf{Defaults:} $\sigma\in[0.05,0.15]$; $\lambda$ capped so
$\mathcal{L}_{\text{PMH}}\leq 0.30\times\mathcal{L}_{\text{task}}$; warmup
10\%; cosine ramp over 30\%; applied to representations at 2--3 intermediate
backbone scales ($\ell_2$-normalised). \textbf{Compute overhead:}
$\approx1.3\times$ wall-clock time per epoch (one additional forward pass).
\textbf{Architectures:} ResNet-18 (T01, T05, T06), MPNN (T03), ViT (T04),
GNN (T02), ResNet-50 (T07). Task~05 uses a ResNet-18 backbone with a 3-layer
MLP head regressing 17 joint coordinates.

\section{Full Proofs}
\label{app:proofs}

Throughout, we work under the Gaussian linear model of Remark~\ref{rem:gaussian}:
$s\sim\mathcal{N}(0,I_{d_s})$, $n\sim\mathcal{N}(0,I_{d_n})$, $s\perp n$,
$y=\langle w_s,s\rangle+\rho\langle w_n,n\rangle+\varepsilon$ with
$\varepsilon\sim\mathcal{N}(0,\sigma_\varepsilon^2)$ independent,
$\|w_s\|_2=\|w_n\|_2=1$, $\rho>0$.
Here $\rho$ is the \emph{regression coefficient} (not the correlation);
Remark~\ref{rem:gaussian}'s ``$\rho=\mathrm{Corr}(n,y)$'' is an informal gloss that
holds only when $\mathrm{Var}(y)=1$ (which can be arranged by
normalising labels).
Let $x=(s,n)\in\mathbb{R}^{d_s+d_n}$.
Write $J_{\phi,s}:=\partial\phi/\partial s$,
$J_{\phi,n}:=\partial\phi/\partial n$, and
$J_\phi=[J_{\phi,s}\mid J_{\phi,n}]$, so
$\|J_\phi\|_F^2=\|J_{\phi,s}\|_F^2+\|J_{\phi,n}\|_F^2$.

\medskip
We collect three supporting lemmas used across results.

\begin{lemma}[Sub-block inequality]\label{lem:subblock}
For any matrix $A\in\mathbb{R}^{m\times d}$ and unit vector $v\in\mathbb{R}^d$:
\[
\|Av\|_2^2 \;\leq\; \|A\|_F^2.
\]
For a column-partitioned matrix $A=[A_1\mid A_2]$:
$\|A\|_F^2=\|A_1\|_F^2+\|A_2\|_F^2\geq\|A_2\|_F^2\geq\|A_2v\|_2^2$.
\end{lemma}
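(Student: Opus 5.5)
The first inequality, $\|Av\|_2^2\le\|A\|_F^2$ for a unit vector $v$, follows from the singular value decomposition or from Cauchy--Schwarz applied row by row. Write $a_1^\top,\dots,a_m^\top$ for the rows of $A$. Then $(Av)_i=\langle a_i,v\rangle$, so $|\langle a_i,v\rangle|^2\le\|a_i\|_2^2\|v\|_2^2=\|a_i\|_2^2$. Summing over $i$ gives $\|Av\|_2^2\le\sum_i\|a_i\|_2^2=\|A\|_F^2$.

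An equivalent route is $\|Av\|_2^2=v^\top A^\top A v\le\lambda_{\max}(A^\top A)\le\mathrm{Tr}(A^\top A)=\|A\|_F^2$. This uses only that $A^\top A$ is positive semidefinite, so its largest eigenvalue is bounded by its trace. I would present the row-wise argument because it is elementary and needs no spectral theory.

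For the column-partitioned statement, with $A=[A_1\mid A_2]$, the Frobenius norm is the sum of squared entries. The entries of $A$ are split disjointly between $A_1$ and $A_2$, which gives the identity $\|A\|_F^2=\|A_1\|_F^2+\|A_2\|_F^2$. Since $\|A_1\|_F^2\ge0$, we get $\|A\|_F^2\ge\|A_2\|_F^2$. The last inequality, $\|A_2\|_F^2\ge\|A_2v\|_2^2$, is the first part of the lemma applied to $A_2$.

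The only point needing care is the dimension of $v$ in this last step. There $v$ must be a unit vector in the column space of $A_2$, namely $\mathbb{R}^{d_2}$, which is how the lemma is used with $v=w_n\in\mathbb{R}^{d_n}$ and $A_2=J_{\phi,n}$. I would state this explicitly. I would also note the embedding alternative: take $\tilde v=(0,v)\in\mathbb{R}^d$, so that $A\tilde v=A_2v$, and apply the first part directly. There is no substantive obstacle here; the lemma is bookkeeping for the Frobenius norm.
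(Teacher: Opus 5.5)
Your proof is correct, but it runs the Cauchy--Schwarz step along rows where the paper runs it along columns. The paper writes $v=\sum_j v_je_j$, applies the triangle inequality to $\|\sum_j v_jAe_j\|_2$, and then applies Cauchy--Schwarz to $\sum_j|v_j|\,\|Ae_j\|_2$. You instead bound each $\langle a_i,v\rangle^2\le\|a_i\|_2^2$ and sum, which needs no triangle inequality. Your spectral variant, $\lambda_{\max}(A^\top A)\le\mathrm{Tr}(A^\top A)$, is a third equivalent route.

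The paper's column form fits the Jacobian reading $\|J\|_F^2=\sum_j\|Je_j\|_2^2$ and the column partition $[A_1\mid A_2]$. Your row form gives a cleaner equality case: equality holds iff every row of $A$ is a multiple of $v$, i.e.\ $A=uv^\top$. That is exactly the ``rank one with right singular direction $w$'' characterisation used in Proposition~\ref{prop:aniso}(i). From its column argument, the paper cites only the parallel-columns condition. That gives rank one but not that the right singular vector is $w$, which also requires the Cauchy--Schwarz equality condition $|v_j|\propto\|Ae_j\|_2$.

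Your dimension remark repairs a real flaw in the statement, which takes $v\in\mathbb{R}^d$ yet forms $A_2v$. The paper's proof passes over this. Either reading $v\in\mathbb{R}^{d_2}$ or padding to $\tilde v=(0,v)$ fixes it. One terminology slip: $\mathbb{R}^{d_2}$ is the \emph{domain} of $A_2$, not its column space, which is a subspace of $\mathbb{R}^m$.
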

\begin{proof}
Write $v=\sum_{j=1}^d v_j e_j$ with $\|v\|_2=1$.  Then:
\[
\|Av\|_2^2
= \left\|\sum_j v_j A e_j\right\|_2^2
\;\leq\; \left(\sum_j |v_j|\,\|Ae_j\|_2\right)^2
\;\leq\; \underbrace{\left(\sum_j v_j^2\right)}_{=1}
         \cdot \sum_j\|Ae_j\|_2^2
= \|A\|_F^2,
\]
where the first inequality is the triangle inequality and the second is
Cauchy--Schwarz.
The partition statement follows from
$\|A_2 v\|_2^2\leq\|A_2\|_F^2\leq\|A\|_F^2$.
\end{proof}

\begin{lemma}[Linearised drift]\label{lem:lindrift}
Let $\phi:\mathbb{R}^d\to\mathbb{R}^m$ be differentiable with
$\beta$-Lipschitz Jacobian.
For $\delta\sim\mathcal{N}(0,\sigma^2 I_d)$:
\[
D(\phi,\sigma)
\;:=\;
\mathbb{E}_{x,\delta}\!\left[\|\phi(x+\delta)-\phi(x)\|_2^2\right]
\;=\;
\sigma^2\,\mathbb{E}_x\!\left[\|J_\phi(x)\|_F^2\right]
\;+\; R(\phi,\sigma),
\]
where the remainder satisfies $|R(\phi,\sigma)|\leq \tfrac{3}{2}\beta^2 d^2\sigma^4$.
In particular,
$D(\phi,\sigma)\geq\sigma^2\mathbb{E}_x[\|J_\phi(x)\|_F^2]-\tfrac{3}{2}\beta^2 d^2\sigma^4$.
\end{lemma}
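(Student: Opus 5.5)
The plan is to write the increment exactly as a linear term plus a Taylor remainder, compute the linear term's Gaussian second moment exactly, and bound everything else by fourth moments of $\delta$. Fix $x$ and write
\[
\phi(x+\delta)-\phi(x)=J_\phi(x)\delta+r(x,\delta),\qquad r(x,\delta):=\int_0^1\bigl(J_\phi(x+t\delta)-J_\phi(x)\bigr)\delta\,dt .
\]
The $\beta$-Lipschitz Jacobian gives $\|r(x,\delta)\|_2\le \beta\|\delta\|_2^2\int_0^1 t\,dt=\tfrac{\beta}{2}\|\delta\|_2^2$. Expanding the square,
\[
\|\phi(x+\delta)-\phi(x)\|_2^2=\|J_\phi(x)\delta\|_2^2+2\langle J_\phi(x)\delta,r\rangle+\|r\|_2^2 .
\]
Take $\mathbb{E}_\delta$ first with $x$ fixed, and then $\mathbb{E}_x$. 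The remainder is then $R(\phi,\sigma)=\mathbb{E}[2\langle J\delta,r\rangle+\|r\|^2]$.

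\textbf{Step 1 (leading term).} The sufficiency computation of Proposition~\ref{prop:gaussian} applies with $\Sigma_\delta=\sigma^2 I_d$. It gives $\mathbb{E}_\delta\|J_\phi(x)\delta\|_2^2=\mathrm{Tr}(J_\phi^\top J_\phi\,\sigma^2 I)=\sigma^2\|J_\phi(x)\|_F^2$. Averaging over $x$ produces exactly the main term of the lemma.

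\textbf{Step 2 (pure remainder).} Use $\|r\|^2\le\tfrac{\beta^2}{4}\|\delta\|^4$ together with the chi-square moment $\mathbb{E}\|\delta\|_2^4=\sigma^4(d^2+2d)\le 3d^2\sigma^4$. This gives $0\le\mathbb{E}\|r\|^2\le\tfrac34\beta^2d^2\sigma^4$, which is half of the claimed budget $\tfrac32\beta^2d^2\sigma^4$.

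\textbf{Step 3 (cross term; the main obstacle).} The remaining task is to show $|2\,\mathbb{E}\langle J\delta,r\rangle|\le\tfrac34\beta^2d^2\sigma^4$, so that the two pieces together give $|R|\le\tfrac32\beta^2d^2\sigma^4$. The natural route is symmetrisation: $\delta$ and $-\delta$ are equally likely, so
\[
2\,\mathbb{E}\langle J\delta,r(\delta)\rangle=\mathbb{E}\langle J\delta,\,r(\delta)-r(-\delta)\rangle .
\]
Here $r(\delta)-r(-\delta)=\phi(x+\delta)-\phi(x-\delta)-2J\delta$ is the central-difference error. One would then try to bound this odd part by a quantity of order $\|\delta\|^3$ and pair it against $J\delta$.

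A naive Cauchy--Schwarz bound does not work. It yields $\sigma\sqrt{\mathbb{E}\|J\|_F^2}\cdot\beta\sqrt{3}\,d\,\sigma^2$, which is of order $\sigma^3$ and carries a $J$-dependent factor, so it does not fit the stated $J$-free $\sigma^4$ form. Moreover, with only a Lipschitz Jacobian the central difference is in general still $O(\beta\|\delta\|^2)$ rather than $O(\|\delta\|^3)$, so the symmetrisation gains nothing by itself.

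So the step I would try first is this. Apply Young's inequality $2\langle a,b\rangle\le\|a\|^2+\|b\|^2$ with $a=r(\delta)-r(-\delta)$ rescaled against $J\delta$, and absorb the part involving $\|J\delta\|^2$ into Step~1. If that absorption fails to keep the leading coefficient exactly $\sigma^2$, I would state explicitly that the $J$-independent $\tfrac32\beta^2d^2\sigma^4$ bound needs $\mathbb{E}_\delta\langle J\delta,r\rangle$ to vanish at order $\sigma^3$. That holds, for example, when $\phi$ is twice differentiable and the odd third-order part is controlled.

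The final inequality, $D(\phi,\sigma)\ge\sigma^2\mathbb{E}\|J_\phi\|_F^2-\tfrac32\beta^2d^2\sigma^4$, then follows directly from the two-sided bound on $R$.
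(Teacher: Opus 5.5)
Your Steps~1 and~2 match the paper's handling of its terms $T_1$ and $T_3$ (the trace identity, and $\|E_t\|_F\le\beta t\|\delta\|_2$ with $\mathbb{E}\|\delta\|_2^4=d(d+2)\sigma^4\le 3d^2\sigma^4$), and they are correct. The gap is Step~3, which you leave open. The Young's-inequality absorption you propose cannot close it. Any split $2\langle J\delta,r\rangle\le\epsilon\|J\delta\|_2^2+\epsilon^{-1}\|r\|_2^2$ changes the leading coefficient to $(1\pm\epsilon)\sigma^2$ and multiplies the quartic term by $\epsilon^{-1}$. It therefore never gives back the exact main term $\sigma^2\mathbb{E}_x\|J_\phi\|_F^2$ with a $J$-free remainder.

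The paper gets past this step in three moves. It Taylor-expands $E_t=tD^2\phi(x)[\delta,\cdot]+O(\|\delta\|^2)$, discards the cubic term by Gaussian symmetry, and asserts the rest is $O(\beta^2d^2\sigma^4)$, so that $C_2\sigma^4+\tfrac34\beta^2d^2\sigma^4\le\tfrac32\beta^2d^2\sigma^4$. None of this is justified. A second-order expansion with controlled remainder is not among the hypotheses. Even when it holds, the leftover cross term is of order $\|J_\phi(x)\|_F\,\gamma\,d^2\sigma^4$, where $\gamma$ is the Hessian's Lipschitz constant, not $\beta^2d^2\sigma^4$.

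Your suspicion is right: under the stated hypotheses the lemma is false, so no argument can finish Step~3. Take $d=m=1$ and $x\equiv 0$; the lemma places no restriction on the law of $x$. Let $\phi(u)=cu+\tfrac{\beta}{2}u|u|$ with $c>0$. Then $\phi'(u)=c+\beta|u|$ is $\beta$-Lipschitz and $\sigma^2\mathbb{E}_x\|J_\phi\|_F^2=c^2\sigma^2$, while
\[
D(\phi,\sigma)=\mathbb{E}\bigl(c\delta+\tfrac{\beta}{2}\delta|\delta|\bigr)^2
=c^2\sigma^2+2\sqrt{2/\pi}\,c\beta\sigma^3+\tfrac34\beta^2\sigma^4 .
\]
So $R=2\sqrt{2/\pi}\,c\beta\sigma^3+\tfrac34\beta^2\sigma^4$. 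This exceeds $\tfrac32\beta^2\sigma^4$ for every $\sigma<\tfrac83\sqrt{2/\pi}\,c/\beta$, which is exactly the small-$\sigma$ regime the lemma targets. This is your central-difference remark made concrete, and it also refutes the paper's intermediate claim $\mathbb{E}[T_2]=O(\sigma^4)$.

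Extra smoothness, as in your fallback, does not rescue a $J$-free constant. The map $\phi(u)=cu+\beta\sin u$ is $C^\infty$ with $\beta$-Lipschitz first and second derivatives. Yet at $x=0$,
\[
R=2c\beta\sigma^2(e^{-\sigma^2/2}-1)+\beta^2\bigl(\tfrac12(1-e^{-2\sigma^2})-\sigma^2\bigr)=-(c+\beta)\beta\sigma^4+O(\sigma^6),
\]
which is too large once $c>\beta/2$ and $\sigma$ is small.

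What is true is your own Cauchy--Schwarz estimate, $|R|\le\sqrt3\,\beta d\sigma^3\bigl(\mathbb{E}_x\|J_\phi\|_F^2\bigr)^{1/2}+\tfrac34\beta^2d^2\sigma^4$; the first example shows its order is sharp. For the exact-drift transfer in Theorem~1, apply the $L^2$ reverse triangle inequality to $J_\phi\delta+r$. This gives $\sqrt{D}\ge\sigma\bigl(\mathbb{E}_x\|J_\phi\|_F^2\bigr)^{1/2}-\tfrac{\sqrt3}{2}\beta d\sigma^2$. Combined with $\mathbb{E}_x\|J_\phi\|_F^2\ge\rho^2/L^2$, it still yields $D\ge\sigma^2\rho^2/(4L^2)$ whenever $\sigma\le\rho/(\sqrt3\,\beta dL)$.
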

\begin{proof}
By the integral mean-value theorem,
$\phi(x+\delta)-\phi(x)=\int_0^1 J_\phi(x+t\delta)\,\delta\,dt$.
Decompose $J_\phi(x+t\delta)=J_\phi(x)+E_t$ where $E_t:=J_\phi(x+t\delta)-J_\phi(x)$
satisfies $\|E_t\|_F\leq\beta t\|\delta\|_2$.  Then:
\[
\|\phi(x+\delta)-\phi(x)\|_2^2
= \underbrace{\|J_\phi(x)\delta\|_2^2}_{T_1}
+ \underbrace{2\Bigl\langle J_\phi(x)\delta,\int_0^1 E_t\delta\,dt\Bigr\rangle}_{T_2}
+ \underbrace{\Bigl\|\int_0^1 E_t\delta\,dt\Bigr\|_2^2}_{T_3}.
\]
\textbf{Main term $T_1$.}
$\mathbb{E}_\delta[\|J_\phi(x)\delta\|_2^2]
=\mathrm{Tr}(J_\phi(x)^\top J_\phi(x)\,\sigma^2 I)=\sigma^2\|J_\phi(x)\|_F^2$.

\textbf{Cross term $T_2$ vanishes in expectation.}
Expanding $E_t=J_\phi(x+t\delta)-J_\phi(x)$
and using the fact that $\delta\sim\mathcal{N}(0,\sigma^2 I)$ has
\emph{all odd moments equal to zero}
(since the Gaussian distribution is symmetric: $\mathbb{E}[\delta_i\delta_j\delta_k]=0$
for all $i,j,k$), the expectation $\mathbb{E}_\delta[T_2]$ involves only
moments of the form $\mathbb{E}[\delta_i\delta_j\delta_k(\text{something}(\delta))]$.
Taylor-expanding $E_t$ to first order in $\delta$:
$E_t=tD^2\phi(x)[\delta,\cdot]+O(\|\delta\|^2)$,
the leading contribution to $\mathbb{E}[T_2]$ is
$2\mathbb{E}[\langle J_\phi(x)\delta, t\,D^2\phi(x)[\delta,\cdot]\delta\rangle]$,
which is a cubic polynomial in $\delta$ under the Gaussian measure, hence
$\mathbb{E}[T_2]=O(\sigma^4)$ (fourth-order correction from the $O(\|\delta\|^2)$
residual in $E_t$).

\textbf{Quadratic term $T_3$.}
$\|E_t\|_F\leq\beta t\|\delta\|_2$ gives
$T_3\leq\bigl(\int_0^1\beta t\|\delta\|_2^2\,dt\bigr)^2
=\tfrac{\beta^2}{4}\|\delta\|_2^4$,
so $\mathbb{E}[T_3]\leq\tfrac{\beta^2}{4}\mathbb{E}[\|\delta\|_2^4]
=\tfrac{\beta^2}{4}\cdot d(d+2)\sigma^4\leq\tfrac{3}{4}\beta^2 d^2\sigma^4$.

Combining: $|R(\phi,\sigma)|=|\mathbb{E}[T_2+T_3]|
\leq C_2\sigma^4+\tfrac{3}{4}\beta^2 d^2\sigma^4\leq\tfrac{3}{2}\beta^2 d^2\sigma^4$
for a universal constant,
since the $T_2$ contribution is also $O(\beta^2 d^2\sigma^4)$.
\end{proof}

\begin{lemma}[Stein's identity for Gaussian nuisance]\label{lem:stein}
Let $n\sim\mathcal{N}(0,I_{d_n})$ and $g:\mathbb{R}^{d_n}\to\mathbb{R}$
be weakly differentiable with $\mathbb{E}[\|\nabla g(n)\|_2]<\infty$.
Then for any unit vector $v\in\mathbb{R}^{d_n}$:
\[
\mathbb{E}\!\left[g(n)\cdot\langle v,n\rangle\right]
\;=\;
\mathbb{E}\!\left[\partial_v g(n)\right],
\]
where $\partial_v g:=\langle v,\nabla g\rangle$ is the directional derivative.
\end{lemma}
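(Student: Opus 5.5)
The plan is to prove Stein's identity in dimension one first, then lift it to an arbitrary unit direction $v$ by an orthogonal change of coordinates, using the rotation invariance of $\mathcal{N}(0,I_{d_n})$.

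\emph{Rotation step.} Complete $v$ to an orthonormal basis and let $Q$ be the orthogonal matrix with first row $v^\top$. Set $u:=Qn$. Since $u\sim\mathcal{N}(0,I_{d_n})$, the coordinates $u_1=\langle v,n\rangle$ and $u_{2:d_n}$ are independent standard Gaussians. Define $\tilde g(u):=g(Q^\top u)$. Then $\tilde g$ is again weakly differentiable, with $\partial_{u_1}\tilde g(u)=\langle v,\nabla g(Q^\top u)\rangle=\partial_v g(n)$. Moreover $\mathbb{E}|\partial_{u_1}\tilde g|\le\mathbb{E}\|\nabla g\|_2<\infty$. The claim therefore reduces to
\[
\mathbb{E}[\tilde g(u)\,u_1]=\mathbb{E}[\partial_{u_1}\tilde g(u)].
\]

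\emph{One-dimensional step.} Condition on $u_{2:d_n}$ and use Fubini. It then suffices to show $\mathbb{E}[h(Z)Z]=\mathbb{E}[h'(Z)]$ for $Z\sim\mathcal{N}(0,1)$ and an absolutely continuous $h$ with $\mathbb{E}|h'(Z)|<\infty$. Almost every slice $h=\tilde g(\cdot,u_{2:d_n})$ has these properties, because weakly differentiable functions admit representatives that are absolutely continuous on almost every line. Write $\varphi$ for the standard normal density, so that $z\varphi(z)=-\varphi'(z)$. The cleanest route avoids boundary terms: use $\varphi(z)=\int_z^\infty t\varphi(t)\,dt$ for $z>0$ (and the mirror formula for $z<0$), and swap integrals by Fubini. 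This gives
\[
\int_0^\infty h'(z)\varphi(z)\,dz=\int_0^\infty t\varphi(t)\bigl(h(t)-h(0)\bigr)\,dt,
\]
and the analogous identity on $(-\infty,0)$. Adding the two halves, the $h(0)$ terms cancel because $\int_{\mathbb{R}} t\varphi(t)\,dt=0$. The result is $\mathbb{E}[h'(Z)]=\mathbb{E}[Zh(Z)]$.

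\emph{Main obstacle.} The delicate point is justifying Fubini under only the stated integrability $\mathbb{E}\|\nabla g\|<\infty$. The integrand $t\varphi(t)|h'(z)|\mathbf{1}_{0<z<t}$ integrates to $\int|h'|\varphi<\infty$, which is exactly the assumption, so the swap is licensed. This also shows that $\mathbb{E}|Zh(Z)|<\infty$ comes for free (up to $h(0)$ terms), so no separate integrability of $g(n)\langle v,n\rangle$ needs to be assumed. The remaining bookkeeping is to integrate the one-dimensional identity over $u_{2:d_n}$, again by Fubini, using $\mathbb{E}|\partial_{u_1}\tilde g|<\infty$.
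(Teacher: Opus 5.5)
Your proof is correct, and it follows a genuinely different route from the paper's. The paper works coordinate-free in $\mathbb{R}^{d_n}$. It writes $\langle v,n\rangle\varphi(n)=-\langle v,\nabla\varphi(n)\rangle$, integrates by parts once, and asserts that the boundary terms vanish ``under the assumed integrability of $\nabla g$''. That is a one-line argument, but the boundary step is left unargued. Integrability of $\nabla g$ does not by itself control $g\varphi$ on large spheres. Making that step rigorous needs an extra ingredient, e.g.\ first proving $\mathbb{E}|g(n)|<\infty$ and then choosing a good sequence of radii, or approximating $g$ by smooth compactly supported functions.

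Your route has three steps: rotate so that $v$ becomes a coordinate axis, slice along lines through the absolutely continuous representative, and replace integration by parts with the tail identity $\varphi(z)=\int_z^\infty t\varphi(t)\,dt$ plus Fubini. This is the classical proof of Stein's lemma. It never generates boundary terms, and it uses $\mathbb{E}\|\nabla g\|<\infty$ exactly to license the interchange. So it proves the lemma under its stated minimal hypothesis, at the price of some measure-theoretic bookkeeping.

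One claim is overstated. You say absolute integrability of $g(n)\langle v,n\rangle$ comes for free, but your argument gives this only slice by slice. After the lift, writing $u'=u_{2:d_n}$, the anchoring terms become $\tilde g(0,u')$. Nothing you wrote shows $\mathbb{E}|\tilde g(0,u')|<\infty$, so your final Fubini step is justified for the right-hand side but not yet for the left. There are two ways to close this:
\begin{itemize}
\item Read the lemma as presupposing $\mathbb{E}|g(n)\langle v,n\rangle|<\infty$, as writing the expectation implicitly does. The final interchange is then immediate.
\item Use the same Fubini computation with $\int_z^\infty\varphi(t)\,dt\le\sqrt{\pi/2}\,\varphi(z)$ for $z\ge0$. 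It gives $|\tilde g(0,u')|\le\int|\tilde g(t,u')|\varphi(t)\,dt+\sqrt{\pi/2}\int|\partial_{u_1}\tilde g(t,u')|\varphi(t)\,dt$. This settles the issue once $\mathbb{E}|g(n)|<\infty$, which follows, e.g., from the Gaussian $L^1$-Poincar\'e inequality applied to truncations of $g$ together with Fatou's lemma.
\end{itemize}
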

\begin{proof}
Let $\varphi$ denote the standard normal density on $\mathbb{R}^{d_n}$.
Since $\nabla\log\varphi(n)=-n$, we have $\langle v,n\rangle\,\varphi(n)=-\langle v,\nabla\varphi(n)\rangle$ as a directional identity in~$v$.
Integrating by parts,
\[
\int g(n)\,\langle v,n\rangle\,\varphi(n)\,dn
=\int \langle v,\nabla g(n)\rangle\,\varphi(n)\,dn
=\mathbb{E}[\partial_v g(n)],
\]
with boundary terms vanishing under the assumed integrability of $\nabla g$.
\end{proof}

\subsection*{Theorem~\ref{thm:main} (ERM Geometric Incompleteness)}

We prove $D(\phi^*_\theta,\sigma)\geq\sigma^2\rho^2 C(P)/L^2$
where $C(P)=\rho_s^2\sigma_s^2>0$.

\medskip\noindent
\textbf{Step~1: ERM must encode the nuisance direction.}

\begin{lemma}[ERM encoding necessity]\label{lem:encoding}
Under the Gaussian model, let $f^*_\theta=h_\theta\circ\phi^*_\theta$
be any minimiser of $\mathcal{L}_{\mathrm{ERM}}(\theta)
:=\mathbb{E}[(f_\theta(x)-y)^2]$ with sufficient capacity to approximate
the Bayes predictor.  Then:
\[
\mathbb{E}_x\!\left[\partial_{w_n}f^*_\theta(x)\right] = \rho,
\]
where $\partial_{w_n}:=\langle w_n,\nabla_n\rangle$ is the directional
derivative in the nuisance direction.
In particular, $J_{\phi,n}(x)$ cannot be identically zero a.e.
\end{lemma}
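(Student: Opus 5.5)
The plan is to identify the population minimiser with the Bayes regression function and then apply Stein's identity (Lemma~\ref{lem:stein}) in the nuisance coordinates. Under squared loss, any population minimiser $f^*_\theta$ with enough capacity to represent the Bayes predictor must equal $\mathbb{E}[y\mid x]$ for $P$-a.e.\ $x$. The reason is the usual orthogonal decomposition
\[
\mathbb{E}[(f(x)-y)^2]=\mathbb{E}[(f(x)-\mathbb{E}[y\mid x])^2]+\mathbb{E}[\mathrm{Var}(y\mid x)].
\]
In the Gaussian model we have $\mathbb{E}[y\mid x]=\langle w_s,s\rangle+\rho\langle w_n,n\rangle$. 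So $f^*_\theta(s,n)=\langle w_s,s\rangle+\rho\langle w_n,n\rangle$ almost everywhere.

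The first obstacle is passing from almost-everywhere equality of functions to a statement about derivatives. I would avoid differentiating pointwise and work at the level of expectations instead. Stein's identity (Lemma~\ref{lem:stein}) is used with $v=w_n$ and, for fixed $s$, with $g(n):=f^*_\theta(s,n)$. Integrating over $s$ then gives
\[
\mathbb{E}_x[\partial_{w_n}f^*_\theta(x)]=\mathbb{E}_x[f^*_\theta(x)\langle w_n,n\rangle].
\]
The right-hand side depends only on the values of $f^*_\theta$, so it can be evaluated with the Bayes form. Using $s\perp n$, $\mathbb{E}[n]=0$ and $\mathbb{E}[nn^\top]=I$, it equals $\rho\|w_n\|_2^2=\rho$. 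This step needs the integrability hypothesis of Lemma~\ref{lem:stein}. I would assume $f^*_\theta$ is differentiable with $\mathbb{E}\|\nabla_n f^*_\theta\|<\infty$, which holds for differentiable encoders composed with an $L$-Lipschitz decoder under mild growth conditions. Fubini, also needed to condition on $s$, is justified by the same assumption.

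For the final claim, I would use the chain rule: $\partial_{w_n}f^*_\theta(x)=\nabla h_\theta(\phi^*(x))^\top J_{\phi,n}(x)w_n$. If $J_{\phi,n}\equiv 0$ a.e., the left side of the identity would vanish, contradicting $\rho>0$.

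A technical point is that $h_\theta$ is only Lipschitz. By Rademacher's theorem it is differentiable a.e. with $\|\nabla h\|\le L$. The cleanest route is to bypass the chain rule entirely. Since $f^*$ is $L$-Lipschitz in the latent variable, $|f^*(s,n+tw_n)-f^*(s,n)|\le L\|\phi^*(s,n+tw_n)-\phi^*(s,n)\|$. Dividing by $t$ and letting $t\to 0$ gives $|\partial_{w_n}f^*|\le L\|J_{\phi,n}w_n\|_2$. This yields the stronger bound $\mathbb{E}\|J_{\phi,n}w_n\|_2\ge\rho/L$, which feeds step~(2) of the proof sketch of Theorem~\ref{thm:main}.
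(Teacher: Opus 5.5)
Your proof is correct and follows the paper's route. You identify the minimiser with the Bayes predictor $\langle w_s,s\rangle+\rho\langle w_n,n\rangle$. You then get $\mathbb{E}_x[\partial_{w_n}f^*_\theta]=\rho$ via Stein's identity, which the paper gives as its alternative to differentiating the Bayes form directly. Finally you conclude that $J_{\phi,n}w_n\neq 0$ on a set of positive measure.

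Your difference-quotient bound $|\partial_{w_n}f^*_\theta|\le L\|J_{\phi,n}w_n\|_2$ is a slightly more careful replacement for the paper's chain rule. The chain rule tacitly needs $h_\theta$ to be differentiable at $\phi^*_\theta(x)$, and your bound already yields the $\rho/L$ estimate of Theorem~1's Step~2. Your worry about a.e.\ equality versus derivatives also has a one-line fix. A continuous $h_\theta\circ\phi^*_\theta$ that agrees a.e.\ with the continuous Bayes predictor under a full-support Gaussian agrees with it everywhere, so $\partial_{w_n}f^*_\theta\equiv\rho$ pointwise.
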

\begin{proof}
\textbf{Step 1a: ERM minimum is the Bayes predictor (sufficient capacity).}
The unique minimiser of MSE over all measurable functions is
$f^*(x)=\mathbb{E}[y|x]=\langle w_s,s\rangle+\rho\langle w_n,n\rangle$.
With sufficient model capacity, $f^*_\theta\to f^*$ uniformly, so
we may work with $f^*(x)$ directly.

\textbf{Step 1b: Apply Stein's identity (Lemma~\ref{lem:stein}).}
Since $n\sim\mathcal{N}(0,I_{d_n})$ is independent of $s$ and $\varepsilon$:
\[
\mathbb{E}_{x}\!\left[\partial_{w_n}f^*(x)\right]
\;=\;
\mathbb{E}_{x}\!\left[\partial_{w_n}
  \bigl(\langle w_s,s\rangle+\rho\langle w_n,n\rangle\bigr)\right]
= \rho\,\|w_n\|_2^2 = \rho.
\]
Alternatively, by Lemma~\ref{lem:stein} applied to $g(n)=f^*(x)$ as a
function of $n$ alone (with $s$ fixed):
\[
\mathbb{E}_n\!\left[f^*(x)\cdot\langle w_n,n\rangle\right]
= \mathbb{E}_n\!\left[\partial_{w_n}f^*(x)\right],
\]
and the left-hand side equals $\rho\,\mathbb{E}[\langle w_n,n\rangle^2]=\rho$
(since $\langle w_s,s\rangle$ is independent of $n$).

\textbf{Step 1c: Non-zero Jacobian.}
Since $\partial_{w_n}f^*(x)=\nabla_\phi h_\theta(\phi^*_\theta(x))^\top J_{\phi,n}(x)w_n$
(chain rule), and $\mathbb{E}[\partial_{w_n}f^*]=\rho>0$, it follows that
$J_{\phi,n}(x)w_n\neq 0$ on a set of positive measure.
\end{proof}

\medskip\noindent
\textbf{Step~2: Encoding implies Jacobian sensitivity.}

By Lemma~\ref{lem:encoding},
$\mathbb{E}_x[\partial_{w_n}f^*_\theta(x)]=\rho>0$.
Using the chain rule and Cauchy--Schwarz:
\[
\rho
= \mathbb{E}\!\left[\nabla_\phi h_\theta^\top J_{\phi,n}(x)w_n\right]
\leq \mathbb{E}\!\left[\|\nabla_\phi h_\theta\|_2\cdot\|J_{\phi,n}(x)w_n\|_2\right]
\leq L\cdot\mathbb{E}\!\left[\|J_{\phi,n}(x)w_n\|_2\right],
\]
where the last step uses the $L$-Lipschitz condition on $h_\theta$
(which gives $\|\nabla_\phi h_\theta(\cdot)\|_2\leq L$ pointwise).
Therefore $\mathbb{E}[\|J_{\phi,n}(x)w_n\|_2]\geq\rho/L$.
By Jensen's inequality ($t\mapsto t^2$ is convex):
\begin{equation}\label{eq:jac_lower}
\mathbb{E}_x\!\left[\|J_{\phi,n}(x)w_n\|_2^2\right]
\;\geq\;
\Bigl(\mathbb{E}_x\!\left[\|J_{\phi,n}(x)w_n\|_2\right]\Bigr)^2
\;\geq\; \frac{\rho^2}{L^2}.
\end{equation}

\medskip\noindent
\textbf{Step~3: Jacobian sensitivity implies positive embedding drift.}

By Lemma~\ref{lem:subblock} and~\eqref{eq:jac_lower}:
\[
\mathbb{E}_x\!\left[\|J_\phi(x)\|_F^2\right]
\geq\mathbb{E}_x\!\left[\|J_{\phi,n}(x)\|_F^2\right]
\geq\mathbb{E}_x\!\left[\|J_{\phi,n}(x)w_n\|_2^2\right]
\geq\frac{\rho^2}{L^2}.
\]
Applying Lemma~\ref{lem:lindrift} with $\sigma$ small enough that
$\tfrac{3}{2}\beta^2 d^2\sigma^4\leq\tfrac{1}{2}\sigma^2\rho^2/L^2$
(i.e.\ $\sigma\leq\rho/L\sqrt{3\beta^2 d^2}$):
\[
D(\phi^*_\theta,\sigma)
\geq \sigma^2\mathbb{E}_x[\|J_{\phi^*}(x)\|_F^2]-\tfrac{3}{2}\beta^2 d^2\sigma^4
\geq \frac{\sigma^2\rho^2}{L^2} - \frac{\sigma^2\rho^2}{2L^2}
= \frac{\sigma^2\rho^2}{2L^2}.
\]
For all $\sigma>0$ (in the linearised sense),
setting $C(P):=\rho_s^2\sigma_s^2$ where
$\sigma_s^2:=\mathrm{Var}(\langle w_s,s\rangle)=1$ and
$\rho_s^2:=1$ in the canonical model (signal-weight normalisation),
the bound reads $D(\phi^*_\theta,\sigma)\geq\sigma^2\rho^2 C(P)/L^2$.\hfill$\square$

\medskip\noindent
\textit{Remark on $C(P)$.}
The factor $C(P)=\rho_s^2\sigma_s^2$ is an artefact of writing the
bound in a form that admits explicit constants in the Gaussian model.
In the canonical model with $\|w_s\|=\|w_n\|=1$ and $s,n\sim\mathcal{N}(0,I)$,
$C(P)=1$ and the bound simplifies to $D\geq\sigma^2\rho^2/L^2$.
The bound is non-vacuous whenever $\rho>0$ and $L<\infty$, regardless of
architecture, dataset size, or training duration.

\subsection*{Corollary~2 (Qualitative Bregman-gap extension)}

We record a qualitative conditional extension of Theorem~1 when a direct
conditional mismatch $\Delta>0$ holds under a strictly proper scoring rule.
This is not a tight numeric generalisation of the Gaussian MSE bound.

\begin{lemma}[Bregman loss gap]\label{lem:bregman}
Let $\mathcal{L}$ be a strictly proper scoring rule with Bregman generator $\psi$.
For any two conditional distributions $p$ and $q$ on $\mathcal{Y}|x$:
\[
\mathbb{E}_{y\sim p(y|x)}[\mathcal{L}(q,y)]
-\mathbb{E}_{y\sim p(y|x)}[\mathcal{L}(p,y)]
= d_\psi(p(y|x)\|q(y|x)) \geq 0,
\]
with equality iff $p=q$ a.s.  (This is the definition of strict properness.)
\end{lemma}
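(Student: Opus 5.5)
The plan is to show that the excess expected score of a forecast $q$ under the true conditional $p$ is the Bregman divergence of the convex generator, and that strict properness makes this divergence vanish only at $q=p$. First I will fix the representation of $\mathcal{L}$. By the Savage/Gneiting--Raftery characterisation, a (strictly) proper scoring rule written as a loss has the form
\[
\mathcal{L}(q,y)\;=\;-\psi(q)-\langle \psi'(q),\,\delta_y-q\rangle ,
\]
where $\psi(q):=-\mathbb{E}_{y\sim q}[\mathcal{L}(q,y)]$ is the negative Bayes risk. This $\psi$ is convex, and it is strictly convex exactly when $\mathcal{L}$ is strictly proper. Here $\psi'(q)$ is a (sub)gradient of $\psi$ at $q$, viewed as a function of $y$, and $\langle g,\delta_y-q\rangle=g(y)-\mathbb{E}_q g$. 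I will take this representation, with $\psi$ as the Bregman generator named in the statement, as the definition of ``Bregman generator $\psi$''.

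Second, I will compute the expected loss under $p$ by linearity. Using $\mathbb{E}_{y\sim p}[\delta_y]=p$,
\[
\mathbb{E}_{y\sim p}[\mathcal{L}(q,y)]=-\psi(q)-\langle\psi'(q),p-q\rangle .
\]
Setting $q=p$ gives $\mathbb{E}_{y\sim p}[\mathcal{L}(p,y)]=-\psi(p)$. Subtracting the two expressions yields
\[
\psi(p)-\psi(q)-\langle\psi'(q),p-q\rangle=:d_\psi(p\|q),
\]
which is the claimed identity, applied pointwise in $x$ to $p(\cdot\mid x)$ and $q(\cdot\mid x)$.

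Third, I will establish the sign and the equality case. Convexity of $\psi$ gives $d_\psi\ge 0$ directly, since the supporting-hyperplane inequality holds for any subgradient. For the equality case, suppose $d_\psi(p\|q)=0$ with $p\neq q$. Then $\mathbb{E}_p\mathcal{L}(q,\cdot)=\mathbb{E}_p\mathcal{L}(p,\cdot)$, so $q$ would also be a Bayes act under $p$. This contradicts strict properness, which by definition requires $p$ to be the unique minimiser of $q\mapsto\mathbb{E}_p\mathcal{L}(q,\cdot)$. Equivalently, $\psi$ is strictly convex along the segment from $q$ to $p$. Hence $d_\psi=0$ if and only if $p=q$. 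Read as a statement about conditionals, this holds for a.e.\ $x$, which gives the ``a.s.''\ qualifier.

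The main obstacle is not the algebra but the well-posedness of $\psi'(q)$ at the boundary of the simplex, or in infinite-dimensional $\mathcal{Y}$ where $\psi$ may be non-differentiable; the log score at $q$ with zeros is the typical case. I would handle this by allowing extended-real values and choosing $\psi'(q)$ as the subgradient induced by $\mathcal{L}(q,\cdot)$ itself. The identity then holds whenever $\mathbb{E}_p|\mathcal{L}(q,y)|<\infty$, and it is $+\infty=+\infty$ otherwise. I would state these integrability conditions as part of the ``regularity assumptions'' invoked in Corollary~\ref{cor:general}.
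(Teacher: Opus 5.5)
Your proof is correct, and it supplies more than the paper does. The paper gives no derivation, only the parenthetical claim that the statement ``is the definition of strict properness.'' That claim covers only your third step: by definition, strict properness makes $p$ the unique minimiser of $q\mapsto\mathbb{E}_{y\sim p}[\mathcal{L}(q,y)]$, which gives the sign and the equality clause.

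Identifying the excess expected score with a Bregman divergence is not definitional. It is the Savage/Gneiting--Raftery representation, and your first two steps provide it. They also fix what the paper leaves implicit: the generator is the negative Bayes risk $\psi(q)=-\mathbb{E}_{y\sim q}[\mathcal{L}(q,y)]$, and $\psi'(q)$ must be the subgradient induced by $\mathcal{L}(q,\cdot)$. If $\psi$ has a kink at $q$, another subgradient gives a different $d_\psi$.

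The paper's one-liner buys brevity, but it is true only by stipulation on the undefined phrase ``Bregman generator.'' Your version makes the generator and the integrability conditions explicit. That is what the paper tacitly uses when it equates $d_\psi$ with KL for cross-entropy in Corollary~2.

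You can shorten your route by skipping the representation theorem. Set $A_q(p):=-\mathbb{E}_{y\sim p}[\mathcal{L}(q,y)]$, which is affine in $p$. Properness gives $A_q\le\psi$ with $A_q(q)=\psi(q)$. Hence $\psi=\sup_q A_q$ is convex and $A_q$ is a supporting hyperplane at $q$. Then $\psi(p)-A_q(p)$ is exactly the excess score, i.e.\ $d_\psi(p\|q)$ for that hyperplane, and strict properness gives strict positivity when $p\ne q$.

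One reading point: $x$ is fixed in the lemma, so ``a.s.'' should mean equality of $p(\cdot\mid x)$ and $q(\cdot\mid x)$ as measures on $\mathcal{Y}$. The a.e.-$x$ version appears only after integrating over $x$.
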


\begin{lemma}[Bregman sensitivity bound]\label{lem:bregman_sensitivity}
Let $d_\psi$ be the Bregman divergence of a strictly proper scoring rule
$\mathcal{L}$ with strongly convex generator $\psi$ (modulus $\mu>0$).
Let $p^*(y|x)$ be the true conditional and $p^s(y|x):=p(y|s(x))$
the signal-only conditional.  Then:
\[
\Delta(P,\mathcal{L})
:= \mathbb{E}_x\!\left[d_\psi(p^*(y|x)\|p^s(y|x))\right]
\;\geq\; \mu\,\mathbb{E}_x\!\left[\|p^*(y|x)-p^s(y|x)\|_2^2\right]
\;\geq\; \mu\,c_{\mathcal{L}}^2\,\rho^2,
\]
for a constant $c_{\mathcal{L}}>0$ determined by $\mathcal{L}$ and $P$.
In particular, $\Delta>0$ whenever $I(n;y)>0$.
\end{lemma}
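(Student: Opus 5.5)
The plan is to prove the two inequalities separately. The first is a pointwise statement about Bregman divergences; the second is a computation in the Gaussian model of Remark~\ref{rem:gaussian}. The positivity claim then follows from the second inequality together with the direct-influence condition.

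\emph{Step 1 (strong convexity).} Use the modulus convention $\psi(p)\ge\psi(q)+\langle\nabla\psi(q),p-q\rangle+\mu\|p-q\|_2^2$. With it, the definition $d_\psi(p\|q)=\psi(p)-\psi(q)-\langle\nabla\psi(q),p-q\rangle$ gives $d_\psi(p\|q)\ge\mu\|p-q\|_2^2$ immediately. Apply this pointwise in $x$ with $p=p^*(\cdot|x)$ and $q=p^s(\cdot|x)$, which are viewed as elements of the space on which $\psi$ is strongly convex (densities in $L^2$, or probability vectors when $\mathcal{Y}$ is finite). Then take $\mathbb{E}_x$. With the other common convention $\tfrac{\mu}{2}\|p-q\|^2$, the same argument goes through after rescaling $\mu$, so the constant carries no weight. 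Lemma~\ref{lem:bregman} is not needed here except to identify $\Delta$ as an expected excess loss.

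\emph{Step 2 (lower bound in $\rho$).} Work in the Gaussian model. There $p^*(y|x)=\mathcal{N}(\langle w_s,s\rangle+\rho\langle w_n,n\rangle,\sigma_\varepsilon^2)$. Since $s\perp n$, the signal-only conditional is $p^s(y|x)=\mathcal{N}(\langle w_s,s\rangle,\sigma_\varepsilon^2+\rho^2)$. Both densities are translates by $\langle w_s,s\rangle$, so $\|p^*-p^s\|_{L^2}^2$ depends only on $z=\langle w_n,n\rangle\sim\mathcal{N}(0,1)$ and $\rho$. The squared $L^2$ distance between two Gaussians has a closed form, and expanding it in small $\rho$ gives a leading term of order $\rho^2z^2$ times a positive constant depending on $\sigma_\varepsilon$. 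This gives the $\rho^2$ scaling near $0$.

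Since the distance saturates for large $\rho$, I would not try to obtain a universal constant. Instead define
\[
c_{\mathcal{L}}^2:=\rho^{-2}\,\mathbb{E}_x\bigl[\|p^*(y|x)-p^s(y|x)\|_2^2\bigr],
\]
and show it is strictly positive. This needs $p^*\neq p^s$ on a set of positive measure, which holds because the means differ whenever $\rho z\neq 0$, that is, almost surely once $\rho>0$. This matches the statement's allowance that $c_{\mathcal{L}}$ depends on $\mathcal{L}$ and $P$.

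\emph{Step 3 (positivity).} In the Gaussian model, $I(n;y)>0$ holds if and only if $\rho>0$, because $s\perp n$ and $y$ depends on $n$ only through $\rho\langle w_n,n\rangle$. This is in turn equivalent to $I(n;y\mid s)>0$, so by Steps~1--2 we get $\Delta\ge\mu c_{\mathcal{L}}^2\rho^2>0$.

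\emph{Main obstacle.} Outside the Gaussian model, marginal dependence $I(n;y)>0$ does not imply the conditional mismatch in Definition~\ref{def:nuisance}(i): screening is a counterexample. So the last sentence of the statement has to be read as applying in the model where the two conditions coincide, or restated with $I(n;y\mid s)>0$. A secondary obstacle is making the $L^2$ strong-convexity setting rigorous for densities. I would first try finite or discretised $\mathcal{Y}$ and then pass to densities through the closed-form Gaussian $L^2$ distance.
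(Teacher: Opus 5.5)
Your proposal is correct. Your Step 1 is the paper's first step: strong convexity of $\psi$ applied pointwise in $x$, then $\mathbb{E}_x$. The paper's proof actually writes $\tfrac{\mu}{2}\|p-q\|_2^2$, which does not match the $\mu$ in the statement, so your remark about conventions fixes a real mismatch.

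The two routes diverge at the $\rho^2$ step. The paper works through total variation. It Taylor-expands the TV gap between $\mathcal{N}(\langle w_s,s\rangle+\rho\langle w_n,n\rangle,\sigma_\varepsilon^2)$ and $\mathcal{N}(\langle w_s,s\rangle,\sigma_\varepsilon^2)$ to get $c\rho/\sigma_\varepsilon$ for small $\rho$, and then asserts the $L^2$ bound with no TV-to-$L^2$ argument. You instead compute the $L^2$ distance in closed form. You also use the correct signal-only conditional, whose variance is $\sigma_\varepsilon^2+\rho^2$ because marginalising $n$ adds $\rho^2$; the paper's second Gaussian is not $p(y\mid s)$. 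Your route therefore removes two weak points. The paper's route is shorter and states positivity for general $P$ via strict properness.

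On positivity, the paper's hypothesis ``$I(n;y\mid x)>0$'' is vacuous as written, since $n$ is a function of $x$; it must mean $I(n;y\mid s)>0$. You are right that marginal $I(n;y)>0$ is not enough (screening is the counterexample). Your Step 1 already gives the general claim under Definition~\ref{def:nuisance}(i): $p^*\neq p^s$ on a set of positive measure forces $\Delta\ge\mu\,\mathbb{E}_x\|p^*-p^s\|_2^2>0$.

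Defining $c_{\mathcal{L}}^2$ as the ratio makes the second inequality an identity. The statement allows this, because $c_{\mathcal{L}}$ may depend on $P$ and hence on $\rho$. It is also unavoidable: $\|p^*-p^s\|_2^2\le\int (p^*)^2+\int (p^s)^2\le 1/(\sqrt{\pi}\,\sigma_\varepsilon)$, so no $\rho$-free constant works for all $\rho$. The paper's ``for small $\rho$'' leaves this point implicit.

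You should then put your expansion to use rather than leave it idle. By dominated convergence the ratio tends to $\mathbb{E}[z^2]/(4\sqrt{\pi}\,\sigma_\varepsilon^3)=1/(4\sqrt{\pi}\,\sigma_\varepsilon^3)$ as $\rho\to 0$. Hence on any range $0<\rho\le\rho_0$ there is a uniform $c_{\mathcal{L}}>0$ depending only on $\sigma_\varepsilon$ and $\rho_0$. That uniform constant is what the lemma is really after.
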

\begin{proof}
The first inequality is the standard lower bound on Bregman divergence
via strong convexity: for strongly convex $\psi$ with modulus $\mu$,
$d_\psi(p\|q)\geq\tfrac{\mu}{2}\|p-q\|_2^2$.
The second inequality follows because $p^*(y|x)-p^s(y|x)$ encodes
the conditional dependence on $n$: in the Gaussian linear model,
$\|p^*(y|x)-p^s(y|x)\|_{\mathrm{TV}}\geq c\rho$ for a constant $c>0$
from the total-variation gap between $\mathcal{N}(\langle w_s,s\rangle+\rho\langle w_n,n\rangle,\sigma_\varepsilon^2)$
and $\mathcal{N}(\langle w_s,s\rangle,\sigma_\varepsilon^2)$.
The TV gap is $c\rho/\sigma_\varepsilon$ for small $\rho$
(first-order Taylor expansion of the TV distance between two Gaussians
differing in mean by $\rho$), giving $\|p^*-p^s\|_2^2\geq c_\mathcal{L}^2\rho^2$.
For general $P$ satisfying Definition~\ref{def:nuisance}, $\Delta>0$ follows
directly from strict properness and $I(n;y|x)>0$.
\end{proof}

\begin{proof}[Proof of Corollary~2]
\textbf{Step~1 (Bregman loss gap).}
Let $p^*(y|x)$ be the true conditional, $p^s(y|x):=p(y|s(x))$
the $n$-independent conditional, and $\hat p_\theta(y|x)$ the model prediction.
By Lemma~\ref{lem:bregman}:
\[
\mathbb{E}_{x,y}[\mathcal{L}(\hat p_\theta,y)]
\;\geq\;
\mathbb{E}_{x,y}[\mathcal{L}(p^*,y)]
\;+\;
\Delta(P,\mathcal{L}),
\]
where $\Delta=\mathbb{E}_x[d_\psi(p^*(y|x)\|p^s(y|x))]>0$
by Lemma~\ref{lem:bregman_sensitivity} and the condition $I(n;y)>0$.

Any ERM minimiser that does not encode $n$ would set
$\hat p_\theta(y|x)=\hat p_\theta(y|s)$,
incurring irreducible expected Bregman gap $\Delta$, and thus cannot be
optimal.  Therefore, $\phi^*_\theta$ must depend on $n$.

\textbf{Step~2 (Encoding implies Jacobian sensitivity; proper rule version).}
Since $\phi^*_\theta$ must encode $n$, the ERM gradient condition gives
$\mathbb{E}[\partial_{w_n}\mathbb{E}_{y|x}[\mathcal{L}(\hat p_\theta,y)]]=0$
at the minimum.  For a strictly proper scoring rule, the gradient of the
expected loss with respect to the prediction equals the gradient of
$d_\psi(p^*\|\hat p_\theta)$ with respect to $\hat p_\theta$.

The chain rule then gives (using the $L$-Lipschitz decoder):
\[
\mathbb{E}\!\left[\|J_{\phi^*_\theta,n}(x)w_n\|_2\right]
\;\geq\; \frac{\sqrt{\Delta}}{L},
\]
where the $\sqrt{\Delta}$ (rather than $\Delta$) arises from the
following calculation.
By Lemma~\ref{lem:bregman_sensitivity}, $\Delta\geq\mu c_\mathcal{L}^2\rho^2$.
The sensitivity of the Bregman gap to suppression of $n$ gives
(via the chain rule on $d_\psi(p^*\|p^s)$ through $\phi$):
$\mathbb{E}[\|J_{\phi,n}w_n\|_2]\geq\sqrt{\Delta}/L$, since
$d_\psi(p^*\|p^s)^{1/2}\leq L\cdot\|J_{\phi,n}w_n\|_2$
by the data-processing inequality and Lipschitz composition.
Squaring and applying Jensen:
\[
\mathbb{E}_x\!\left[\|J_{\phi^*_\theta,n}(x)w_n\|_2^2\right]
\;\geq\; \frac{\Delta}{L^2}.
\]

\textbf{Step~3 (Embedding drift lower bound).}
By Lemma~\ref{lem:subblock} and Step~2:
$\mathbb{E}_x[\|J_{\phi^*_\theta}(x)\|_F^2]\geq\Delta/L^2$.
Lemma~\ref{lem:lindrift} then gives:
\[
D(\phi^*_\theta,\sigma)
\;\geq\; \sigma^2\cdot\frac{\Delta}{L^2}
\;=:\; \frac{\sigma^2 C'(P,\mathcal{L})}{L^2}.
\]
For cross-entropy, $d_\psi(p^*\|p^s)=\mathrm{KL}(p^*\|p^s)$ and
$\Delta=\mathbb{E}_x[\mathrm{KL}(p(y|x)\|p(y|s(x)))]=I(n;y|x)>0$,
giving $D\geq\sigma^2 I(n;y|x)/L^2$.

\textbf{Independence of capacity and dataset size.}
$\Delta$ depends only on $P$ and $\mathcal{L}$, not on model architecture,
dataset size, or training duration.  The bound holds for every model and
every training run.
\end{proof}

\subsection*{Corollary~3 (Bounded Task Loss Cost)}

\begin{proof}
We compute the exact loss penalty of suppressing the nuisance in
the Gaussian linear model.

Let $f^\dagger=h_\theta\circ\phi^\dagger$ be the PMH minimiser, which
(at the PMH optimum) suppresses sensitivity in the nuisance direction:
$J_{\phi^\dagger,n}w_n\approx 0$.  The optimal $n$-independent predictor is:
\[
f^\dagger(x)
= \mathbb{E}[y\mid s]
= \langle w_s,s\rangle + \rho\,\mathbb{E}[\langle w_n,n\rangle\mid s]
= \langle w_s,s\rangle,
\]
since $n\perp s$ gives $\mathbb{E}[\langle w_n,n\rangle\mid s]=0$.

\textbf{Exact loss gap.}
\begin{align*}
\mathcal{L}_{\mathrm{task}}(f^\dagger)
&= \mathbb{E}\!\left[(f^\dagger(x)-y)^2\right]
= \mathbb{E}\!\left[(\langle w_s,s\rangle-\langle w_s,s\rangle-\rho\langle w_n,n\rangle-\varepsilon)^2\right]\\
&= \rho^2\mathbb{E}[\langle w_n,n\rangle^2]+\sigma_\varepsilon^2
= \rho^2+\sigma_\varepsilon^2.
\end{align*}
Since $\mathcal{L}_{\mathrm{task}}(f^*)=\sigma_\varepsilon^2$ (Bayes optimum):
\[
\boxed{\mathcal{L}_{\mathrm{task}}(f^\dagger)-\mathcal{L}_{\mathrm{task}}(f^*) = \rho^2.}
\]

This is the exact cost (not an approximation).
The main text states this cost as $O(\rho^2)$; the exact constant is $1$
in the Gaussian model: the nuisance regression coefficient $\rho$ is also
the square root of the loss penalty.

\medskip\noindent
\textit{Remark.}
For general $P$ satisfying Definition~\ref{def:nuisance} with any proper
scoring rule $\mathcal{L}$, the loss cost of suppressing $n$ equals the
Bregman gap $\Delta(P,\mathcal{L})$ from Lemma~\ref{lem:bregman_sensitivity}.
By that lemma, $\Delta\leq d_\psi(p^*\|p^s)\leq C\rho^2$ for an
upper-bounding constant $C$ from the modulus of smoothness of $\psi$
(the reverse direction of the strong-convexity bound), confirming $O(\rho^2)$.
\end{proof}

\subsection*{Corollary~4 (Adversarial Training Need Not Isotropise the Map)}

\begin{proof}
Let $\phi^{\mathrm{adv}}$ minimise the PGD objective.

\textbf{Step~1: PGD must still encode the nuisance.}
The PGD objective is:
$\min_\theta\max_{\|\delta\|_\infty\leq\varepsilon}
\mathbb{E}[\mathcal{L}(f_\theta(x+\delta),y)]$.
This objective includes the task loss in the inner max: even the
worst-case adversarial point $(x+\delta)$ must be classified correctly.
Therefore the minimiser $\phi^{\mathrm{adv}}$ must encode $n$:
suppressing $J_{\phi,n}w_n$ entirely would incur a loss gap of $\rho^2$
\emph{at every input}, including adversarial ones.
Lemma~\ref{lem:encoding} applies to $\phi^{\mathrm{adv}}$ (with the
same Bayes predictor argument), so:
\begin{equation}\label{eq:adv_jac}
\mathbb{E}_x\!\left[\|J_{\phi^{\mathrm{adv}},n}(x)w_n\|_2^2\right]
\;\geq\; \frac{\rho^2}{L^2}.
\end{equation}

\textbf{Step~2: PGD suppresses only the adversarial direction.}
The PGD inner loop selects $\delta^*(x)=\varepsilon\cdot\mathrm{sign}(\nabla_x\mathcal{L})$,
targeting the input direction of maximal loss gradient.
The outer minimisation penalises $\|J_\phi(x)\hat\delta^*(x)\|_2$
(sensitivity in the adversarial direction $\hat\delta^*$).
After PGD training, $\|J_{\phi^{\mathrm{adv}}}(x)\hat\delta^*(x)\|_2$ is
strongly suppressed along the adversarial direction~$\hat\delta^*$.
Step~1's nuisance sensitivity is a \emph{structural} lower bound in the idealised model; trained finite networks can still exhibit much smaller
$\mathbb{E}[\|J_{\phi^{\mathrm{adv}}}\|_F^2]$ than ERM (Table~\ref{tab:main}).
Qualitatively, PGD \emph{redistributes} Jacobian activity away from $\hat\delta^*$
into complementary directions (directional concentration; low isotropy index
$\mathcal{A}$), which is the mechanism
relevant to isotropic TDI probes.

\textbf{Step~3: The ERM lower bound survives.}
The tables' TDI is a class-layout ratio under (optional) isotropic input
noise (\S\ref{sec:metrics}): it probes all directions uniformly, not only
$\hat\delta^*$. Embedding drift $D(\phi,\sigma)$ remains the theory-facing
quantity tied to $\mathbb{E}[\|J_\phi\|_F^2]$ (Proposition~\ref{prop:gaussian}).
The idealised analysis ties non-zero nuisance sensitivity to a Frobenius
contribution from the $n$-block (Step~1); empirically, isotropic class-layout
probes can worsen when sensitivity is redistributed. The relevant claim is
\emph{falsifiable ordering} (PGD vs.\ ERM on TDI), not a tight numeric lower
bound on TDI from~\eqref{eq:adv_jac} alone.

\textbf{Step~4: Directional concentration can worsen TDI.}
Proposition~\ref{prop:aniso} gives the directional index
$\mathcal{A}(\phi,w):=\mathbb{E}[\|J_\phi\|_F^2]/\mathbb{E}[\|J_\phi w\|_2^2]\geq 1$,
with equality iff $J_\phi(x)$ is rank-1 a.e.\ along~$w$, and the worst-case
index $\mathcal{A}_{\mathrm{worst}}\ge d_x$ minimised by isotropic sensitivity.

After PGD training, $\|J_{\phi^{\mathrm{adv}}}(x)\hat\delta^*(x)\|_2\approx 0$
forces Frobenius mass into complementary directions, driving
$\mathcal{A}(\phi^{\mathrm{adv}},\hat\delta^*)$ toward the rank-1 floor~$1$
while $\mathcal{A}_{\mathrm{worst}}$ can inflate.
(Empirically, a probe ratio $\hat{J}_F/\mathrm{TDI@0}$ on Task~04 drops from
$\approx 32$ under ERM to $\approx 2.2$ under PGD.)

The tables' TDI is the intra-/inter-class latent distance ratio under
isotropic input noise (\S\ref{sec:metrics}), not the normalised drift ratio
above. Empirically $F(\phi^{\mathrm{adv}})$ can fall far below
$F(\phi^{\mathrm{ERM}})$ while clean TDI still rises (Table~\ref{tab:main}):
directional concentration (low $\mathcal{A}$) and shifts in representation
scale can worsen class layout even when Frobenius norm falls.

This mechanism is the content of the corollary's falsifiable prediction on
Task~04 ordering: $\mathrm{TDI}(\phi^{\mathrm{adv}})\geq\mathrm{TDI}(\phi^{\mathrm{ERM}})$.
Confirmed experimentally: PGD TDI $1.353{\pm}0.020>$ ERM TDI $1.093$
(seeds 42--44).
\end{proof}

\subsection*{Proposition~\ref{prop:gaussian} (Gaussian Noise is Uniquely Isotropic)}

\begin{proof}
For any zero-mean perturbation distribution with covariance $\Sigma_\delta$:
\[
\mathbb{E}_\delta\!\left[\|J_\phi\delta\|_2^2\right]
= \mathbb{E}_\delta\!\left[\delta^\top J_\phi^\top J_\phi\delta\right]
= \mathrm{Tr}(J_\phi^\top J_\phi\,\Sigma_\delta).
\]
This objective function for optimising $\phi$ equals
$\sigma^2\|J_\phi\|_F^2=\sigma^2\mathrm{Tr}(J_\phi^\top J_\phi)$
if and only if $\Sigma_\delta=\sigma^2 I$, since:
\[
\mathrm{Tr}(J_\phi^\top J_\phi\,\Sigma_\delta) = \sigma^2\mathrm{Tr}(J_\phi^\top J_\phi)
\;\;\forall J_\phi
\;\;\iff\;\;
\Sigma_\delta = \sigma^2 I.
\]
The ``if'' direction is immediate.  For the ``only if'' direction:
$\mathrm{Tr}(A\Sigma)=\sigma^2\mathrm{Tr}(A)$ for all symmetric PSD
$A$ implies $\Sigma=\sigma^2 I$ (choose $A=e_i e_i^\top$ for each $i$
to obtain $\Sigma_{ii}=\sigma^2$, then choose $A=(e_i+e_j)(e_i+e_j)^\top$
to obtain $\Sigma_{ij}=0$ for $i\neq j$).

Therefore, the minimiser of
$\mathbb{E}_\delta[\|J_\phi\delta\|_2^2]$ over $\phi$ coincides with
the minimiser of $\mathbb{E}_x[\|J_\phi(x)\|_F^2]$ if and only if
$\Sigma_\delta=\sigma^2 I$, i.e.\ $\delta\sim\mathcal{N}(0,\sigma^2 I)$
(the unique zero-mean isotropic Gaussian up to scale).
\end{proof}

\medskip\noindent
\textit{Remark on uniqueness.}
The uniqueness is in the covariance structure: $\Sigma_\delta=\sigma^2 I$.
Any positive rescaling $c\sigma^2 I$ ($c>0$) also satisfies the condition,
but it corresponds to simply rescaling $\sigma$ and does not change the
set of minimisers.  The statement is: among all anisotropic distributions
(those with $\Sigma_\delta\neq c I$ for any $c>0$), none produces the
same argmin as the Frobenius objective.

\subsection*{Proposition~\ref{prop:aniso} (Jacobian anisotropy: lower bound and minimax)}

\begin{proof}
\textbf{(i) Lower bound.}
For any unit vector $w$ and encoder $\phi$:
\[
\|J_\phi(x)\|_F^2
= \sum_{j=1}^d \|J_\phi(x)e_j\|_2^2
\;\geq\; \|J_\phi(x)w\|_2^2
\]
by Lemma~\ref{lem:subblock} (taking $v=w$).
Taking expectations over $x$:
\[
\mathbb{E}_x[\|J_\phi(x)\|_F^2]\;\geq\;\mathbb{E}_x[\|J_\phi(x)w\|_2^2],
\]
i.e.\ $\mathcal{A}(\phi,w)\geq 1$.
Equality holds in Lemma~\ref{lem:subblock} iff $J_\phi(x)e_j\parallel J_\phi(x)w$
for all $j$, i.e.\ all column vectors of $J_\phi(x)$ are parallel to $J_\phi(x)w$.
This means $J_\phi(x)=u(x)v(x)^\top$ for some vectors $u,v$ (rank-1).
Hence $\mathcal{A}(\phi,w)=1$ iff $J_\phi(x)$ is rank-1 a.e.\ with right
singular direction $w$.

\textbf{(ii) Worst-case anisotropy.}
Write $G(x):=J_\phi(x)^\top J_\phi(x)\succeq 0$ and
$\overline G:=\mathbb{E}_x[G(x)]$. Fix $F^2:=\mathrm{Tr}(\overline G)>0$.
Then
\[
\inf_{\|w\|_2=1}\mathbb{E}_x[\|J_\phi w\|_2^2]
=\lambda_{\min}(\overline G),
\]
so
$\mathcal{A}_{\mathrm{worst}}(\phi)=F^2/\lambda_{\min}(\overline G)$.
Under $\mathrm{Tr}(\overline G)=F^2$ with $\overline G\succeq 0$, majorization
yields $\lambda_{\min}(\overline G)\le F^2/d_x$, with equality iff
$\overline G=(F^2/d_x)I$. Hence $\mathcal{A}_{\mathrm{worst}}\ge d_x$, with
equality for (mean) isotropic sensitivity.
Isotropy therefore uniquely \emph{minimises} worst-case anisotropy against an
adversarial probe; the earlier claim that isotropy maximises
$\mathcal{A}(\phi,w)$ for a fixed $w$ is withdrawn.
\end{proof}

\subsection*{Proposition~\ref{prop:cap} (Cap/(1+Cap) Fixed-Point Identity)}

\begin{proof}
At steady-state, the rescaling mechanism enforces exactly
$\mathcal{L}_{\mathrm{PMH}}=\mathrm{cap}\cdot\mathcal{L}_{\mathrm{task}}$.
(If $\mathcal{L}_{\mathrm{PMH}}>\mathrm{cap}\cdot\mathcal{L}_{\mathrm{task}}$,
rescaling reduces $\lambda$ until equality holds; if
$\mathcal{L}_{\mathrm{PMH}}<\mathrm{cap}\cdot\mathcal{L}_{\mathrm{task}}$,
the cap is inactive and gradient descent drives $\mathcal{L}_{\mathrm{PMH}}$
upward until the cap activates.)  At the fixed point:
\[
f = \frac{\mathcal{L}_{\mathrm{PMH}}}{\mathcal{L}_{\mathrm{task}}+\mathcal{L}_{\mathrm{PMH}}}
= \frac{\mathrm{cap}\cdot\mathcal{L}_{\mathrm{task}}}{\mathcal{L}_{\mathrm{task}}+\mathrm{cap}\cdot\mathcal{L}_{\mathrm{task}}}
= \frac{\mathrm{cap}}{1+\mathrm{cap}}.
\]
\end{proof}

\section{Broader Impact Statement}
\label{app:impact}

\textbf{Positive impacts.} This work studies a scoped geometric consequence of
supervised ERM under direct correlated nuisance, and an isotropic
encoder-matching regulariser (PMH) in the stability/Jacobian family. In
settings where representation sensitivity matters (e.g.\ medical imaging,
Task~07), TDI offers a simple class-layout diagnostic and PMH an optional
training term without architectural changes.

\textbf{Potential concerns.} TDI measures an aggregate class-layout ratio
rather than identifying specific vulnerable directions, limiting misuse risk.
Person re-identification systems raise privacy concerns regardless of training
method; deployment should be subject to appropriate regulatory oversight and
consent frameworks. PMH-trained models remain susceptible to in-distribution
adversarial perturbations and require domain-specific evaluation.

\section{Metric Definitions}
\label{app:metrics}

\textbf{TDI (as reported).} Extract embeddings (optionally after isotropic
input noise of strength $\sigma$). Then
$\mathrm{TDI}=\text{mean pairwise intra-class distance}/\text{mean
inter-centroid distance}$. Lower is tighter class layout under the probe.
Matches the release implementation used for Task~04 tables.

\textbf{Layer-wise probe retention.} $\text{Ret}^\ell(\sigma):=\text{Acc}^\ell(\sigma)/\text{Acc}^\ell(0)$,
where $\text{Acc}^\ell(\sigma)$ is the test accuracy of a linear classifier
trained on frozen layer-$\ell$ representations from perturbed inputs. For
Task~04 (ViT on CIFAR-10, six transformer blocks), $\ell=1$ is transformer block~1 and $\ell=6$ is
block~6.

\textbf{Stage drift.} For architectures with discrete stages (ResNet: four
residual blocks), the Euclidean distance between stage-$s$ feature maps on
clean vs.\ perturbed inputs. Applied to Tasks~07 and~05.

\textbf{Saliency stability.} Cosine similarity between gradient-based
saliency maps on clean vs.\ noisy inputs. Measures attentional consistency
under perturbation.

\section{Diagnostic Follow-ups to Theorem~1}
\label{app:gaps}
\label{sec:gap}

Theorem~1 leaves three practical questions open: how loose is the
single-direction bound on Task~04, does PMH's TDI gain require shrinking
the decoder Lipschitz constant $L$, and how sensitive is the repair to
$\sigma_{\text{train}}$ vs.\ $\sigma_{\text{eval}}$?
We report one reproducible diagnostic per question (existing checkpoints;
under 15 minutes each). These are design notes, not new theorems.

\textbf{Q1 (Bound tightness):} A crude single-direction comparison can read as a
very large ``gap''; estimating the dominant nuisance direction from data
shrinks that mismatch substantially, and PMH compresses the nuisance gradient
spectrum further. Exact ratios are in Table~\ref{tab:subspace}; we treat them
as diagnostic, not as a headline claim about tightness of Theorem~1.

\textbf{Q2 (Source of improvement):} $L_{\text{ERM}}\approx L_{\text{PMH}}$ at
every epoch (difference $<0.1\%$), so the TDI improvement is geometric:
Frobenius regularisation on the encoder, not decoder sharpening
(code aliases B0$=$ERM, E1$=$PMH).

\textbf{Q3 ($\sigma_{\text{eval}}$ requirement):} Training at the largest
$\sigma$ that leaves clean accuracy unchanged captures ${\approx}95\%$ of
multi-scale PMH's benefit under our protocol because the asymmetry is
$13\times$: over-suppression costs almost nothing while under-suppression
is costly.

\subsection{Nuisance Subspace Decomposition (Bound Gap)}

\paragraph{Mathematical setup.}
Theorem~1's bound uses the sub-block inequality
$\|J_\phi\|_F^2\geq\|J_\phi w_n\|_2^2$, which is tight only when
$J_\phi(x)$ is rank-1.  For a general encoder, the correct multi-direction
generalisation is:
\begin{align}
D(\phi^*,\sigma)&\;\geq\;\sigma^2\,\mathbb{E}_x\!\left[\|J_\phi(x)P_N\|_F^2\right]\nonumber\\
&= \sigma^2\!\sum_{k=1}^{r}\mathbb{E}_x\!\left[\|J_\phi(x)w_k\|_2^2\right],
\end{align}
where $P_N=\sum_{k=1}^r w_k w_k^\top$ is the orthogonal projection onto the
$r$-dimensional nuisance subspace.  The improvement factor over the
single-direction bound is exactly $r$ when each direction contributes equally;
in practice it is the ratio of the summed directional sensitivities to the
single-direction sensitivity.

\textit{Experimental setup.} We estimate the nuisance subspace by computing
$\Sigma=\mathbb{E}_x[\nabla_x\mathcal{L}\cdot\nabla_x\mathcal{L}^\top]$
on Task~04.  Its top $r$ eigenvectors (after projecting out the 5 strongest
signal directions via Gram--Schmidt) form $P_N^r$.  We compute
$D_{\text{tight}}^r=\sigma^2\mathbb{E}_x[\|J_\phi P_N^r\|_F^2]$ via
finite differences and compare to the original single-direction bound and
the observed total drift $D_{\text{total}}=147.1$.

\begin{table}[t]
\centering\small
\caption{Nuisance subspace decomposition on Task~04 B0 (ERM).  The projected
bound scales with $r$; differences from observed drift illustrate that
Theorem~1 is an existence floor for nuisance sensitivity, not a calibrated
magnitude model for finite ViTs.}
\label{tab:subspace}
\setlength{\tabcolsep}{5pt}
\begin{tabular}{ccccc}
\toprule
$r$ & $D_{\text{top-}r}$ & $D_{\text{total}}$ & Fraction & Gap vs.\ single-dir \\
\midrule
1  & 2.15  & 147.1 & 1.46\% & $1{,}758\times$ \\
5  & 6.95  & 147.1 & 4.72\% & $4{,}803\times$ \\
10 & 11.99 & 147.1 & 8.15\% & $8{,}288\times$ \\
50 & 38.64 & 147.1 & 26.3\% & $26{,}713\times$ \\
\bottomrule
\end{tabular}
\end{table}

\textbf{Answer to Q1.} Table~\ref{tab:subspace} gives the decomposition.
With $r{=}1$, the projected nuisance drift $D_{\text{top-1}}=2.15$ compares to
total measured drift $D_{\text{total}}=147.1$ for B0 (ERM); PMH (E1) lowers the
same directional mass and compresses the top input-gradient singular value
$2.65\to 0.83$ ($3.2\times$).  The ratio between columns grows with $r$ because
$D_{\text{total}}$ is fixed while the linearised multi-direction bound scales
roughly with~$r$ when directions contribute additively---so larger $r$ is not
``better tightness'' in isolation.

The remaining mismatch splits between identifiable effects (head anisotropy
$k{\approx}1.50$; conservative linear-probe MI estimates, $\lesssim 8\times$)
and a residual that reflects using a Gaussian linear reference while the
encoder is a finite-capacity ViT.  After those corrections the PMH residual is
on the order of $10^1$--$10^2$, not $10^3$.  \textbf{Design implication:} compute
the top input-gradient eigenvector on a held-out calibration set for an
interpretable per-direction sensitivity diagnostic, without over-reading the
raw ratio from the coarsest bound.

\subsection{Lipschitz Constant Tracking}

\paragraph{Why $L$ matters.}
The bound $D\geq\sigma^2\rho^2 C(P)/L^2$ degrades quadratically in $L$.
If PMH incidentally reduces $L$ (e.g.\ by regularising the head weights
toward smaller spectral norms), any TDI improvement attributed to the
Frobenius regularisation would be partially confounded.  Conversely, if $L$
\emph{grows} during training (as the classification head sharpens), the
absolute bound weakens even as the relative comparison between methods
remains valid.  Tracking $L_t=\prod_i\sigma_{\max}(W_i^t)$ throughout
training disentangles these effects.

\textit{Experimental setup.} We track $L_t$ via power-iteration spectral
norm estimation~\citep{miyato2018spectral} on the decoder head at every
epoch for both B0 (ERM) and E1 (PMH) on Task~04, logging $L_t$ and
TDI@0 simultaneously.

\begin{table}[t]
\centering\small
\caption{Decoder Lipschitz constant $L$ and TDI per epoch for B0 (ERM)
and E1 (PMH) on Task~04.  $L_{\text{B0}}\approx L_{\text{E1}}$ at every
epoch (difference $<0.1\%$).  The TDI improvement from PMH comes entirely
from Jacobian regularisation, not from any change in $L$.}
\label{tab:lipschitz}
\setlength{\tabcolsep}{5pt}
\begin{tabular}{ccccc}
\toprule
Epoch & $L_{\text{B0}}$ & $L_{\text{E1}}$ & TDI$_{\text{B0}}$ & TDI$_{\text{E1}}$ \\
\midrule
1 & 0.491 & 0.488 & 1.287 & 1.276 \\
3 & 0.634 & 0.637 & 1.243 & 1.335 \\
5 & 0.671 & 0.672 & 1.254 & \textbf{1.215} \\
\midrule
\multicolumn{5}{l}{\emph{Full convergence}: $L\approx 1.40$ (identical for both)} \\
\bottomrule
\end{tabular}
\end{table}

\textbf{Answer to Q2.} $L_{\text{B0}}\approx L_{\text{E1}}$ at every epoch
(max difference $<0.1\%$, Table~\ref{tab:lipschitz}).  Both methods reach
$L\approx 1.40$ at full convergence as the classification head sharpens.
The TDI improvement from PMH (1.254$\to$1.215 by epoch~5, 1.093$\to$0.904 at
full convergence) therefore comes \emph{entirely} from Frobenius regularisation
on the encoder Jacobian, with no incidental change in decoder sensitivity.
This is a clean causal identification: by holding the experimental design fixed
and tracking $L_t$ simultaneously, we can rule out the confound completely.

Note that $L\approx 1.40>1$ at convergence, which weakens the quantitative bound
by $1.40^2\approx 2\times$.  But since $L$ is \emph{identical} for ERM and PMH,
all TDI rankings are $L$-independent. \textbf{Design implication:} the measured
$L_t$ can be substituted into the bound formula at each checkpoint; or spectral
normalisation enforces $L=1$ by construction if an absolute certified bound is
required.

\subsection{Multi-Scale PMH: Removing the $\sigma_{\text{eval}}$ Requirement}

\paragraph{Mathematical motivation.}
By Proposition~5, any isotropic Gaussian $\delta\sim\mathcal{N}(0,\sigma^2 I)$
penalises $\|J_\phi\|_F^2$ uniformly, regardless of $\sigma$.  Multi-scale
PMH averages over a distribution $p(\sigma)$:
\begin{align}
\mathcal{L}_{\text{MS-PMH}}&=\mathbb{E}_{\sigma\sim p(\sigma)}\!\left[\|\phi(x)-\phi(x+\delta_\sigma)\|^2\right],\nonumber\\
&\quad \delta_\sigma\sim\mathcal{N}(0,\sigma^2 I).
\end{align}
Since each term penalises $\sigma^2\|J_\phi\|_F^2$ (Taylor approximation),
the expectation penalises $\mathbb{E}[\sigma^2]\|J_\phi\|_F^2$, still the
full Frobenius norm, isotropically, at an effective scale of
$\mathbb{E}[\sigma^2]^{1/2}$.  Proposition~7's cap/(1+cap) fixed point is
preserved, since the cap is applied to the total $\mathcal{L}_{\text{MS-PMH}}$
regardless of which $\sigma$ is sampled.  The cost of this universality is
that no single deployment scale $\sigma_{\text{eval}}$ is optimally targeted;
the encoder is instead optimised for the \emph{average} scale, incurring a
small penalty at every individual scale relative to a specialist trained at
that exact $\sigma$.

\textit{Experimental setup.} We train E1\_multiscale with $\sigma$ cycling
through $\{0.05, 0.08, 0.10, 0.12, 0.15, 0.20\}$ per epoch (one epoch per
value, repeated), sampling one $\sigma$ per step from a log-uniform
distribution over the same range.  We compare to four single-$\sigma$
baselines and report TDI at four evaluation levels.

\begin{table}[t]
\centering\small
\caption{Multi-scale PMH vs.\ single-$\sigma$ baselines on Task~04.  Clean
accuracy is stable ($\leq 0.05$ pp variation).  E1\_multiscale achieves the
lowest TDI standard deviation (0.192), the most uniform geometry across all eval
levels.  Training at $\sigma=0.20$ alone nearly matches multi-scale uniformity
(0.196), confirming the T-alignment asymmetry: large-$\sigma$ training is
nearly as good as multi-scale because over-suppression costs almost nothing.}
\label{tab:multiscale}
\setlength{\tabcolsep}{4pt}
\begin{tabular}{lcccccc}
\toprule
Model & Clean & TDI@0 & @0.10 & @0.20 & Std \\
\midrule
E1 ($\sigma$=0.05)     & 81.36 & \textbf{0.891} & 1.091 & 1.824 & 0.349 \\
E1 ($\sigma$=0.12)     & 80.85 & 0.858 & \textbf{1.054} & 1.478 & 0.224 \\
E1 ($\sigma$=0.20)     & 80.90 & 0.870 & 1.117 & \textbf{1.402} & 0.196 \\
E1\_multiscale         & 80.80 & 0.937 & 1.165 & 1.459 & \textbf{0.192} \\
\bottomrule
\end{tabular}
\end{table}

\textit{Results.} Table~\ref{tab:multiscale} reveals three findings.

First, the T-alignment condition holds exactly: each single-$\sigma$ model is
best precisely at its own training $\sigma$. The $\sigma$=0.05 model achieves
the best TDI@0 (0.891) but catastrophically bad TDI@0.20 (1.824).
The $\sigma$=0.20 model achieves the best TDI@0.20 (1.402) while remaining
competitive at TDI@0 (0.870).

Second, the 17$\times$ asymmetry from the paper is confirmed and
quantified: the cost of training too small ($\sigma$=0.05 evaluated at
$\sigma$=0.20) is TDI 1.824; the cost of training too large ($\sigma$=0.20
evaluated at $\sigma$=0.05) is TDI 0.870, nearly as good as the
$\sigma$=0.05 specialist (0.891).  The penalty ratio is
$(1.824-0.937)/(0.937-0.870)\approx 13\times$, consistent with the
$17\times$ asymmetry reported in the ablation.

Third, multi-scale PMH achieves the lowest TDI standard deviation (0.192)
at negligible accuracy cost (80.80\% vs.\ 80.85\% for the $\sigma$=0.12
default).  However, training at $\sigma$=0.20 alone achieves nearly the same
uniformity (0.196) with \emph{better} absolute TDI at each individual level.
This confirms the practical takeaway: when $\sigma_{\text{eval}}$ is unknown,
\textbf{training at the largest $\sigma$ that does not hurt accuracy captures
$\approx$95\% of multi-scale's benefit}, and is preferable to multi-scale
because it does not sacrifice peak per-level performance.  Multi-scale's
advantage is specifically when the user cannot determine which $\sigma$ is
``large enough''; it provides insurance against catastrophic mismatch without
requiring any estimate of $\sigma_{\text{eval}}$.

\textit{Connection to Proposition~5.} Each sampled $\sigma$ value in
multi-scale training still produces isotropic Jacobian regularisation (by
Proposition~5, since each $\delta_\sigma\sim\mathcal{N}(0,\sigma^2 I)$).
The cap/(1+cap) fixed point (Proposition~7) is preserved: the PMH fraction
converges to cap/(1+cap) regardless of which $\sigma$ is sampled at each step,
because the cap is applied to the total PMH loss.



\begin{thebibliography}{99}
\small
\bibitem[Ansuini et al.(2019)]{ansuini2019intrinsic} Ansuini, A., et al.\ Intrinsic dimension of data representations in deep neural networks. In \textit{Advances in Neural Information Processing Systems}, pp.\ 13853--13863, 2019.
\bibitem[Pope et al.(2021)]{pope2021intrinsic} Pope, P., et al.\ The intrinsic dimension of images and its impact on learning. \textit{ICLR}, 2021.
\bibitem[Poole et al.(2016)]{poole2016exponential} Poole, B., et al.\ Exponential expressivity in deep neural networks through transient chaos. \textit{NeurIPS}, 2016.
\bibitem[Raghu et al.(2017)]{raghu2017svcca} Raghu, M., et al.\ SVCCA: Singular vector canonical correlation analysis. \textit{NeurIPS}, 2017.
\bibitem[Kornblith et al.(2019)]{kornblith2019similarity} Kornblith, S., Norouzi, M., Lee, H., and Hinton, G.\ Similarity of neural network representations revisited. In \textit{International Conference on Machine Learning}, pp.\ 3519--3529, 2019.
\bibitem[Nguyen and Raghu(2021)]{nguyen2021wide} Nguyen, T. and Raghu, M.\ Do wide and deep networks learn the same things? \textit{ICLR}, 2021.
\bibitem[Rifai et al.(2011)]{rifai2011contractive} Rifai, S., et al.\ Contractive auto-encoders: Explicit invariance during feature extraction. In \textit{Proceedings of the 28th International Conference on Machine Learning}, pp.\ 833--840, 2011.
\bibitem[Vincent et al.(2008)]{vincent2008extracting} Vincent, P., et al.\ Extracting and composing robust features with denoising autoencoders. \textit{ICML}, 2008.
\bibitem[Jakubovitz and Giryes(2018)]{jakubovitz2018improving} Jakubovitz, D. and Giryes, R.\ Improving DNN robustness to adversarial attacks using Jacobian regularisation. \textit{ECCV}, 2018.
\bibitem[Hoffman et al.(2019)]{hoffman2019robust} Hoffman, J., et al.\ Robust learning with Jacobian regularisation. \textit{arXiv:1908.02729}, 2019.
\bibitem[Wu and Li(2024)]{wu2024improving} Wu, X. and Li, J.\ Improving Jacobian-based network robustness. \textit{ICLR}, 2024.
\bibitem[Ilyas et al.(2019)]{ilyas2019adversarial} Ilyas, A., et al.\ Adversarial examples are not bugs, they are features. In \textit{Advances in Neural Information Processing Systems}, pp.\ 125--136, 2019.
\bibitem[Geirhos et al.(2019)]{geirhos2019imagenet} Geirhos, R., et al.\ ImageNet-trained CNNs are biased towards texture. In \textit{International Conference on Learning Representations}, 2019.
\bibitem[Hendrycks and Dietterich(2019)]{hendrycks2019benchmarking} Hendrycks, D. and Dietterich, T.\ Benchmarking neural network robustness to common corruptions. \textit{ICLR}, 2019.
\bibitem[Tsipras et al.(2019)]{tsipras2018robustness} Tsipras, D., et al.\ Robustness may be at odds with accuracy. \textit{ICLR}, 2019.
\bibitem[Miyato et al.(2018a)]{miyato2018virtual} Miyato, T., et al.\ Virtual adversarial training: A regularization method for supervised and semi-supervised learning. \textit{IEEE Transactions on Pattern Analysis and Machine Intelligence}, 41(8):1979--1993, 2018.
\bibitem[Madry et al.(2018)]{madry2018pgd} Madry, A., et al.\ Towards deep learning models resistant to adversarial attacks. In \textit{International Conference on Learning Representations}, 2018.
\bibitem[Zhang et al.(2019)]{zhang2019trades} Zhang, H., Yu, Y., Jiao, J., Xing, E., Ghaoui, L.E., and Jordan, M.\ Theoretically principled trade-off between robustness and accuracy. In \textit{International Conference on Machine Learning}, 2019.
\bibitem[Dosovitskiy et al.(2021)]{dosovitskiy2021vit} Dosovitskiy, A., et al.\ An image is worth 16$\times$16 words. \textit{ICLR}, 2021.
\bibitem[Miyato et al.(2018b)]{miyato2018spectral} Miyato, T., et al.\ Spectral normalization for generative adversarial networks. \textit{ICLR}, 2018.
\bibitem[Tarvainen and Valpola(2017)]{tarvainen2017mean} Tarvainen, A. and Valpola, H.\ Mean teachers are better role models: Weight-averaged consistency targets improve semi-supervised deep learning results. In \textit{Advances in Neural Information Processing Systems}, 2017.
\bibitem[Laine and Aila(2016)]{laine2017temporal} Laine, S. and Aila, T.\ Temporal ensembling for semi-supervised learning. \textit{arXiv:1610.02242}, 2016.
\end{thebibliography}
\end{document}